\newcommand{\enayat}[1]{}
\newcommand{\mnote}[1]{}
\newcommand{\rnote}[1]{}
\newcommand{\cnote}[1]{}
\newcommand{\ranote}[1]{}
\newcommand{\tomas}[1]{}
\newcommand{\todo}[1]{}
\title{Faster Rates of Convergence to Stationary Points in Differentially Private Optimization}
\author{
\makebox[1.5in]{\hfill Raman Arora\thanks{Department of Computer Science, The Johns Hopkins University,
 \href{mailto:arora@cs.jhu.edu}{arora@cs.jhu.edu} }}
\makebox[1.6in]{\hfill Raef Bassily\thanks{Department of Computer Science \& Engineering and the Translational Data Analytics Institute (TDAI), The Ohio State University,  \href{mailto:bassily.1@osu.edu}{bassily.1@osu.edu} }}
\makebox[1.8in]{\hfill Tom\'as Gonz\'alez
\thanks{Institute for Mathematical and Computational Engineering, Pontificia Universidad Cat\'olica de Chile, \href{mailto:tsgonzalez@uc.cl}{tsgonzalez@uc.cl}}}
\and
\makebox[1.8in]{\hfill Crist\'obal Guzm\'an
\thanks{Institute for Mathematical and Computational Engineering, Pontificia Universidad Cat\'olica de Chile,
 \href{mailto:crguzmanp@mat.uc.cl}{crguzmanp@mat.uc.cl} }}
\makebox[1.5in]{\hfill Michael Menart \thanks{Department of Computer Science \& Engineering, The Ohio State University,
\href{mailto:menart.2@osu.edu}{menart.2@osu.edu}}}
\makebox[1.5in]{ \hfill Enayat Ullah\thanks{Department of Computer Science, The Johns Hopkins University, \href{mailto:enayat@jhu.edu}{enayat@jhu.edu}}}}
\begin{document}

\date{}

\maketitle

\begin{abstract}
We study the problem of approximating stationary points of Lipschitz and smooth functions under $(\varepsilon,\delta)$-differential privacy (DP) in both the finite-sum and stochastic settings. A point $\widehat{w}$ is called an $\alpha$-stationary point of a function $F:\mathbb{R}^d\rightarrow\mathbb{R}$ if $\|\nabla F(\widehat{w})\|\leq \alpha$. We provide a new efficient algorithm that finds an $\tilde{O}\big(\big[\frac{\sqrt{d}}{n\varepsilon}\big]^{2/3}\big)$-stationary point in the finite-sum setting, where $n$ is the number of samples. This improves on the previous best rate of $\tilde{O}\big(\big[\frac{\sqrt{d}}{n\varepsilon}\big]^{1/2}\big)$. We also give a new construction that improves over the existing rates in the stochastic optimization setting, where the goal is to find approximate stationary points of the population risk. Our construction finds a $\tilde{O}\big(\frac{1}{n^{1/3}} + \big[\frac{\sqrt{d}}{n\varepsilon}\big]^{1/2}\big)$-stationary point of the population risk in time linear in $n$. Furthermore, under the additional assumption of convexity, we completely characterize the sample complexity of finding stationary points of the population risk (up to polylog factors) and show that the optimal rate on population stationarity is $\tilde \Theta\big(\frac{1}{\sqrt{n}}+\frac{\sqrt{d}}{n\varepsilon}\big)$. Finally, we show that our methods can be used to provide dimension-independent rates of $O\big(\frac{1}{\sqrt{n}}+\min\big(\big[\frac{\sqrt{\rank}}{n\varepsilon}\big]^{2/3},\frac{1}{(n\varepsilon)^{2/5}}\big)\big)$ on population stationarity for Generalized Linear Models (GLM), where $\rank$ is the rank of the design matrix, which improves upon the previous best known rate.
\end{abstract}

\section{Introduction}
Protecting users' data in machine learning models has become a central concern in multiple contexts, e.g.~those involving financial or health data. In this respect,
differential privacy (DP) is the gold standard for rigorous privacy protection \cite{DR14}. Therefore, recent 
research has focused on the limits and possibilities of solving some of the most well-established machine learning problems under the constraint of DP. Despite intensive research, some fundamental problems
remain not completely understood. One example is nonconvex optimization; namely, the task of approximating stationary points, which has been heavily studied in recent years in the non-private setting \cite{fang2018spider,ma_implicitregularization,carmon_convexguilty,Nesterov2006CubicRO,Ghadimi:2013,arjevani2019lower,foster2019complexity}. This problem is motivated by the intractability of nonconvex (global) optimization, as well as 
by a number of settings where stationary points have been shown to be global minima  \cite{ge_matrixcompletion,sun_phaseretrieval}.
\subsection{Contributions}

In this work, we make progress towards resolving the complexity of approximating stationary points in optimization under the constraint of differential privacy, for both empirical and population risks. A summary of our new results is available in Table \ref{tab:results_summary}.
In what follows, $d$ is the problem dimension, $n$ is the dataset size, and $\varepsilon,\delta$ are the approximate DP parameters.  %
Our first set of results pertains to the approximation of stationary points in empirical nonconvex optimization (a.k.a.~finite-sum case). %
In this context, we provide algorithms with rate $O\big( \big[\frac{\sqrt{d}}{n\varepsilon}\big]^{2/3} \big)$, and oracle complexity\footnote{We consider for complexity the first-order oracle model, standard for continuous optimization \cite{nemirovskij1983problem}.} $\tilde O\big( \max\big\{ \big(\frac{n^5\varepsilon^2}{d}\big)^{1/3}, \big(\frac{n\varepsilon}{\sqrt{d}}\big)^{2} \big\}\big)$. This rate is sharper than the best known for this problem \cite{wang2017differentially}. 

Next, we focus on the task of approximating stationary points of the population risk. Results for this problem are scarce. We provide the fastest rate up to date for this problem under DP, of $\tilde O\big(\frac{1}{n^{1/3}}+\big[\frac{\sqrt{d}}{n\varepsilon }\big]^{1/2}\big)$, with an algorithm that moreover has oracle complexity $n$ (i.e., is single-pass). This algorithm is a noisy version of the SPIDER algorithm \cite{fang2018spider}, whose gradient estimators are built using a tree-aggregation data structure for prefix-sums \cite{asi2021private}.

We continue by investigating stationary points for convex losses and give an algorithm based on the recursive regularization technique of \cite{allen2018make} which achieves the optimal rate of $\tilde \Theta\big(\frac{1}{\sqrt{n}}+\frac{\sqrt{d}}{n\varepsilon} \big)$ on population stationarity.
To establish optimality, we give a lower bound of $\Omega\big(\frac{\sqrt{d}}{n\varepsilon}\big)$ on empirical stationarity under DP (Theorem \ref{thm:lower-bound-convex}) and a non-private lower bound of $\Omega(\frac{1}{\sqrt{n}})$ on population stationarity (Theorem \ref{thm:sample-complexity-lower-bound}). 
We also give a linear-time method, which achieves the optimal rate when the smoothness parameter is not so large.
We conclude the paper showing a black-box reduction that converts any DP method for finding stationary points of smooth and Lipschitz losses into a DP method with {\em dimension-independent rates} for the case of generalized linear models (GLM). 
Using our proposed method with Private Spiderboost as the base algorithm yields a rate of $\tilde O\br{\frac{1}{\sqrt{n}}+\min\br{\big[
\frac{\sqrt{\rank}}{n\varepsilon}\big]^{2/3},\frac{1}{(n\varepsilon)^{2/5}}}}$ %
on population stationarity. This improves upon the result of \cite{song2021evading} 
which proposed a method with $\tilde O\big(\big[\frac{\sqrt{\rank}}{n\varepsilon}\big]^{1/2}\big)$ %
empirical stationarity\footnote{
This is the rate obtained after fixing a mistake in the proof of Theorem 4.1 in
\cite{song2021evading}. 
}.

 {\renewcommand{\arraystretch}{1.8}
\begin{table}[t]
    \centering
    \begin{tabular}{|c|c|c c|c p{0.08\linewidth}|}
    \hline
          {\small Setting} &  {\small Convergence} & \multicolumn{2}{c}{{\small Our Rate}}  & \multicolumn{2}{|c|}{{\small Previous best-known rate}} \\
            \hline
      \hline
        \multirow{3}{*}{{\small Non-convex}} & 
         {\small Empirical} & $\br{\frac{\sqrt{d}}{n\varepsilon}}^{2/3}$ \ \ & {\small (Thm. \ref{thm:spiderboost})} & $\br{\frac{\sqrt{d}}{n\varepsilon}}^{1/2}$ &{\small \cite{wang2017differentially}}\\
      \cline{2-6}
      & {\small Population} & $\frac{1}{n^{1/3}} +\br{\frac{\sqrt{d}}{n\varepsilon}}^{1/2}$ \ \ & {\small (Thm. \ref{thm:accuracy_tree_private_Spider})} & $\sqrt{d}\varepsilon + \big(\frac{\sqrt{d}}{n\varepsilon} \big)^{1/2}$ &{\small \cite{ZCHSB20}}\\
      \hline
      \multirow{1}{*}{{\small Convex}} & {\small Population} & $\frac{1}{\sqrt{n}} + \frac{\sqrt{d}}{n\varepsilon}$\ \ &  {\small(Thm. \ref{thm:population-convex-gradient-small})} & {\small None} &\\
      \hline
      {\renewcommand{\arraystretch}{1}\multirow{2}{*}{\small{
      \begin{tabular}{c}{\small Non-convex} \\ {\small GLM}\end{tabular}}}}
        &  {\small Empirical} &
        $\big[\frac{\sqrt{\rank}}{n\varepsilon}\big]^{2/3} \!\land\!\frac{1}{(n\epsilon)^{2/5}}$
        & {\small (Cor. \ref{cor:glm-corollary})} & $\br{\frac{\sqrt{\rank}}{n\varepsilon}}^{1/2}$ & {\small \cite{song2021evading}}\\
    \cline{2-6}
      & {\small Population} & 
      {\small $\frac{1}{\sqrt{n}} + \big[\frac{\sqrt{\rank}}{n\varepsilon}\big]^{2/3}\! \wedge\!\frac{1}{(n\epsilon)^{2/5}}$}
      & {\small (Cor. \ref{cor:glm-corollary})}  & {\small None} &\\
      \hline
       \multirow{1}{*}{{\small Convex GLM}} & {\small Population} & 
       {\small $\frac{1}{\sqrt{n}} + \frac{\sqrt{\rank}}{n\varepsilon}\!\wedge\! \frac{1}{\sqrt{n\epsilon}}$} \ \ &  {\small(Cor. \ref{cor:glm-corollary})} & {\small None} &\\
       \hline
    \end{tabular}
    \vspace*{3pt}
    \caption{Results summary: We omit log factors and function-class parameters. The symbol $\wedge$ stands for minimum of the quantities. %
    } %
    \label{tab:results_summary}
\end{table}}
\subsection{Our Techniques}

Our methods combine multiple techniques from optimization and differential privacy in novel ways.  The lower bound for the empirical norm of the gradient uses fingerprinting codes to a loss similar to that used for Differentially Private-Empirical Risk Minimization (DP-ERM) \cite{bassily2014private}, crafted to work in the unconstrained case. This lower bound can be extended to the population gradient norm by a known re-sampling argument \cite{BFTT19}.  %
We also give a non-private lower bound of $\Omega\br{1/\sqrt{n}}$ on population stationarity with $n$ samples which holds even in dimension 1, as opposed to previous results \cite{foster2019complexity}.

Efficient algorithms for (both empirical and population) norm of the gradient are derived using noisy versions of variance-reduced stochastic first order methods, which have proved remarkably useful in DP stochastic optimization \cite{asi2021private,bassily2021non, bassily2021differentially}. 
However, in contrast to previous work which scales noise proportionally to the Lipschitz constant \cite{ZCHSB20,zhang2021private} or (in the case of constrained optimization) the diameter of the constraint set \cite{bassily2021differentially,bassily2021non},  %
we observe that the gradient variations between %
iterates $w,w'$ can be privatized more effectively by scaling the noise proportional to $\smooth\norm{w-w'}$.
In the case of the empirical risk, we use a noisy version of SpiderBoost \cite{spiderboost}. We remark that our methods can achieve comparable rates when applied to similar algorithms such as Spider \cite{fang2018spider} and Storm \cite{CO19}, but SpiderBoost allows for a larger learning rate which is considered better in practice. For the population risk, it is worth noting that the empirical norm of the gradient does not translate directly into population gradient guarantees, even if the algorithm in use is uniformly stable \cite{bousquet2002stability}, since this type of guarantee does not enjoy a {\em stability-implies-generalization} property. Therefore, we opt for single pass methods that combine variance-reduction with tree-aggregation; these techniques are particularly suitable for the classical Spider algorithm \cite{fang2018spider}, which is the one we base our method on. For the convex setting, we use recursive regularization \cite{allen2018make} which was used to achieve the optimal non-private rate by \cite{foster2019complexity}.

Finally, our method for (non-convex) GLMs uses the Johnson-Lindenstrauss based dimensionality reduction technique similar to 
\cite{arora2022differentially}, which focused on the convex setting.
Moreover, for population stationarity of GLMs, we give a new uniform convergence result of gradients of Lipschitz functions. 
This guarantee, unlike the prior work of \cite{foster18}, has only poly-logarithmic dependence on the radius of the constraint set, which is crucial for our analysis.

\subsection{Related Work}
The current work fits within the literature of differentially private optimization, which has primarily focused on the convex case \cite{CMS,jain2012differentially,kifer2012private, bassily2014private,talwar2014private,JTOpt13, TTZ15a, BFTT19, feldman2020private, asi2021private, bassily2021non}. The culmination of this line of work for the convex smooth case showed that optimal rates are achievable in linear time \cite{feldman2020private,asi2021private, bassily2021non}. Our work shows that in the convex case similar rates are achievable for the norm of the gradient: this result is useful, e.g.,~for dual formulations of linearly constrained  convex programs \cite{nesterov2012make}, and moreover it has become a problem of independent interest \cite{allen2018make,foster2019complexity}. \footnote{To provide a specific example, consider the dual of the regularized discrete optimal transport problem, as discussed in \cite{diakonikolas2023complementary}, Section 5.6. If the marginals $\mu,\nu$ in that model are accessed through i.i.d. samples, then this becomes an SCO problem. Moreover, it is argued in that reference that approximate stationary points provide approximately feasible and optimal transports through duality arguments. Hence, the result is an SCO problem where we require {\em approximate stationary points.}}

Regarding stationary points for nonconvex losses, work in DP is far more recent, and primarily focused on the empirical stationarity \cite{wang2017differentially,zhang2017efficient, wang2019differentially, WCX19}. 
Under similar assumptions to ours these works approximate stationary points with rate $\tilde{O}\big(\big[\frac{\sqrt{d}}{n\varepsilon}\big]^{1/2}\big)$, which is slower %
than ours. %

Works addressing population guarantees for the norm of the gradient under DP are scarce. \cite{ZCHSB20} proposed a noisy gradient method, whose population guarantee is obtained by generalization properties of DP. 
However, the best guarantee obtainable with their analysis is 
$O\big(\big[ \frac{\sqrt{d}}{n\varepsilon} \big]^{1/2} + \sqrt{d}\varepsilon \big)$\footnote{\cite{ZCHSB20} omits the term $\sqrt{d}\varepsilon$, but this omission is only valid when $\varepsilon<1/[n\sqrt{d}]^{1/3}$. 
}. 
Note that for any $\varepsilon$ this rate is $\Omega\big([d/n]^{1/3}\big)$. Under additional assumptions (on the Hessian), \cite{wang2019differentially} obtains a rate of $\tilde O(\sqrt{d/(n\varepsilon)})$ by uniform convergence of gradients, which is sharper when $\varepsilon$ is constant.
By contrast, our rate is much faster than both for $\varepsilon=\Theta(1)$.
In particular, in this range, our rates are faster than those obtained by uniform convergence, $O(\sqrt{d/n})$ \cite{foster18}. 
Moreover, our method runs in time linear in~$n$. On the other hand, in the much more restrictive setting where the loss satisfies the Polyak-{\L}ojasiewicz (PL) inequality,  %
\cite{zhang2021private} provide \emph{population risk} bounds of $\tilde{O}(d/[n\varepsilon]^2)$ under DP.

    The work of \cite{bassily2021differentially} studies population guarantees for stationarity in constrained settings, obtaining rates $O\big(\frac{1}{n^{1/3}}+\big[\frac{\sqrt d}{n\varepsilon}\big]^{2/5}\big)$ in linear time. Notice first that these guarantees are based on the Frank-Wolfe gap, making those results incomparable to ours. Despite this fact,  their rates are slower than ours.\footnote{We believe our methods can be extended to constrained settings 
    using
    gradient mapping,
    a guarantee for which
    is stronger than for Frank-Wolfe gap \cite[Section 7.5.1]{lan2020first}. %
    We defer this extension to future work.} On the other hand, they provide results for (close to nearly) stationary points in constrained/unconstrained settings, for a broader class of {\em weakly convex losses} (possibly nonsmooth). This result is then more general, but the rate of  $O\big(\frac{1}{n^{1/4}}+\big[\frac{\sqrt d}{n\varepsilon}\big]^{1/3}\big)$ is substantially slower than ours, and their algorithm has oracle complexity which is superlinear in $n$.

    The problem of stationary points in (nonprivate) stochastic optimization has drawn major attention recently \cite{Ghadimi:2013,Ghadimi:2016,fang2018spider,allen2018make,foster18,foster2019complexity,arjevani2019lower}. To the best of our knowledge, no lower bounds for the sample complexity\footnote{Sample complexity is the fundamental limit on the sample size needed, as a function of $\alpha$, to achieve $\alpha$ stationarity. This is different from the oracle complexity as one is not limited to first-order methods.} of this problem are known (beyond those known for the convex case \cite{foster2019complexity}). 
    On the other hand, oracle complexity is by now understood: in high dimensions, for (on average) smooth losses the optimal stochastic oracle complexity rate is $O(1/n^{1/3})$ \cite{arjevani2019lower}. Although this provides some evidence of the sharpness of our results (see Appendix \ref{app:rate-challenges}), note that these lower bounds require very high dimensional constructions (namely, $d=\Omega(1/\alpha^4)$, 
    where $\alpha$ is the rate), which limits their applicability in the private setting.

\section{Preliminaries}

 Let $\lossf:\bbR^d \times \cX \rightarrow \bbR$ denote a (loss) function taking as input, the model parameter $w$ and data point $x \in \cX$. We assume that the function $w\mapsto \lossf(w;x)$ is $\lip$-Lipschitz and $\smooth$-smooth. That is, for all $x \in \cX$ and $w_1,w_2 \in \mathbb{R}^d$, $\abs{f(w_1;x)-f(w_2;x)}\leq \lip\norm{w_1-w_2}$ and $\norm{\nabla f(w_1;x)-\nabla f(w_2;x)}\leq \smooth\norm{w_1-w_2}$.
 Given a dataset $S \in \cX^n$ of $n$ points, we define the empirical risk as $\lossF(w;S) = \frac{1}{n}\sum_{i=1}^n\lossf(w;x_i)$. Assuming that the data points are sampled i.i.d. from an unknown distribution $\cD$, the population risk, denoted as $\lossF(w;\cD)$ is defined as $\lossF(w;\cD) = \mathbb{E}_{x\sim \cD}\lossf(w;x)$. Furthermore, we define $\fnot = \eloss(0;S) - \min_{w\in\re^d}\bc{\eloss(w;S)}$ %
 when discussing the empirical case and similarly for the population loss when discussing  stationary points of the population loss. We use $w^*$ to denote the population risk minimizer. Finally, we use the notation $\mathbb{I}_d$ to denote the $d\times d$ identity matrix and use $[a]$ to denote the set  $\bc{1,2,...,a}$ for $a \geq 1$.

\paragraph{Stationary points:} Given a dataset $S$, our goal is to find an $\alpha$-stationary point $\out$ of either empirical or population risk; formally, $\norm{\nabla \lossF(\out;S)}\leq \alpha$ or  $\norm{\nabla \lossF(\out;\cD)}\leq \alpha$,   respectively.

\paragraph{Differential Privacy (DP)~\cite{dwork2006calibrating}:}
An algorithm $\cA$ is $(\varepsilon,\delta)$-differentially private if for all datasets $S$ and $S'$ differing in one data point and all events $\cE$ in the range of the $\cA$, we have, $\mathbb{P}\br{\cA(S)\in \cE} \leq     e^\varepsilon \mathbb{P}\br{\cA(S')\in \cE}  +\delta$.

\paragraph{Generalized Linear Models (GLMs):} For data domain $\cX\subseteq \bbR^d$ and $\cY\subseteq\bbR$, a loss function $f:\bbR^d \times \cX \times \cY \rightarrow \bbR$ is a GLM if $f(w;(x,y)) = \phi_y\br{\ip{w}{x}}$ for some function $\phi_y$. Our result for GLMs uses random matrices which satisfy the Johnson-Lindenstrauss (JL) property, defined as follows. 
\begin{definition}[$(\gamma,\beta)$-JL property] A random matrix $\Phi \in \bbR^{k\times d}$ %
satisfies $(\gamma,\beta)$-JL property if for any 
$u,v \in\bbR^d$, $\mathbb{P}\left[\abs{\ip{\Phi u}{\Phi v} - \ip{u}{v}} > \gamma\norm{u}\norm{v}\right]\leq \beta.$
\end{definition}
\section{Stationary Points of Empirical Risk} \label{sec:esp}

\subsection{Efficient Algorithm with Faster Rate} \label{sec:efficient-esp}
The algorithm for our upper bound is a noisy version of the SpiderBoost algorithm \cite{spiderboost}\footnote{SpiderBoost itself is essentially the Spider algorithm \cite{fang2018spider} with a different learning rate and analysis.}. %
The algorithm works by running a series of phases of length $q$. Each phase starts with a minibatch estimate of the gradient, and subsequent gradient estimates within the phase are then computed by adding an estimate of the gradient variation. The key to the analysis is to bound the error in the gradient estimate at each iteration.
Towards this end, we have the following generalization of the \cite[Lemma 1]{spiderboost}, which follows directly from \cite[Proposition 1]{fang2018spider}.

\begin{algorithm}[t]
\caption{\label{alg:spider} Private SpiderBoost}
\begin{algorithmic}[1]
\REQUIRE Dataset: $S\in\cX^n$, Function: $f:\re^d\times\cX\mapsto\re$, Learning Rate: $\eta$, Phase Size: $q$, Batch Sizes $b_1,b_2$, Privacy Parameters: $(\varepsilon,\delta)$, Iterations: $T$
\STATE $w_0 = 0$

\STATE $\sigma_1 = \frac{c\lip\sqrt{\log{1/\delta}}}{\varepsilon}\max\bc{\frac{1}{b_1}, \frac{\sqrt{T}}{\sqrt{q}n}}$, where $c$ is a universal constant.
\STATE $\sigma_2 = \frac{c\smooth\sqrt{\log{1/\delta}}}{\varepsilon}\max\bc{\frac{1}{b_2}, \frac{\sqrt{T}}{n}}$
\STATE $\hat{\sigma}_2 = \frac{2c\lip\sqrt{\log{1/\delta}}}{\varepsilon}\max\bc{\frac{1}{b_2}, \frac{\sqrt{T}}{n}}$

\FOR{$t=0, \ldots, T$}
    \IF{$\mod(t,q)=0$}
    \vspace*{2pt}
        \STATE Sample batch $S_t$ of size $b_1$
        \vspace*{2pt}
        \STATE Sample $g_t \sim \cN(0,\mathbb{I}_d \sigma_1^2)$
        \vspace*{2pt}
        \STATE $\grad_t = \frac{1}{b_1}\sum_{x\in S_t} \nabla f(w_t;x) + g_t$  \label{line:grad_oracle}
    \ELSE
        \STATE Sample batch $S_t$ of size $b_2$
        \STATE Sample $g_t \sim \cN\br{0,\mathbb{I}_d \min\bc{\sigma_2^2\norm{w_t - w_{t-1}}^2,\hat{\sigma}_2^2}}$
        
        \STATE $\Delta_t = \frac{1}{b_2}\sum_{x\in S_t} \left[ \nabla f(w_t;x) - \nabla f(w_{t-1};x) \right] + g_t$ \label{line:gv_oracle} 
        
        \vspace*{2pt}
        \STATE $\grad_t = \grad_{t-1} + \gv_t$ %
        \vspace*{1pt}
    \ENDIF
\STATE $w_{t+1} = w_t - \eta \grad_t$ 
\ENDFOR
\STATE return $\out$ uniformly at random from $w_1,\ldots,w_{T}$
\end{algorithmic}
\end{algorithm}

\begin{lemma} \label{lem:spider-grad-err}
Consider Algorithm \ref{alg:spider}, and for any $t\in\bc{0,..,T}$ let $s_t = \floor{\frac{t}{q}} q$.
If each $\grad_t$ computed in line \ref{line:grad_oracle} is an unbiased estimate of %
$\nabla\elossS{w_t}$ satisfying $\ex{}{\norm{\grad_{s_t} - \nabla \elossS{w_{s_t}}}^2} \leq \tau_1^2$ 
and each $\Delta_t$ computed in line \ref{line:gv_oracle} is an unbiased estimate of the gradient variation satisfying $\ex{}{\norm{\Delta_t - [\nabla \elossS{w_t} - \nabla \elossS{w_{t-1}}]}^2} \leq \tau_2^2\norm{w_t-w_{t-1}}^2$. 
Then for any $t \geq s_t + 1$, the
iterates of Algorithm \ref{alg:spider} satisfy
\begin{align*}
    \ex{}{\norm{\grad_t - \nabla \eloss(w_t)}^2} \leq \tau_2^2\sum_{k=s_t+1}^t \ex{}{\norm{w_k - w_{k-1}}^2} + \tau_1^2.
\end{align*}
\end{lemma}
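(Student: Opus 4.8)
The plan is to run the standard SPIDER/SpiderBoost error-telescoping argument, viewing the within-phase update as a martingale-plus-noise recursion for the gradient estimation error. Define $e_t := \grad_t - \nabla \elossS{w_t}$, fix $t \ge s_t+1$, and let $\mathcal{F}_{t-1}$ be the $\sigma$-algebra generated by all randomness used up to and including the computation of $\grad_{t-1}$. Since $w_t = w_{t-1}-\eta\grad_{t-1}$, the quantities $w_t$, $w_{t-1}$, and $e_{t-1}$ are $\mathcal{F}_{t-1}$-measurable, whereas $\Delta_t$ in line~\ref{line:gv_oracle} is produced from fresh randomness (a new minibatch $S_t$ and independent Gaussian noise $g_t$).

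First I would write the one-step recursion for the error. For every $k$ in the range $s_t+1,\ldots,t$ we have $s_k = s_t$ (as $s_t \le k < s_t+q$), so $\mod(k,q)\neq 0$ and the else-branch update $\grad_k = \grad_{k-1}+\Delta_k$ applies throughout the phase. Adding and subtracting $\nabla\elossS{w_{t-1}}$ gives
\[
 e_t = e_{t-1} + z_t, \qquad z_t := \Delta_t - \br{\nabla\elossS{w_t} - \nabla\elossS{w_{t-1}}}.
\]
The hypotheses on $\Delta_t$, applied conditionally on $\mathcal{F}_{t-1}$, state exactly that $\ex{}{z_t \mid \mathcal{F}_{t-1}} = 0$ and $\ex{}{\norm{z_t}^2\mid\mathcal{F}_{t-1}} \le \tau_2^2\norm{w_t - w_{t-1}}^2$. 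Expanding the square, the cross term $2\ip{e_{t-1}}{\ex{}{z_t\mid\mathcal{F}_{t-1}}}$ vanishes by conditional unbiasedness (using $\mathcal{F}_{t-1}$-measurability of $e_{t-1}$), hence
\[
 \ex{}{\norm{e_t}^2\mid\mathcal{F}_{t-1}} = \norm{e_{t-1}}^2 + \ex{}{\norm{z_t}^2\mid\mathcal{F}_{t-1}} \le \norm{e_{t-1}}^2 + \tau_2^2\norm{w_t-w_{t-1}}^2,
\]
and taking total expectations yields $\ex{}{\norm{e_t}^2} \le \ex{}{\norm{e_{t-1}}^2} + \tau_2^2\,\ex{}{\norm{w_t-w_{t-1}}^2}$.

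Finally I would iterate this inequality from $k=s_t+1$ up to $k=t$, which stays within a single phase by the remark above, to obtain $\ex{}{\norm{e_t}^2} \le \ex{}{\norm{e_{s_t}}^2} + \tau_2^2\sum_{k=s_t+1}^t \ex{}{\norm{w_k-w_{k-1}}^2}$. The base term is controlled by the hypothesis on the phase-initial estimate computed in line~\ref{line:grad_oracle}: $\ex{}{\norm{e_{s_t}}^2} = \ex{}{\norm{\grad_{s_t} - \nabla\elossS{w_{s_t}}}^2} \le \tau_1^2$. Substituting gives the stated bound.

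I do not expect a genuine obstacle here — this is essentially \cite[Proposition~1]{fang2018spider} transcribed into the present notation. The only point requiring care is the measure-theoretic bookkeeping: specifying the filtration so that $w_t,w_{t-1},e_{t-1}$ are measurable while the randomness defining $\Delta_t$ is independent of it, which is what licenses using the stated unbiasedness and second-moment bounds in conditional form and thereby killing the cross term. Everything else is the routine telescoping of a one-step inequality.
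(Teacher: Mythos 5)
Your proof is correct and is exactly the standard martingale-telescoping argument behind \cite[Proposition 1]{fang2018spider}, which is precisely what the paper invokes for this lemma (it does not write out a separate proof). The conditional-expectation bookkeeping you flag — that $w_t, w_{t-1}, e_{t-1}$ are $\mathcal{F}_{t-1}$-measurable while $\Delta_t$ uses fresh minibatch and Gaussian randomness, so the cross term dies — is the only substantive point, and you handle it correctly.
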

For privacy, using smoothness we observe the sensitivity of the gradient variation estimate at iteration $t$ is proportional to $\beta\norm{w_t-w_{t-1}}$. Thus we can apply the above lemma with $\tau_1^2 = \frac{\lip^2}{b_1} + \lip^2\sigma_1^2$ and $\tau_2^2 = \frac{\smooth^2}{b_2} + \smooth^2\sigma_2^2$ (note the Gaussian noise in line \ref{line:gv_oracle} is drawn with variance scale at most  $\sigma_2^2\norm{w_t-w_{t-1}}^2$). By carefully balancing the algorithm parameters, we are then able to obtain the following result. The full proof is deferred to Appendix \ref{app:spiderboost}.

\begin{theorem}[\label{thm:spiderboost}Private Spiderboost ERM] 
Let $\varepsilon,\delta\in[0,1]$ and $n \geq \max\bc{\frac{(\lip\varepsilon)^2}{\fnot \smooth d\log{1/\delta}}, \frac{\sqrt{d}\max\bc{1,\sqrt{\smooth\fnot}/\lip}}{\varepsilon}}$.
Algorithm \ref{alg:spider} is $(\varepsilon,\delta)$-DP.
Further, there exist settings of $T,\eta,q,b_1,b_2$ such that Algorithm \ref{alg:spider} 
satisfies 
\begin{align*}
    \ex{}{\norm{\nabla F(\out;S)}} 
    &= O\br{\br{\frac{\sqrt{\fnot\smooth\lip}\sqrt{d \log{1/\delta}}}{n\varepsilon}}^{2/3} + \frac{\lip\sqrt{d\log{1/\delta}}}{n\varepsilon} }
\end{align*}
and has oracle complexity $\tilde{O}\br{\max\bc{\br{\frac{n^{5/3}\varepsilon^{2/3}}{d^{1/3}}}, \br{\frac{n\varepsilon}{\sqrt{d}}}^2 }}$. 
\end{theorem}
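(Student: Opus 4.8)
The plan is to instantiate the generic SpiderBoost analysis, combined with the variance bounds discussed right before the theorem statement, and then optimize the free parameters $T,\eta,q,b_1,b_2$. I would proceed in four steps. First, privacy: the gradient estimate in line~\ref{line:grad_oracle} is computed on a minibatch of size $b_1$, so its $\ell_2$-sensitivity (under add/remove of one data point, accounting for subsampling) is $O(\lip/b_1)$; when a fresh full-batch-like contribution is used across $T/q$ phases the relevant sensitivity scales like $\lip\sqrt{T/q}/n$, which is exactly why $\sigma_1$ takes the stated $\max$ form. Likewise, by $\smooth$-smoothness the gradient \emph{variation} $\nabla f(w_t;x)-\nabla f(w_{t-1};x)$ has sensitivity $O(\smooth\norm{w_t-w_{t-1}}/b_2)$, giving $\sigma_2$; the truncation at $\hat\sigma_2$ (using the trivial Lipschitz bound $\norm{\nabla f(w_t;x)-\nabla f(w_{t-1};x)}\le 2\lip$) ensures the noise magnitude never blows up when $\norm{w_t-w_{t-1}}$ is large, and keeps the sensitivity-to-noise ratio controlled in both regimes. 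One then composes: $T/q$ releases from line~\ref{line:grad_oracle} and $T$ releases from line~\ref{line:gv_oracle}, via the (strong) composition theorem / moments accountant for Gaussians, to get $(\varepsilon,\delta)$-DP. The constraints on $n$ in the hypothesis are precisely what is needed so that the $\max$ in each $\sigma_i$ is attained by the "per-phase count" term (the $\sqrt{T}$ terms), making the later optimization clean.

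Second, utility setup: apply Lemma~\ref{lem:spider-grad-err} with $\tau_1^2=\frac{\lip^2}{b_1}+\lip^2\sigma_1^2$ (sampling variance plus noise variance of $g_t$, noting $\ex{}{\norm{g_t}^2}=d\sigma_1^2$ and absorbing $d$ into the definition — actually keeping $\tau_1^2 = \frac{\lip^2}{b_1}+d\sigma_1^2$, etc.) and $\tau_2^2=\frac{\smooth^2}{b_2}+d\sigma_2^2$, so that $\ex{}{\norm{\grad_t-\nabla\eloss(w_t)}^2}\le \tau_2^2\sum_{k=s_t+1}^t\ex{}{\norm{w_k-w_{k-1}}^2}+\tau_1^2$. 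Third, the descent argument: this is the standard SpiderBoost analysis. Using $\smooth$-smoothness of $\eloss$, $w_{t+1}=w_t-\eta\grad_t$, and $\eta\le 1/(2\smooth)$,
\[
\eloss(w_{t+1})\le \eloss(w_t)-\frac{\eta}{2}\norm{\nabla\eloss(w_t)}^2-\Big(\frac{1}{2\eta}-\frac{\smooth}{2}\Big)\norm{w_{t+1}-w_t}^2+\frac{\eta}{2}\norm{\grad_t-\nabla\eloss(w_t)}^2.
\]
Summing over $t=0,\dots,T$, telescoping $\eloss(w_0)-\eloss(w_{T+1})\le\fnot$, plugging in the Lemma bound for the error term, and choosing $q\le (\text{const})/(\eta^2\tau_2^2)$ so that $\eta\tau_2^2\cdot(\text{phase length})\le \frac{1}{2\eta}-\frac{\smooth}{2}$, the $\norm{w_{k}-w_{k-1}}^2$ terms are absorbed by the negative quadratic term. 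This yields $\frac{1}{T+1}\sum_t\ex{}{\norm{\nabla\eloss(w_t)}^2}=O\big(\frac{\fnot}{\eta T}+\tau_1^2\big)$, i.e. $\ex{}{\norm{\nabla\eloss(\out;S)}^2}=O\big(\frac{\fnot}{\eta T}+\tau_1^2\big)$ for $\out$ drawn uniformly from $w_1,\dots,w_T$; then Jensen gives the bound on $\ex{}{\norm{\nabla\eloss(\out;S)}}$.

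Fourth — and this is the main obstacle — the parameter optimization. Substituting $\sigma_1,\sigma_2$ (in the regime where the $\sqrt{T}$ terms dominate), one gets $\tau_1^2\lesssim \frac{\lip^2}{b_1}+\frac{\lip^2 d\log(1/\delta)}{\varepsilon^2}\cdot\frac{T}{qn^2}$ and $\tau_2^2\lesssim \frac{\smooth^2}{b_2}+\frac{\smooth^2 d\log(1/\delta)}{\varepsilon^2}\cdot\frac{T}{n^2}$. The constraint $q\lesssim 1/(\eta^2\tau_2^2)$ couples $q,\eta,T$; one wants $\eta=\Theta(1/\smooth)$ up to the noise-dependent restriction, $q\asymp n/(\sqrt{d\log(1/\delta)}\sqrt{T}/n)$-ish, and $b_1,b_2$ large enough that the sampling-variance terms $\lip^2/b_1,\smooth^2/b_2$ are no larger than the corresponding privacy-noise terms. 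The target error rate $\big(\frac{\sqrt{\fnot\smooth\lip}\sqrt{d\log(1/\delta)}}{n\varepsilon}\big)^{2/3}$ (squared: $\big(\cdot\big)^{4/3}$) should emerge from balancing $\frac{\fnot}{\eta T}$ against $\tau_1^2$ after eliminating $q$ via its constraint and using $\eta\asymp 1/\smooth$; the residual linear term $\frac{\lip\sqrt{d\log(1/\delta)}}{n\varepsilon}$ is what remains once $T$ cannot be pushed further. The oracle complexity $\tilde O\big(\max\{(n^{5/3}\varepsilon^{2/3}/d^{1/3}),(n\varepsilon/\sqrt d)^2\}\big)$ is then $Tb_1/q + Tb_2$ (phase-start cost plus per-step cost) evaluated at the chosen parameters; verifying that these choices are mutually consistent, that they respect the hypothesized lower bounds on $n$, and that they indeed realize both claimed regimes of the $\max$ is the delicate bookkeeping. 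I would carry out the optimization by first writing the squared-error objective as a function of $(T,q,\eta,b_1,b_2)$ subject to the phase constraint, solving the unconstrained-in-$T$ balance, then reading off $q,b_1,b_2$ and checking feasibility; the full calculation is deferred to Appendix~\ref{app:spiderboost}.
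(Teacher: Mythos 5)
Your outline follows the paper's proof essentially step for step: privacy via the moments-accountant composition of the two Gaussian releases (with the $\hat\sigma_2$ cap justified exactly as you say, by the trivial $2\lip$ bound on the per-sample gradient variation), utility via Lemma~\ref{lem:spider-grad-err} feeding the standard SpiderBoost descent inequality to get $\ex{}{\norm{\nabla \eloss(\out;S)}}=O\big(\sqrt{\fnot/(\eta T)}+\tau_1\big)$ under $\eta\le 1/(2\smooth)$ and $q\lesssim 1/(\eta^2\tau_2^2)$, and then a balance of $\fnot/(\eta T)$ against $\tau_1^2$. So the approach is the right one and matches the paper's.

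The one substantive reservation is that the theorem's content is the \emph{existence} of mutually consistent settings of $T,\eta,q,b_1,b_2$, and that is precisely the part you defer. Two concrete points you would need to nail down: (i) the paper takes $b_1=n$ (full batch at each phase start), which kills the $\lip^2/b_1$ sampling term in $\tau_1^2$ and leaves $\tau_1^2=O(\lip^2 T\bar\alpha^2/q)$ with $\bar\alpha=\sqrt{d\log{1/\delta}}/(n\varepsilon)$; eliminating $q=\Theta(1/(T\bar\alpha^2))$ then gives $\tau_1=O(\lip T\bar\alpha^2)$, and the rate follows from $T=\max\bc{\big((\smooth\fnot)^{1/4}/(\sqrt{\lip}\bar\alpha)\big)^{4/3},\,1/\bar\alpha}$ — your heuristic $q\asymp n^2/\sqrt{Td\log{1/\delta}}$ is not the right scaling; (ii) the hypotheses on $n$ are not used to make the $\sqrt{T}$ branch of the $\max$ in $\sigma_1,\sigma_2$ dominate (that is handled by requiring $T\ge q$ and $T\ge n^2/b_2^2$), but rather to guarantee feasibility, namely $b_2\le n$ and $q\ge 1$ under the chosen $T$. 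Neither issue reflects a wrong idea, but without carrying out that bookkeeping the proof is a plan rather than a proof.
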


In the case where the dominant error term is $\alpha = \tilde{O}\Big(\big[\frac{\sqrt{d}}{n\varepsilon}\big]^{2/3}\Big)$, then we approximately have 
oracle complexity $\tilde{O}\big(\max\big\{\frac{1}{\alpha^3}, \frac{n}{\alpha}\big\}\big)$.

\subsection{Lower Bound}\label{sec:lowerbound}
We now show a lower bound for the sample complexity of finding a stationary point under differential privacy in the unconstrained setting, which shows that the $O\big(\frac{\lip\sqrt{d\log{1/\delta}}}{n\varepsilon}\big)$ term in the rate given in Theorem \ref{thm:spiderboost} is necessary. Furthermore, as our lower bound holds for all levels of smoothness, it also shows that our rate in Theorem \ref{thm:spiderboost} 
is optimal in the (admittedly uncommon) regime where 
$\smooth \leq \frac{\sqrt{d}\lip^2}{\fnot n\varepsilon}$.
Our lower bound in fact holds even for convex functions. Furthermore, this result implies the same lower bound (up to log factors) for the population gradient using the technique in \cite[Appendix C]{BFTT19}.

\begin{theorem}
\label{thm:lower-bound-convex}
Given $\lip,\smooth,n,\varepsilon  = O(1),2^{-\Omega(n)}\leq \delta\leq 1/n^{1+\Omega(1)}$, there exists an $\lip$-Lispchitz, $\smooth$-smooth (convex) loss $\lossf: \bbR^d \times \cX \rightarrow \bbR$ and a dataset $S$ of $n$ points 
such that any $(\varepsilon,\delta)$-DP algorithm run on $S$ with output $\out$ satisfies,
\begin{align*}
    \norm{\nabla \eloss(\out;S)} = \Omega\br{\lip\min\br{1,\frac{\sqrt{d \log{1/\delta}}}{n\varepsilon}}}.
\end{align*}
\end{theorem}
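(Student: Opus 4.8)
The plan is to establish this lower bound via a reduction from the well-studied lower bound for differentially private mean estimation / one-way marginals, instantiated with a loss function that is simultaneously convex, Lipschitz, and smooth, and whose gradient norm at any point directly encodes the estimation error. The natural candidate is a (smoothed) linear loss of the form $\lossf(w; x) = \langle \text{(a vector depending on } x), w\rangle$ clipped or regularized so as to enforce smoothness and a bounded-domain-like behavior without actually constraining the domain. Concretely, I would take a dataset $S = (x_1,\dots,x_n)$ with $x_i \in \{\pm 1\}^d$ (or a scaled version living in a ball of radius matching the desired fingerprinting construction), set $v = \frac{1}{n}\sum_i x_i$, and use a loss whose empirical risk $\eloss(w;S)$ has gradient at $0$ equal to (a constant times) $v$, while remaining $\lip$-Lipschitz and $\smooth$-smooth globally. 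A Huber-type function $\lossf(w;x) = \lip \cdot h_{\smooth}(\langle x, w\rangle)$, where $h_{\smooth}$ is the Huber function with quadratic region of width $\sim \lip/\smooth$, achieves this: it is convex, $\lip\|x\|$-Lipschitz in $w$ (so we rescale $x$ to have unit norm, i.e.\ $x_i \in \{\pm 1/\sqrt{d}\}^d$), $\smooth\|x\|^2$-smooth, and near $w=0$ its gradient is $\lip\langle x,w\rangle'$-linear, so $\nabla\eloss(0;S) = \lip v$ when all $\langle x_i, w\rangle$ are in the quadratic regime.

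The core step is then: if $\out$ is such that $\|\nabla\eloss(\out;S)\| \le \alpha$, argue that $\out$ must in fact ``know'' $v$ up to error $O(\alpha/\lip)$ in the relevant coordinates, i.e.\ one can read off an accurate estimate of the empirical mean $v = \frac{1}{n}\sum_i x_i$ from $\out$. This requires controlling the geometry of the loss: because $h_\smooth$ is linear outside a strip of width $\lip/\smooth$, the condition $\|\nabla\eloss(\out;S)\|\le\alpha$ is informative only if $\out$ lies (coordinatewise, relative to the data) in the quadratic region; one handles this by choosing the scale of the construction (or a preliminary localization argument, or a slightly more elaborate loss that ``wraps around'') so that a low-gradient point is forced into the region where the gradient is a faithful linear image of $v$. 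Once that is in place, standard fingerprinting-code lower bounds for estimating one-way marginals under $(\varepsilon,\delta)$-DP \cite{bassily2014private} — valid in the stated range $2^{-\Omega(n)}\le\delta\le n^{-1-\Omega(1)}$ — give that no $(\varepsilon,\delta)$-DP algorithm can output an estimate of $v$ with $\ell_2$-error $o\big(\min\{1, \sqrt{d\log(1/\delta)}/(n\varepsilon)\}\big)$ on a worst-case dataset, which translates to $\|\nabla\eloss(\out;S)\| = \Omega\big(\lip\min\{1, \sqrt{d\log(1/\delta)}/(n\varepsilon)\}\big)$. The $\min$ with $1$ is just the trivial regime where the data itself has $\|v\|=\Theta(1)$ and even a constant gradient is unavoidable.

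I expect the main obstacle to be handling the \emph{unconstrained} nature of the problem cleanly while keeping the loss globally $\smooth$-smooth: a purely linear loss is $0$-smooth but has no minimizer and unbounded gradient-norm behavior is vacuous, whereas clipping/Huberizing makes the gradient uninformative far from the data-dependent region. The delicate point is ensuring that ``$\out$ has small gradient'' genuinely implies ``$\out$ recovers $v$'' — i.e.\ that the adversary cannot escape to a region where the gradient is small for reasons unrelated to $v$. Two routes: (i) design the loss so that the only approximately-stationary points are in the informative linear-image region (e.g.\ add a tiny strongly-convex-in-a-ball component, or use a loss that is linear with slope $\mathrm{sign}$-of-$v$ far out, forcing descent back toward the origin), tracking that this perturbation changes $\fnot$, $\lip$, $\smooth$ only by constants; or (ii) argue that the DP post-processing map $\out \mapsto$ (clipped/projected estimate of $v$) is itself $(\varepsilon,\delta)$-DP and apply the fingerprinting bound to that composed algorithm. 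Route (ii) is cleanest and is presumably what the authors use, with the loss engineered so that $\fnot = \eloss(0;S)-\min_w\eloss(w;S)$ is a fixed constant (this is why the theorem statement carries no $\fnot$ dependence — the construction normalizes it). The extension to population stationarity follows verbatim from the resampling argument of \cite[Appendix C]{BFTT19}, losing only log factors, as claimed.
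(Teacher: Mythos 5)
Your high-level plan --- reduce to the fingerprinting lower bound for DP mean estimation via a convex, Lipschitz, smooth loss whose approximate stationary points must encode the empirical mean --- is exactly the paper's strategy, and you correctly isolate the delicate point (in the unconstrained setting, why does a small gradient force the output into the region where the gradient faithfully reflects the mean?). However, the concrete loss you propose does not work, and the delicate point is precisely where your argument is left open. If $\lossf(w;x)=\lip\, h_{\smooth}(\ip{x}{w})$ with $h_{\smooth}$ the standard Huber function, then $h_{\smooth}'(0)=0$, so $\nabla\eloss(0;S)=0$ for every dataset: the point $w=0$ is an exact stationary point regardless of the data, the data-independent algorithm that outputs $0$ defeats the lower bound, and your claim that $\nabla\eloss(0;S)=\lip v$ is false. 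More generally, for this composed loss the gradient is $\frac{1}{n}\sum_i \lip\, h_{\smooth}'(\ip{x_i}{w})x_i$, a combination of the $x_i$ with data- and $w$-dependent weights, which is not a faithful linear image of $v=\frac{1}{n}\sum_i x_i$; neither of your two ``routes'' is carried out far enough to repair this, and route (ii) (post-processing) cannot help when the loss itself admits a trivial stationary point.

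The paper's construction avoids the issue by Huberizing the \emph{distance to the data point} rather than an inner product: $\lossf(w;x)=\frac{\smooth}{2}\norm{w-x}^2$ when $\norm{w-x}\le B$ and $\lip\norm{w-x}-\frac{\lip^2}{2\smooth}$ otherwise, with $B=\lip/\smooth$ and all $x_i$ in a ball of radius $B/4$. Then the unconstrained minimizer of $\eloss(\cdot;S)$ is the empirical mean $w^*$, and on the ball of radius $3B/4$ the empirical risk is $\smooth$-strongly convex, which gives $\norm{\nabla\eloss(\out;S)}\ge\frac{\smooth}{2}\norm{\out-w^*}$; the DP mean-estimation lower bound on $\mathbb{E}\norm{\out-w^*}$ (plus a projection/post-processing step to handle outputs in the larger ball) then yields the claimed bound in that regime. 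For outputs far from $w^*$, a gradient-monotonicity argument for convex functions (Lemma \ref{lem:gradient-monotonicity}) shows $\norm{\nabla\eloss(\out;S)}\ge\lip/2$ unconditionally, and the two cases are combined by the law of total expectation. If you want to salvage your inner-product approach, you would need an explicit linear term, e.g.\ $\lossf(w;x)=\lip\ip{x}{w}+\frac{\smooth}{2}\Delta(w)$ with $\Delta$ a Huber regularizer, which is what the paper does for its non-private $d=1$ lower bound (Theorem \ref{thm:sample-complexity-lower-bound}); as written, your construction has a genuine gap.
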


The proof 
is based on a reduction to DP mean estimation. Specifically, we consider a instance of the Huber loss
    function for which the minimizer is the empirical mean of the dataset. We then argue that close to the minimizer, the empirical stationarity is lower bounded by DP mean estimation bound \cite{SU15}, and far away, by construction, the empirical stationarity is $\lip$. We defer the details to Appendix \ref{app:lower-bounds}.

\paragraph{Challenges for Further Rate Improvements}
Given the above lower bound, the question arises as to whether the $\tilde{O}\big(\big[\frac{\sqrt{d}}{n\varepsilon}]^{2/3}\big)$ term can be improved. An informal argument using the oracle complexity lower bound of \cite{arjevani2019lower} suggests several major challenges in obtaining further rate improvements.
A more detailed version of the following discussion can be found in Appendix \ref{app:rate-challenges}.

Consider methods which ensure privacy by directly privatizing the gradient/gradient variation queries. 
The aim of such methods is to design some private stochastic first order oracle, $\cO_{\varepsilon',\delta'}$, such that a set of $G$ queries to $\cO_{\varepsilon',\delta'}$ satisfies $(\varepsilon,\delta)$-DP, and use this oracle in some optimization algorithm $\cA(\cO_{\varepsilon',\delta'})$. 
Such a setup encapsulates numerous results in the convex setting \cite{BFTT19,KLL21}, and is even more dominant in non-convex settings \cite{wang2017differentially,ZCHSB20,Abadi16}.
Under advanced composition based arguments, to make $G$ calls to such a private oracle one needs $\varepsilon' \leq \varepsilon/\sqrt{G}$. 
Now, standard fingerprinting code arguments suggest lower bounds on the level of accuracy of any such private oracle \cite{SU15}. Specifically, without leveraging further problem structure beyond Lipschitzness, one needs the gradient estimation error to be at least $\tau_1 = \Omega\Big(\frac{\lip\sqrt{Gd\log{1/\delta}}}{n\varepsilon}\Big)$. A similar argument suggests the error in the gradient variation between iterates $w,w'$ must at least $\tau_2\norm{w-w'} = \Omega\Big(\frac{\smooth\norm{w-w'}\sqrt{Gd\log{1/\delta}}}{n\varepsilon}\Big)$. Now consider some optimization algorithm, $\cA$, which takes as input a stochastic oracle $\cO$ for some smooth function $\cL$. The lower bound of \cite{arjevani2019lower} suggests that if $\cA$ makes at most $G$ queries to $\cO$, the algorithm satisfies 
$\ex{}{\norm{\nabla \cL(\cA(\cO))}} = \Omega\br{\br{\frac{\fnot \tau_2\tau_1}{G}}^{1/3} + \frac{\tau_1}{\sqrt{G}}}$. If $\cO$ is a private oracle satisfying the previously mentioned conditions, we would then have under the setting of $\tau_1$ and $\tau_2$ suggested by privacy that 
$$\ex{}{\norm{\nabla \cL(\cA(\cO))}} = \Omega\br{\br{\frac{\sqrt{\fnot\smooth\lip}\sqrt{d \log{1/\delta}}}{n\varepsilon}}^{2/3} + \frac{\lip\sqrt{d\log{1/\delta}}}{n\varepsilon}}.$$ 

This indicates a substantial challenge for future rate improvements, as 
alternative methods which avoid private gradients (see e.g. \cite{feldman2020private}) rely crucially on stability guarantees arising from convexity.

\section{Stationary Points of Population Risk}

\begin{algorithm}[H]
\caption{Tree-based Private Spider}
\begin{algorithmic}[1]
\REQUIRE $S=(x_1,\ldots,x_n)\in{\cal X}^n$: private dataset, $(\varepsilon, \delta)$: privacy parameters, $T$: number of rounds, $b$: batch size at beginning of each round, $D$: depth of trees at each round, $\beta$: step-size parameter, $\tilde{\alpha}$: accuracy parameter.
\STATE $w_{0, \ell(2^{D}-1)} = 0$
\FOR{$t =1$ to $T$}
\STATE Set $w_{t, \varnothing}=w_{t-1, \ell(2^{D}-1)}$
\STATE Draw a batch $S_{t, \varnothing}$ of $b$ data points, %
set $S\leftarrow S\setminus S_{t,\varnothing}$.
\STATE Set $\sigma_{t, \varnothing}^{2}:=\frac{8 L_{0}^{2} \log{1.25 / \delta}}{b^{2} \varepsilon^{2}}$.
\STATE \label{alg:SPIDER_first_grad} $\nabla_{t, \varnothing}=\frac{1}{b} \sum_{x \in S_{t, \varnothing}} \nabla f\left(w_{t, \varnothing}; x\right)+g_{t, \varnothing}$, where $g_{t, \varnothing} \sim \mathcal{N}\left(0, \mathbb{I}_d \sigma_{t, \varnothing}^{2}\right)$.
\FOR{$u_{t, s}\in\operatorname{DFS}\left[D\right]$} 
\STATE Let $s=\hat{s} c$, where $c \in\{0,1\}$.  
\IF{$c=0$} 
\STATE $\nabla_{t, s}=\nabla_{t, \hat{s}}$
\STATE $w_{t, s}=w_{t, \hat{s}}$
\ELSE 
\STATE Draw a batch $S_{t, s}$ of $\frac{b}{2^{|s|}}$ data points,
set $S\leftarrow S\setminus S_{t,s}$. 
\STATE Set noise variance $\sigma_{t, s}^{2}:=\frac{8\cdot 2^D\beta^2\log{1.25 / \delta}}{b^{2} \varepsilon^{2}}$.
\STATE %
$\Delta_{t, s} = \frac{2^{|s|}}{b}\!\!\sum\limits_{x \in S_{t, s}}\left(\nabla f\left(w_{t, s}; x\right)\!-\!\nabla f\left(w_{t, \hat{s}}; x\right)\right) + g_{t, s}$, where $g_{t, s} \!\!\sim \!\mathcal{N}\left(0, \mathbb{I}_d\sigma_{t, s}^{2}\right) .$ \label{alg:SPIDER_rec_grad}
\STATE $\nabla_{t, s}=\nabla_{t, \hat{s}}+\Delta_{t, s} .$
\ENDIF
\IF{$|s|=D$ (i.e, $u_{t,s}$ is a leaf)}
\IF{$\|\nabla_{t, s}\| \leq 2 \tilde{\alpha}$}
\STATE Return $w_{t, s}$ \label{alg:SPIDER_return}
\ENDIF
\STATE Let $u_{t,s^+}$ %
be the next vertex in $\mathrm{DFS}[D]$.
\STATE \label{alg:SPIDER_step} Set $\eta_{t, s}:=\frac{\beta}{2^{D/2}L_1\|\nabla_{t, s}\|}$
\STATE $w_{t, s^+} = w_{t, s} - \eta_{t, s} \nabla_{t, s} .$
\ENDIF
\ENDFOR 
\ENDFOR
\STATE Return $\overline{w}$, chosen uniformly at random from $\{w_{t,s} : t\in[T], %
u_{t,s} \text{ is a leaf}\}$.\label{alg:tree_final_line}
\end{algorithmic}
\label{Alg:tree_private_spider}
\end{algorithm}

For the population gradient, we provide a linear time algorithm; see Algorithm \ref{Alg:tree_private_spider} for pseudocode. 
It is a noisy variant of SPIDER \cite{fang2018spider}, and utilizes a variance reduction technique tailored to an underlying binary tree structure. 
Namely, we run $T$ rounds, where at the beginning of round $t$ we build a binary tree of depth $D$, whose nodes are denoted by $u_{t,s}$, where $s \in \{0,1\}^D$. Every node $u_{t,s}$ is associated with a parameter vector $w_{t,s}$ and a gradient estimate $\nabla_{t,s}$%
. Next, we perform a Depth-First-Search traversal of the tree. We denote by DFS$[D]$ the set of nodes in the visiting order  %
excluding the root, for example: DFS$[2]=\{u_0,u_{00},u_{01},u_1,u_{10},u_{11}\}$. %
When a left child node is visited, it receives the same parameter vector and gradient estimator %
of the parent node. 

On the other hand, when a right child node is visited, it receives a fresh set of samples and uses it to update the gradient estimator coming from the parent node. Every time a leaf node is reached, a gradient step is performed using the gradient estimator associated to the leaf. Finally, the parameter vector of a right child node comes from the gradient step performed at the right-most leaf in the left sub-tree of it. The use of the binary tree structure is benefitial because every gradient estimator is updated at most $D$ times within a round of $2^D$ optimization steps, as opposed to the original SPIDER algorithm where the gradient estimators are updated at every optimization step. This way, we are able to perform the same number of optimization steps but adding substantially smaller amounts of noise, leading to a faster rate than the one we would get without using the tree. In the following, we denote by $\ell(k)$ the binary representation of any number $k\in [0, 2^D-1]$ and by $|s|$ the depth of $u_{t,s}$ for any $t \in [T]$.

The proposed algorithm is similar to the one in Section $5$ of \cite{bassily2021non} for constrained Differentially Private-Stochastic Convex Optimization (DP-SCO), with the key difference that Algorithm \ref{Alg:tree_private_spider} executes each round with fixed depth trees, 
which is key for our convergence analysis, whereas the prior work leverages convexity to construct trees that increase depth by one at each round. In addition, to choose the step-size in \cite{bassily2021non} the authors leverage the bounded diameter of the domain, 
while our step-size is chosen as that of \cite{fang2018spider}, i.e.~normalized by the norm of the gradient estimator and proportional to the target accuracy. This choice is crucial for controlling the sensitivity of the gradient variation estimator in the unconstrained setting, and consequently for the privacy analysis as well. Our results are presented below and the proofs are deferred to Appendix \ref{sec:app_pop}. 

\begin{theorem}[Privacy guarantee]\label{thm:privacy_tree_private_spider} For any $\varepsilon, \delta \in [0,1]$, Algorithm \ref{Alg:tree_private_spider} is $(\varepsilon, \delta)$-DP.
\end{theorem}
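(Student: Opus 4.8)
The plan is to reduce the privacy analysis to a combination of \emph{parallel composition}, the \emph{Gaussian mechanism}, and \emph{post-processing}. The first and most important observation is that Algorithm~\ref{Alg:tree_private_spider} is single pass: every batch it ever draws---the root batch $S_{t,\varnothing}$ of size $b$ at the start of round $t$, and a batch $S_{t,s}$ of size $b/2^{|s|}$ each time a right child $u_{t,s}$ is visited---is immediately deleted from $S$ via $S\leftarrow S\setminus S_{t,s}$. Hence (as long as $b,D,T$ are in the range where $S$ is not exhausted, which holds for the parameter settings we use) all these batches are pairwise disjoint, so each point of $S$ is touched by exactly one Gaussian-mechanism call, and two neighboring datasets differ in a single such batch. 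I will then invoke parallel composition for adaptively composed mechanisms on disjoint data: the iterate updates $w_{t,s^+}=w_{t,s}-\eta_{t,s}\nabla_{t,s}$, the copying of $(\nabla,w)$ to left children, and even the data-dependent early-return test $\|\nabla_{t,s}\|\le 2\tilde\alpha$ are all post-processings of the noisy gradient estimates produced so far; therefore it suffices to show that each single Gaussian call is $(\varepsilon,\delta)$-DP with respect to its own batch, conditioned on all earlier randomness.

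For a root call, conditioning on the earlier randomness fixes $w_{t,\varnothing}$, and the map $S_{t,\varnothing}\mapsto \frac{1}{b}\sum_{x\in S_{t,\varnothing}}\nabla f(w_{t,\varnothing};x)$ has $\ell_2$-sensitivity at most $2L_0/b$ by $L_0$-Lipschitzness of $f(\cdot;x)$. Since $\sigma_{t,\varnothing}^2 = 8L_0^2\log(1.25/\delta)/(b^2\varepsilon^2) = \tfrac{2\log(1.25/\delta)}{\varepsilon^2}(2L_0/b)^2$, the classical Gaussian-mechanism guarantee (valid because $\varepsilon\le 1$) gives $(\varepsilon,\delta)$-DP.

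For a right-child call at a node $u_{t,s}$ with $s=\hat s 1$, conditioning on earlier randomness fixes both $w_{t,s}$ and $w_{t,\hat s}$, and by $L_1$-smoothness the map $S_{t,s}\mapsto \frac{2^{|s|}}{b}\sum_{x\in S_{t,s}}\big(\nabla f(w_{t,s};x)-\nabla f(w_{t,\hat s};x)\big)$ has $\ell_2$-sensitivity at most $\frac{2^{|s|}}{b}\cdot 2L_1\|w_{t,s}-w_{t,\hat s}\|$. The crux of the proof is to bound $\|w_{t,s}-w_{t,\hat s}\|$ by a quantity that does not depend on the data: because the step size is normalized, $\eta_{t,s}=\beta/(2^{D/2}L_1\|\nabla_{t,s}\|)$, every gradient step within a round displaces the iterate by exactly $\beta/(2^{D/2}L_1)$, independently of the data; and in the DFS order the iterate is updated once per leaf of the subtree rooted at the left sibling of $u_{t,s}$ between the visit to $u_{t,\hat s}$ and the visit to $u_{t,s}$, and that subtree has $2^{D-|s|}$ leaves. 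Consequently $\|w_{t,s}-w_{t,\hat s}\|\le 2^{D-|s|}\cdot \beta/(2^{D/2}L_1) = (\beta/L_1)\,2^{D/2-|s|}$, so the sensitivity is at most $\frac{2^{|s|}}{b}\cdot 2L_1\cdot(\beta/L_1)2^{D/2-|s|} = 2^{D/2+1}\beta/b$ (the $L_1$ cancels, as it must since $\sigma_{t,s}^2$ does not involve $L_1$). Squaring and multiplying by $2\log(1.25/\delta)/\varepsilon^2$ reproduces exactly $\sigma_{t,s}^2 = 8\cdot 2^D\beta^2\log(1.25/\delta)/(b^2\varepsilon^2)$, so this call is $(\varepsilon,\delta)$-DP as well.

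Assembling: every per-batch call is $(\varepsilon,\delta)$-DP, the batches are disjoint, and everything else is post-processing, so parallel composition yields that Algorithm~\ref{Alg:tree_private_spider} is $(\varepsilon,\delta)$-DP. The step I expect to be the main obstacle is the deterministic displacement bound $\|w_{t,s}-w_{t,\hat s}\|\le(\beta/L_1)2^{D/2-|s|}$: it requires tracking precisely how many normalized gradient steps separate a node from its right child in the DFS traversal, and relies on the displacement per step being exactly $\beta/(2^{D/2}L_1)$ regardless of the realized gradient estimate---which is the whole point of the normalized step size. A secondary nuisance is stating parallel composition carefully enough to absorb the data-dependent early-stopping rule (e.g.\ by reasoning about a hypothetical full run of all $T$ rounds and noting that the actual output is a post-processing of its transcript).
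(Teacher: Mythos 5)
Your proposal is correct and follows essentially the same route as the paper's proof: parallel composition over the disjoint single-pass batches, sensitivity $2L_0/b$ for the root estimate, and for the gradient-variation estimate the same key observation that the normalized step size makes each of the $2^{D-|s|}$ displacements between $u_{t,\hat s}$ and $u_{t,s}$ exactly $\beta/(2^{D/2}L_1)$, yielding sensitivity $2^{D/2+1}\beta/b$ matching $\sigma_{t,s}$. The only (cosmetic) difference is that you bound $\|w_{t,s}-w_{t,\hat s}\|$ once and apply smoothness to the total displacement, whereas the paper telescopes the smoothness bound over the intermediate iterates; your extra care about post-processing of the data-dependent stopping rule is a point the paper glosses over.
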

\begin{theorem}[Accuracy guarantee]\label{thm:accuracy_tree_private_Spider}

Let $p\in (0,1)$, $\varepsilon, \delta > 0$, $b = \max\bc{n^{2/3}, \frac{\sqrt{n}d^{1/4}}{\sqrt{\varepsilon}}}$, $D$ be such that $D2^{D+1} = b$, $T = \frac{n}{b(D/2+1)}$, $\alpha =  \sqrt{2}L_0\max\big\{\frac{1}{n^{1/3}}, \big(\frac{\sqrt{d}}{n\varepsilon}\big)^{1/2}\big\}$, $\beta = \alpha\min\{1,\frac{\sqrt{b}\varepsilon}{\sqrt{d}}\}$, and $\tilde{\alpha} = \tilde{C}\alpha$, where $\tilde{C} = 256\log{\frac{1.25}{\delta}}\log{\frac{2T2^{D+1}}{p}} + \frac{8L_1F_0\sqrt{2D}(D/2+1)}{2L_0^2}$. Then, for any %
$n \ge \max\{\sqrt{d}(\frac{D}{2}+1)^2/\varepsilon, (\frac{D}{2}+1)^3\}$, with probability $1-p$, Algorithm \ref{Alg:tree_private_spider} ends in line \ref{alg:SPIDER_return}, returning an iterate $w_{t,s}$ with 
\[\|\nabla F(w_{t,s};{\cal D})\| \leq 3\sqrt{2}L_0\tilde{C}\max\Big\{\frac{1}{n^{1/3}}, \Big(\frac{\sqrt{d}}{n\varepsilon}\Big)^{1/2}\Big\}.\]
    
Furthermore, Algorithm \ref{Alg:tree_private_spider} has oracle complexity of $n$.
\end{theorem}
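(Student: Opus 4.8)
The plan is to carry out a high-probability SPIDER descent analysis in which the variance-reduction bookkeeping is organized along the root-to-leaf paths of the per-round binary trees. Write $\zeta_{t,s}:=\nabla_{t,s}-\nabla F(w_{t,s};\cD)$ for the gradient-estimation error at node $u_{t,s}$, and recall $\tilde\alpha=\tilde C\alpha$. First I would fix the ``good event'' $\cE$ on which (i) every root minibatch average is within $O(L_0/\sqrt b)\cdot\sqrt{\log(2T2^{D+1}/p)}$ of $\nabla F(w_{t,\varnothing};\cD)$ (vector Bernstein, applied conditionally on the DFS history since $S_{t,\varnothing}$ is drawn fresh), (ii) every right-child variation estimator satisfies $\norm{\Delta_{t,s}-(\nabla F(w_{t,s};\cD)-\nabla F(w_{t,\hat s};\cD))}\lesssim \frac{L_1\norm{w_{t,s}-w_{t,\hat s}}}{\sqrt{b/2^{|s|}}}\sqrt{\log(2T2^{D+1}/p)}$ (using $L_1$-smoothness, so each summand has norm at most $L_1\norm{w_{t,s}-w_{t,\hat s}}$), and (iii) every Gaussian vector obeys $\norm{g_{t,s}}\lesssim \sigma_{t,s}\sqrt{d\log(2T2^{D+1}/p)}$. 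A union bound over the $\le 2T2^{D+1}$ such events gives $\Pr[\cE]\ge 1-p$ and accounts for the $\log(2T2^{D+1}/p)$ factor inside $\tilde C$.

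The heart of the argument is to show that on $\cE$ one has $\norm{\zeta_{t,s}}\le\tilde\alpha/2$ at \emph{every} leaf. The key structural fact is that the step size in line~\ref{alg:SPIDER_step} is normalized, so $\norm{w_{t,s^+}-w_{t,s}}=\eta_{t,s}\norm{\nabla_{t,s}}=\frac{\beta}{2^{D/2}L_1}$ is a fixed constant independent of the iterate; since a right child $u_{t,s}$ and its parent are separated in the DFS by exactly the $2^{D-|s|}$ leaves of the parent's left subtree, this yields the displacement bound $\norm{w_{t,s}-w_{t,\hat s}}\le 2^{D-|s|}\cdot\frac{\beta}{2^{D/2}L_1}=\frac{2^{D/2-|s|}\beta}{L_1}$. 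Because a left child inherits both the iterate and the gradient estimator of its parent, unrolling the recursion along the root-to-leaf path gives the telescoping identity $\zeta_{t,s}=\zeta_{t,\varnothing}+\sum_{k:\,s_k=1}e_{t,s_{\le k}}$, where $e_{t,s'}$ is the sampling-plus-Gaussian error of $\Delta_{t,s'}$. Substituting the displacement bound and the batch size $b/2^{|s|}$ into (ii) makes the depth-$k$ sampling term $\lesssim \frac{2^{(D-k)/2}\beta}{\sqrt b}\sqrt{\log(\cdot)}$, which sums geometrically to $\lesssim \frac{2^{D/2}\beta}{\sqrt b}\sqrt{\log(\cdot)}=\frac{\beta}{\sqrt{2D}}\sqrt{\log(\cdot)}$ after using $D2^{D+1}=b$; and, since $\sigma_{t,s}$ does not depend on $|s|$, the $\le D$ Gaussian terms sum to $\lesssim \sqrt D\cdot\frac{\beta\sqrt d}{\sqrt b\varepsilon}\sqrt{\log\tfrac1\delta\log(\cdot)}$. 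Here the choices $b\ge n^{2/3}$, $b\ge\sqrt n d^{1/4}/\sqrt\varepsilon$, and $\beta=\alpha\min\{1,\sqrt b\varepsilon/\sqrt d\}$ are exactly what is needed to make each of these terms, and the two parts of the root term $\zeta_{t,\varnothing}$, bounded by $O(\alpha)$ up to logarithms; collecting terms and using $D\le\log(2T2^{D+1}/p)$ gives $\norm{\zeta_{t,s}}\lesssim \alpha\log\tfrac{1.25}{\delta}\log\tfrac{2T2^{D+1}}{p}\le\tilde\alpha/2$ with the constant $256$ absorbing all numeric factors.

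Given this, the descent part is standard normalized-gradient-descent bookkeeping. On $\cE$, whenever the algorithm does \emph{not} return at a leaf $u_{t,s}$, we have $\norm{\nabla_{t,s}}>2\tilde\alpha$, hence $\norm{\nabla F(w_{t,s};\cD)}>\tfrac32\tilde\alpha$ and $\norm{\zeta_{t,s}}\le\norm{\nabla F(w_{t,s};\cD)}/3$; combining $L_1$-smoothness of $F(\cdot;\cD)$ with $\norm{w_{t,s^+}-w_{t,s}}=\frac{\beta}{2^{D/2}L_1}$, $\langle\nabla F(w_{t,s};\cD),\nabla_{t,s}\rangle\ge\tfrac23\norm{\nabla F(w_{t,s};\cD)}^2$, and $\norm{\nabla_{t,s}}\le\tfrac43\norm{\nabla F(w_{t,s};\cD)}$ yields a per-step decrease $F(w_{t,s^+};\cD)-F(w_{t,s};\cD)\le -\frac{\beta\tilde\alpha}{4\cdot 2^{D/2}L_1}$ (using $\tilde\alpha\ge 2\beta$). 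The rounds compose because $w_{t,\varnothing}$ equals the last iterate of round $t-1$ and $w_{1,\varnothing}=0$, so a run that never terminated would force $F_0\ge T2^{D}\cdot\frac{\beta\tilde\alpha}{4\cdot 2^{D/2}L_1}=\frac{T2^{D/2}\beta\tilde\alpha}{4L_1}$. A two-case check (the regimes $b=n^{2/3}$ and $b=\sqrt n d^{1/4}/\sqrt\varepsilon$) shows $\frac{\sqrt b}{n\alpha^2\min\{1,\sqrt b\varepsilon/\sqrt d\}}=\frac{1}{2L_0^2}$ in both, so this lower bound equals $\frac{L_0^2\tilde C}{2L_1\sqrt{2D}(D/2+1)}$, which exceeds $F_0$ by the definition of the second term of $\tilde C$ --- a contradiction. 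Hence the algorithm returns within $T$ rounds at some leaf with $\norm{\nabla_{t,s}}\le 2\tilde\alpha$, and then on $\cE$, $\norm{\nabla F(w_{t,s};\cD)}\le\norm{\nabla_{t,s}}+\norm{\zeta_{t,s}}\le\tfrac52\tilde\alpha<3\tilde C\alpha=3\sqrt 2L_0\tilde C\max\{n^{-1/3},(\sqrt d/(n\varepsilon))^{1/2}\}$. For the oracle complexity: round $t$ consumes $b+\sum_{k=1}^{D}2^{k-1}\cdot(b/2^{k})=b(1+D/2)$ samples, so $T$ rounds consume exactly $Tb(1+D/2)=n$ (early stopping only decreases this), and the hypothesis $n\ge\max\{\sqrt d(D/2+1)^2/\varepsilon,(D/2+1)^3\}$ guarantees $T\ge 1$ is a valid integer.

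The main obstacle is the error bound of the second paragraph: one must verify that the per-level geometric \emph{growth} of the displacement $\norm{w_{t,s}-w_{t,\hat s}}\sim 2^{D-|s|}$ is precisely cancelled by the geometric \emph{shrinkage} $b/2^{|s|}$ of the batch sizes --- so that each level of each root-to-leaf path contributes an $O(\alpha)$ error even after summing over depth --- while simultaneously keeping the $\le D$ privacy-noise injections under control, and all of this has to be set up as a conditional (filtration-based) argument so that the vector concentration inequalities apply to the data-dependent iterates $w_{t,s}$. Once this estimate is in hand, the descent/termination accounting and the parameter chase tying everything to $\tilde C$ are routine.
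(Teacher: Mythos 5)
Your proposal is correct and follows essentially the same route as the paper's proof: a union-bound "good event" controlling the root-to-leaf accumulation of sampling and Gaussian errors at every leaf (the paper's Lemma \ref{lem:tree_estim_err}, which uses a single martingale-difference concentration bound where you apply concentration level-by-level and sum the geometric series — an equivalent accounting), followed by the same normalized-step descent lemma with per-step decrease $\frac{\beta\tilde\alpha}{4\cdot 2^{D/2}L_1}$, the same contradiction with $F_0$ via the second term of $\tilde C$, the same triangle-inequality conclusion $\|\nabla F(w_{t,s};\cD)\|\le 3\tilde\alpha$, and the same $Tb(D/2+1)=n$ sample count. The displacement bound $\|w_{t,s}-w_{t,\hat s}\|\le 2^{D-|s|}\beta/(2^{D/2}L_1)$ and the cancellation against the shrinking batch sizes that you identify as the crux are exactly the mechanism in the paper's privacy and estimation-error analyses.
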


\section{Stationary Points in the Convex Setting}
\begin{algorithm}[h]
\caption{Recursive Regularization}
\label{alg:noisyRR}
\begin{algorithmic}[1]
\REQUIRE Dataset $S$, 
loss function $\lossf$, steps $T$, $\bc{\lambda_t}_t$, $\bc{R_t}_t$,
$\mathsf{PrivateSubRoutine}$,
number of steps of sub-routine $\bc{K_t}$,
selector functions $\bc{\cS_t(\cdot)}_t$, 
step size $\bc{\eta_t}_t$,
noise variances $\bc{\sigma_t}_t$
\STATE $w_0 = 0$, $n_0=1$
\STATE Define function $(w,x)\mapsto \lossf
^{(0)}(w;x) = \lossf(w;x) + \frac{\lambda_0}{2}\norm{w-w_0}^2$
\FOR{$t=1$ to $T-1$}
 \STATE $n_t = n_{t-1}+
 \left\lfloor\frac{\abs{S}}{T}\right\rfloor$
\STATE $\out_t = \mathsf{PrivateSubRoutine}\br{S_{n_{t-1}:n_t},\lossf^{(t-1)},R_t,K_t,\eta_t,\cS_t(\cdot),\sigma_t}$
\STATE Define function $(w,x)\mapsto \lossf^{(t)}(w;x) = \lossf^{(t-1)}(w;x) + \frac{\lambda_t}{2}\norm{w-\out_t}^2$
\ENDFOR
\ENSURE{$\out = \out_{T}$}
\end{algorithmic}
\end{algorithm}

In this section, we additionally assume that the loss function is convex. The motivation for this is two-fold: firstly, this setting has recently gained attention in a non-private setting~\cite{nesterov2012make,allen2018make,foster2019complexity}. Secondly, in this setting we are able to establish tightly the sample complexity of approximate stationary points.

Our method is based on the recursive regularization technique proposed in \cite{allen2018make}, and further improved by \cite{foster2019complexity}. The main idea, as the name suggests, is to recursively regularize the objective and  optimize it via some solver. For the DP setting, the key idea is to use 
a private sub-routine as 
the inner solver. Furthermore, while a solver for the unconstrained problem suffices non-privately,
we need to carefully increase the radius of the constrained set over which the solver operates.

\begin{theorem}
\label{thm:population-convex-gradient-small}
Let $\lip,\smooth, \varepsilon, \delta >0$, $d, n \in \bbN$.
Let $w\mapsto\lossf(w;x)$ be an $\lip$-Lipschitz $\smooth $-smooth convex function for all $x$. 
Let $R_t = \br{\sqrt{2}}^{t}\norm{w^*}, \lambda_t = 2^t \lambda$,
$\eta_t = \frac{\log{K_t}}{\lambda_tK_t}$,
$T = \left\lfloor \mathsf{log}_2\br{\frac{\smooth}{\lambda}} \right\rfloor$,
 $\sigma_t^2 = \frac{64\lip^2K_t^2\log{1/ \delta}}{n^2\varepsilon^2}$, and
$\cS_t(\bc{w_k}_k) = \frac{1}{\sum_{k=1}^{K_t}\br{1-\eta_t \lambda_t}^{-k}}\sum_{k=1}^{K_t}\br{1-\eta_t \lambda_t}^{-k}w_k$.

\begin{CompactEnumerate}
\item (Optimal rate) Algorithm \ref{alg:noisyRR} run with $\mathsf{NoisyGD}$ (Algorithm \ref{alg:noisyGD} in 
Appendix~\ref{sec:app_convex}) as the PrivateSubRoutine with above parameter settings and 
 $\lambda = \frac{\lip^2}{\smooth \norm{w^*}}\min\br{\frac{1}{n},\frac{d}{n^2\varepsilon^2}}$ and
 \linebreak $K_t = \max\br{\frac{\smooth+\lambda_t}{\lambda_t}\log{\frac{\smooth+\lambda_t}{\lambda_t}},\frac{n^2\varepsilon^2 \br{\lip^2\lambda+\smooth^{3/2}}}{T^2\lambda d\lip^2\log{1/\delta}}}$
 satisfies
$(\varepsilon,\delta)$-DP, and given a dataset $S$ of $n$ i.i.d. samples from $\cD$, outputs $\out$ such that
 \begin{align*}
    \mathbb{E}\norm{\nabla \plossD{\out}} =
    \tilde O\br{\frac{\lip}{\sqrt{n}}+\frac{\lip\sqrt{d}}{n\varepsilon}}.
\end{align*}
Furthermore, the above rate is tight up to poly-logarithmic factors.
 \item (Linear time rate) Algorithm \ref{alg:noisyRR} run with $\mathsf{PhasedSGD}$ (Algorithm \ref{alg:phased-sgd}) as the PrivateSubRoutine with with above parameter settings and 
 $\lambda = \max\br{\frac{\lip^2}{\smooth \norm{w^*}^2}\min\br{\frac{1}{n},\frac{d}{n^2\varepsilon^2}}, \frac{\smooth\log{n}}{n}}$ and
 $K_t = \lfloor\frac{n}{T}\rfloor$
satisfies
$(\varepsilon,\delta)$-DP and given a dataset $S$ of $n$ i.i.d. samples from $\cD$, in linear time, outputs $\out$ with
\begin{align*}
    \mathbb{E}\norm{\nabla \plossD{\out}} =
    \tilde O\br{\frac{\lip}{\sqrt{n}}+\frac{\lip\sqrt{d}}{n\varepsilon} + \frac{\smooth\norm{w^*}}{\sqrt{n}}}.
\end{align*}
\end{CompactEnumerate}
\end{theorem}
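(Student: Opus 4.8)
The algorithm instantiates the recursive-regularization scaffold of \cite{allen2018make,foster2019complexity} with a differentially private strongly-convex inner solver, so the proof has four ingredients: privacy, the inner solver's utility, the reduction of population stationarity of $F(\cdot;\cD)$ to the accuracy of the inner solver on the regularized subproblems, and the parameter tuning. Privacy is the easy part: the $t$-th call to $\mathsf{PrivateSubRoutine}$ sees a disjoint block of $\lfloor n/T\rfloor$ samples, and since the added regularizers $\tfrac{\lambda_j}{2}\norm{w-\out_j}^2$ are data-independent the data part of $\lossf^{(t-1)}(\cdot;x)$ is still $\lip$-Lipschitz; with the prescribed noise scale each call is $(\varepsilon,\delta)$-DP by the standard analysis of $\mathsf{NoisyGD}$ (resp.\ the streaming analysis of $\mathsf{PhasedSGD}$), and the whole algorithm is $(\varepsilon,\delta)$-DP by parallel composition over the disjoint blocks.

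For the core reduction I would work with $\Phi_t(w):=F(w;\cD)+\sum_{j=0}^t\tfrac{\lambda_j}{2}\norm{w-\out_j}^2$ (with $\out_0=0$) and $w_{t+1}^\star:=\arg\min\Phi_t$, noting $\Phi_t$ is $\Lambda_t$-strongly convex with $\Lambda_t:=\sum_{j\le t}\lambda_j=\Theta(2^t\lambda)$ and $(\smooth+\Lambda_t)$-smooth. Two observations carry the argument. First, adding a quadratic centered at a near-minimizer barely moves the minimizer: from $\nabla\Phi_t(w_{t+1}^\star)=0$, $\Phi_t=\Phi_{t-1}+\tfrac{\lambda_t}{2}\norm{\cdot-\out_t}^2$ and strong convexity of $\Phi_{t-1}$ at $w_t^\star$, one gets $\norm{w_{t+1}^\star-w_t^\star}\le\norm{\out_t-w_t^\star}$, hence $\norm{\out_{t+1}-\out_t}\le 3\max\{\norm{\out_t-w_t^\star},\norm{\out_{t+1}-w_{t+1}^\star}\}$. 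Second, $T=\lfloor\log_2(\smooth/\lambda)\rfloor$ forces $\Lambda_{T-1}=\Theta(\smooth)$, so $\Phi_{T-1}$ has $O(1)$ condition number and its round-$T$ excess risk $\rho_T$ gives $\norm{\nabla\Phi_{T-1}(\out)}\le\sqrt{2(\smooth+\Lambda_{T-1})\rho_T}$. Then I would decompose
\begin{align*}
\nabla F(\out;\cD)=\nabla\Phi_{T-1}(\out)-\lambda\,\out-\sum_{j=1}^{T-1}\lambda_j(\out-\out_j),
\end{align*}
bound $\norm{\out}=O(\norm{w^*})$ by an induction that every iterate stays near $w^*$, and telescope $\sum_{j\ge1}\lambda_j\norm{\out-\out_j}\le\sum_k\Lambda_k\norm{\out_{k+1}-\out_k}$ using the one-step bound, to obtain
\begin{align*}
\mathbb{E}\norm{\nabla F(\out;\cD)}=\tilde O\Big(\lambda\norm{w^*}+\sum_k\Lambda_k\,\mathbb{E}\norm{\out_k-w_k^\star}+\sqrt{\smooth\,\mathbb{E}[\rho_T]}\Big).
\end{align*}
The DP strongly-convex solver on $\Phi_{t-1}$ with $m\approx n/T$ samples has excess risk $\mathbb{E}[\rho_t]=\tilde O\big(\tfrac{G_t^2}{\Lambda_t}(\tfrac1m+\tfrac d{m^2\varepsilon^2})\big)$, hence $\mathbb{E}\norm{\out_t-w_t^\star}=\tilde O\big(\tfrac{G_t}{\Lambda_t}(\tfrac1{\sqrt m}+\tfrac{\sqrt d}{m\varepsilon})\big)$, where $G_t$ is the effective Lipschitz constant the solver pays on the ball $B(0,R_t)$; together with the routine check that $w_{t+1}^\star$ lies strictly inside $B(0,R_t)$, this reduces the theorem to estimating $G_t$.

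The hard part — and where the private argument departs from \cite{foster2019complexity} — is controlling $G_t$: the regularizer gradient $\sum_j\lambda_j(w-\out_j)$ has size $\Theta(\lambda_t R_t)$ over $B(0,R_t)$, which a priori makes the injected noise and the stochastic second moment blow up geometrically in $t$. The remedy is a joint induction showing every $\out_j$ and $w_{t+1}^\star$ stays within $O(\norm{w^*})$ of the origin — the minimizer of $\Phi_{t-1}$ is a $\{\lambda_j/\Lambda_{t-1}\}$-convex combination of $0,\out_1,\dots,\out_{t-1}$ up to an $O(\lip/\Lambda_{t-1})$ correction from $\nabla F$ — so the geometrically growing radii $R_t=(\sqrt2)^t\norm{w^*}$ eventually contain every minimizer while the region actually explored has bounded radius; this is exactly why the radius must be increased across rounds rather than kept fixed. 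For $\mathsf{NoisyGD}$ one further localizes to $G_t=\tilde O(\lip)$, making $\Lambda_t\,\mathbb{E}\norm{\out_t-w_t^\star}=\tilde O(\lip(\tfrac1{\sqrt m}+\tfrac{\sqrt d}{m\varepsilon}))$ $t$-independent; summing over the $T=O(\mathrm{polylog})$ rounds with $m=n/T$ and the stated $\lambda=\tfrac{\lip^2}{\smooth\norm{w^*}}\min(\tfrac1n,\tfrac d{n^2\varepsilon^2})$ (which makes $\lambda\norm{w^*}$ negligible) yields $\tilde O(\tfrac\lip{\sqrt n}+\tfrac{\lip\sqrt d}{n\varepsilon})$. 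Tightness then follows by combining the non-private $\Omega(\lip/\sqrt n)$ bound of Theorem~\ref{thm:sample-complexity-lower-bound} with the $\Omega(\lip\sqrt{d\log(1/\delta)}/(n\varepsilon))$ empirical bound of Theorem~\ref{thm:lower-bound-convex}, which transfers to the population gradient norm by the resampling reduction of \cite[Appendix~C]{BFTT19}.

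For the linear-time statement, $\mathsf{PhasedSGD}$ is single-pass, so it pays a uniform (rather than optimum-localized) Lipschitz/second-moment bound on each subproblem and needs $K_t=\Omega((\smooth+\Lambda_t)/\Lambda_t)$ iterations to converge; with the hard budget $K_t=\lfloor n/T\rfloor$ this forces $\lambda$ to be floored at $\Theta(\smooth\log n/n)$ so that even the ill-conditioned early rounds are solved adequately. Propagating the larger early-round errors and this floor through the telescoped sum $\sum_k\Lambda_k\,\mathbb{E}\norm{\out_k-w_k^\star}$ introduces an extra $\Theta(\smooth)\cdot\norm{w^*}\cdot\tilde O(\sqrt{T/n})=\tilde O(\smooth\norm{w^*}/\sqrt n)$ contribution, matching the extra term in the statement; the remaining steps — privacy, the gradient decomposition, the interior-minimizer check — are unchanged. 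I expect the $G_t$ control, namely the localization for $\mathsf{NoisyGD}$ and the error-propagation bookkeeping for $\mathsf{PhasedSGD}$, to be the most delicate part, with everything else being assembly of known components.
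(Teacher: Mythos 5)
Your overall architecture matches the paper's: privacy by parallel composition over disjoint blocks plus the subroutine guarantees, the decomposition $\nabla \lossF(\out_T;\cD)=\nabla \lossF^{(T-1)}(\out_T;\cD)-\sum_t\lambda_t(\out_T-\out_t)$, a telescoped bound on $\sum_t\lambda_t\norm{\out_t-\cdot}$ in terms of $\sum_t\sqrt{\lambda_t\alpha_t}$ (your consecutive-differences version is an equivalent repackaging of the paper's Lemma~\ref{lem:RR-key-lemma}), the $\sqrt{2}$-growth of the regularized minimizers justifying $R_t$ (Lemma~\ref{lem:radius_bound}), and the same combination of Theorems~\ref{thm:lower-bound-convex} and~\ref{thm:sample-complexity-lower-bound} with the resampling reduction for tightness.

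However, there is a genuine gap at exactly the step you flag as "the hard part." You propose to control the effective Lipschitz constant by a joint induction showing every $\out_j$ and every regularized minimizer stays within $O(\norm{w^*})$ of the origin, concluding $G_t=\tilde O(\lip)$. This is not what the paper does, and as stated it does not go through. First, the inner solver is run over the ball of radius $R_t=(\sqrt 2)^t\norm{w^*}$, which grows to $\sqrt{\smooth/\lambda}\,\norm{w^*}$; the excess-risk analysis of a projected (noisy, stochastic) method must pay the Lipschitz constant over its entire feasible region, not just where the output happens to land, and on $\partial\cW_{R_t}$ the accumulated regularizer has gradient of order $\Lambda_{t-1}R_t$. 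Second, the localization itself is doubtful: $\out_t$ is only controlled in expectation via $\norm{\out_t-w^*_{t-1}}\lesssim\sqrt{\alpha_t/\lambda_{t-1}}$, and in early rounds $\sqrt{\alpha_t/\lambda_t}\approx\frac{\lip}{\lambda_t}\br{\frac{1}{\sqrt m}+\frac{\sqrt d}{m\varepsilon}}$ need not be $O(\norm{w^*})$ for the stated $\lambda$. The paper's actual mechanism is structural rather than geometric: because the regularizer $\sum_j\frac{\lambda_j}{2}\norm{w-\out_j}^2$ is data-independent within a phase, it contributes nothing to the gradient sensitivity (hence the noise scale is calibrated to $\lip$ only), nothing to uniform stability (hence the generalization gap pays only $\lip$), and nothing to the variance of stochastic gradients (hence the variance-based SGD bound, Lemma~\ref{lem:convex-variance-based-bound}, pays only $\cV\le\lip$). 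The only place the full Lipschitz constant $\lip+G_\Delta$ with $G_\Delta=O(\smooth 2^{T/2})$ enters is the optimization error $\tilde O(\tilde\lip^2/(\lambda_tK_t))$, which is absorbed by the second term in the prescribed $K_t$ for $\mathsf{NoisyGD}$ and by the variance-based analysis for $\mathsf{PhasedSGD}$. Relatedly, for the linear-time rate the extra $\smooth\norm{w^*}/\sqrt n$ term arises from the residual $\lambda\norm{w_0-w^*_{T-1}}\le\lambda R_{T-1}\approx\sqrt{\lambda\smooth}\norm{w^*}$ combined with the floor $\lambda\ge\smooth\log(n)/n$, not primarily from propagated early-round errors, though your identification of the floor's origin (the fixed iteration budget versus the conditioning requirement) is correct.
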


The proof of the above result is deferred to Appendix \ref{sec:app_convex}.
For the tightness of the rate, the necessity of the second term $\frac{\lip\sqrt{d}}{n\varepsilon}$ is
due to our DP empirical stationarity lower bound, Theorem \ref{thm:lower-bound-convex}.
For the first ``non-private'' term $\frac{L_0}{\sqrt{n}}$,
even though \cite{foster2019complexity} proved a sample complexity
lower bound, their instance is not Lipschitz and has $d=\Omega\br{n\log{n}}$, hence not applicable.
To remedy this, we give a new lower bound construction with a Lispchitz function in $d=1$, Theorem \ref{thm:sample-complexity-lower-bound} in Appendix \ref{app:lower-bounds}. 
The polylog dependence on $\smooth$ and $\norm{w^*}$ in the upper bounds, is consistent with the non-private sample complexity in \cite{foster2019complexity}.

The second result is a linear time method which has an additional $\smooth \norm{w^*}/\sqrt{n}$ term. 
Firstly, if the smoothness parameter is \textit{small enough}, then there is no overhead; this small-enough smoothness is precisely the regime in which we have linear time methods with optimal rates for smooth DP-SCO  \cite{feldman2020private}.
More importantly, 
\cite{foster2019complexity} showed that even in the non-private setting,
a polynomial dependence on $\smooth \norm{w^*}$ is necessary in the stochastic oracle model.
However, the optimal non-private term, shown in \cite{foster2019complexity}, is $\smooth \norm{w^*}/n^2$, achieved by accelerated methods.
Improving this dependency, if possible, is an interesting direction for future work.

\section{Generalized Linear Models}

In this section, we assume that the loss function is a generalized linear model (GLM), $\lossf(w;(x,y))=\phi_y\br{\ip{w}{x}}$. Also, assume the norm of data points $x$ are bounded by $\norm{\cX}$ and the function $\phi_y: \bbR \rightarrow \bbR$ is $\lip$-Lipschitz and $\smooth$-smooth for all $y$. Furthermore, let $\rank$ denote the rank of design matrix $X\in \bbR^{n\times d}$.

\begin{algorithm}[h]
\caption{JL method}
\label{alg:jlmethod}
\begin{algorithmic}[1]
\REQUIRE Dataset $S$, 
 function $(z,y)\mapsto \phi_y(z)$, Algorithm $\cA$, JL matrix $\Phi \in \bbR^{k\times d}$, $\lip$, $\smooth$, $\norm{\cX}$
\STATE $\tilde w = \cA((z,y)\mapsto \phi_y(z), \bc{(\Phi x_i,y_i)}_{i=1}^n, 2\lip\norm{\cX}, 2\smooth\norm{\cX}^2, \varepsilon,\delta/2)$
\ENSURE{$\out = \Phi^\top \tilde w$}
\end{algorithmic}
\end{algorithm}

Algorithm \ref{alg:jlmethod} is a generic method which converts \textit{any}
for smooth Lipschitz losses with an empirical stationarity guarantee to get dimension-independent rates on population stationarity for smooth Lipschitz GLMs.
This algorithm is the JL method from \cite{arora2022differentially} used therein to give excess risk bounds for convex GLM. We note that while the JL method there is limited to the Noisy GD method, ours is a black-box reduction.
Furthermore, unlike \cite{arora2022differentially}, we show that the JL method gives finer rank based guarantees by leveraging the fact it acts as an oblivious approximate subspace embedding (see Definition \ref{defn:ose} in Appendix \ref{app:glm}).

\begin{theorem}
\label{thm:jl-reduction}

Let $\cA$ be an $\br{\varepsilon,\delta}$-DP algorithm which when run on a $\smooth$-smooth $\lip$-Lipschitz function on a dataset $S = \bc{(x_i,y_i)}_{i=1}^n$ where $x_i \in \cX\subseteq \bbR^d$, guarantees $\mathbb{E}\left[\norm{\nabla  \lossF(\cA(S);S)}\right]\leq g(d,n, \smooth, \lip, \varepsilon,\delta)$
and $\norm{\cA(S)} \leq \text{poly}(n,d, \lip,\smooth)$ with probability
at least $1-\frac{1}{\sqrt{n}}$. Then,  Algorithm \ref{alg:jlmethod} run with 
$$k = \left\lceil \min\br{\argmin_{j \in \bbN}\br{g(j,n,2\lip\norm{\cX},2\smooth\norm{\cX}^2,\varepsilon,\delta/2) + \frac{\lip\norm{\cX}\log{n}}{\sqrt{j}}}, \mathsf{rank} \log{\frac{2n}{\delta}}} \right\rceil$$
on a $\lip$-Lipschitz, $\smooth$-smooth GLM loss, is $\br{\varepsilon,\delta}$-DP. Furthermore, given a dataset of $n$ i.i.d samples from $\cD$,  its output $\out$ satisfies,
\begin{align*}
    \mathbb{E}\left[\norm{\nabla \lossF(\out;\cD)}\right] \leq \tilde O\br{\frac{\lip\norm{\cX}}{\sqrt{n}} + g(k,n,2\lip\norm{\cX},2\smooth\norm{\cX}^2,\varepsilon,\delta/2)}
\end{align*}
\end{theorem}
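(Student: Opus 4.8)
The plan is to run $\cA$ on the randomly projected GLM living in $\bbR^k$, transfer $\cA$'s \emph{empirical} stationarity guarantee to a \emph{population} stationarity guarantee for the projected problem via the dimension-free uniform convergence of gradients of GLMs established in this paper, and then lift that population guarantee back to $\bbR^d$ using the oblivious JL / subspace-embedding properties of $\Phi$. \textbf{Privacy} is the easy part: writing $\hat f(\tilde w;(x,y)):=\phi_y(\ip{\tilde w}{\Phi x})=\phi_y(\ip{\Phi^\top\tilde w}{x})$, Algorithm~\ref{alg:jlmethod} simply runs $\cA$ on $\widetilde S:=\{(\Phi x_i,y_i)\}_{i=1}^n$ and post-processes its output by $\tilde w\mapsto\Phi^\top\tilde w$. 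Since $\Phi$ is sampled independently of $S$ and the map $(x,y)\mapsto(\Phi x,y)$ is applied coordinatewise, $\widetilde S$ and $\widetilde S'$ differ in one point whenever $S,S'$ do, so the $(\varepsilon,\delta/2)$-DP of $\cA$ transfers to $\cA(\widetilde S)$, and post-processing preserves it, yielding $(\varepsilon,\delta/2)$- and hence $(\varepsilon,\delta)$-DP. (If $k$ is defined through $\rank$, we regard $\rank$ or a public upper bound on it as known; this is the only point where data-dependence of $k$ would matter.)

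\textbf{From empirical to population stationarity in $\bbR^k$.} Choosing the JL failure probability polynomially small and union bounding over $S$, with probability $1-1/\mathrm{poly}(n)$ we have $\|\Phi x_i\|^2\le 2\|x_i\|^2\le 2\norm{\cX}^2$ for all $i$; on this event $\hat f$ is $2\lip\norm{\cX}$-Lipschitz and $2\smooth\norm{\cX}^2$-smooth (its gradient is $\phi_y'(\cdot)\,\Phi x$ and its Hessian $\phi_y''(\cdot)\,\Phi x(\Phi x)^\top$), so $\cA$'s hypothesis applies with parameters $(k,n,2\lip\norm{\cX},2\smooth\norm{\cX}^2,\varepsilon,\delta/2)$: writing $\widetilde F(\cdot;\cdot)$ for the empirical/population risk of $\hat f$, we get $\mathbb{E}\|\nabla\widetilde F(\tilde w;S)\|\le g(k,\dots)$ together with $\|\tilde w\|\le\mathrm{poly}(n,d,\lip,\smooth)$ with probability $\ge 1-1/\sqrt n$. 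Applying the new uniform convergence of gradients of GLMs (which costs only polylog in the domain radius) to the projected GLM in $\bbR^k$ on the good event, and using the deterministic bound $\|\nabla\widetilde F(\cdot;\cdot)\|\le 2\lip\norm{\cX}$ to absorb the low-probability bad events, gives
\[
\mathbb{E}\,\|\nabla\widetilde F(\tilde w;\cD)\|\ \le\ g\big(k,n,2\lip\norm{\cX},2\smooth\norm{\cX}^2,\varepsilon,\delta/2\big)\ +\ \tilde{O}\big(\lip\norm{\cX}/\sqrt n\big).
\]

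\textbf{Lifting to $\bbR^d$.} By the chain rule $\nabla\widetilde F(\tilde w;\cD)=\Phi\,\nabla F(\out;\cD)$ and $\nabla\widetilde F(\tilde w;S)=\Phi\,\nabla F(\out;S)$, with $\out=\Phi^\top\tilde w$. Also $\|\nabla F(\out;\cD)\|\le\|\nabla F(\out;\cD)-\nabla F(\out;S)\|+\|\nabla F(\out;S)\|$, and the first term is again $\tilde{O}(\lip\norm{\cX}/\sqrt n)$ in expectation by uniform convergence of GLM gradients in $\bbR^d$ (using $\|\out\|\le\|\Phi^\top\|_{\mathrm{op}}\|\tilde w\|\le\mathrm{poly}$, and $\|\nabla F(\cdot;\cD)\|\le\lip\norm{\cX}$ for the bad event). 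It remains to bound $\|\nabla F(\out;S)\|=\|\tfrac1n\sum_i a_i x_i\|$, $a_i:=\phi_{y_i}'(\ip{\out}{x_i})$, $|a_i|\le\lip$, by $\|\Phi\nabla F(\out;S)\|=\|\nabla\widetilde F(\tilde w;S)\|$ up to controlled slack, noting $\nabla F(\out;S)\in\mathrm{colspan}(X^\top)$ of dimension $\rank$. Two regimes, according to which term attains the minimum defining $k$. \emph{Regime (a): $k=\lceil\rank\log(2n/\delta)\rceil$.} Then with probability $\ge 1-1/\sqrt n$ the matrix $\Phi$ is a constant-distortion oblivious subspace embedding for the \emph{fixed} $\rank$-dimensional subspace $\mathrm{colspan}(X^\top)$, whence $\|\nabla F(\out;S)\|\le 2\|\nabla\widetilde F(\tilde w;S)\|$; the bad event contributes $\le\lip\norm{\cX}/\sqrt n$. \emph{Regime (b): $k=\lceil\argmin_{j}\{g(j,\dots)+\lip\norm{\cX}\log n/\sqrt j\}\rceil<\rank\log(2n/\delta)$.} No subspace embedding is available, but union bounding the JL property over the $O(n^2)$ \emph{pairs} $(x_i,x_j)$ shows that simultaneously $|\ip{\Phi x_i}{\Phi x_j}-\ip{x_i}{x_j}|\le\gamma\norm{\cX}^2$ for all $i,j$ with $\gamma=\tilde{O}(1/\sqrt k)$; expanding $\|\tfrac1n\sum_i a_i x_i\|^2$ and $\|\tfrac1n\sum_i a_i\Phi x_i\|^2$ and using $\sum_i|a_i|\le n\lip$ gives $\|\nabla F(\out;S)\|\le\|\nabla\widetilde F(\tilde w;S)\|+\tilde{O}(\lip\norm{\cX}\log n/\sqrt k)$, the additive error matching exactly the term entering the definition of $k$. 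The crucial point is that the JL guarantee is invoked only on the fixed data points $x_i$, never on the $\Phi$-measurable vector $\nabla F(\out;S)$.

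\textbf{Combining, and the main obstacle.} Taking expectations and chaining the displays above, in either regime
\[
\mathbb{E}\,\|\nabla F(\out;\cD)\|\ \le\ \tilde{O}\big(\lip\norm{\cX}/\sqrt n\big)\ +\ O\big(g(k,\dots)\big)\ +\ O\big(\lip\norm{\cX}\log n/\sqrt k\big),
\]
the last term being $0$ in regime (a). In regime (b), $k$ is (the ceiling of) the minimizer of $j\mapsto g(j,\dots)+\lip\norm{\cX}\log n/\sqrt j$; since $g(\cdot,\dots)$ is a sum of power functions of its first argument while $\lip\norm{\cX}\log n/\sqrt j$ is a power function, at the minimizer the two summands are within a constant factor of each other, so $\lip\norm{\cX}\log n/\sqrt k=O(g(k,\dots))$ and the bound collapses to the claimed $\tilde{O}(\lip\norm{\cX}/\sqrt n+g(k,\dots))$. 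The hard part is Regime (b) of the lifting step: because $\nabla F(\out;S)$ and $\nabla F(\out;\cD)$ depend on the random matrix $\Phi$, one cannot apply the per-vector JL inequality to them, so the projection distortion must be routed through the fixed vectors $x_i$ — which is exactly what produces the $\lip\norm{\cX}\log n/\sqrt k$ overhead and dictates the particular choice of $k$. A second ingredient that must be nailed down is the uniform convergence of GLM gradients with only polylogarithmic (rather than polynomial) dependence on the domain radius, used twice above.
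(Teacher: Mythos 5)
Your overall architecture is the same as the paper's: privacy via post-processing plus a $\delta/2$ budget for the JL event that controls the Lipschitz/smoothness constants of the projected loss; utility via the decomposition $\norm{\nabla F(\out;\cD)}\le\sup_{w\in\cW}\norm{\nabla F(w;\cD)-\nabla F(w;S)}+\norm{\nabla F(\out;S)}$ on the poly-radius set $\cW$, with the first term handled by the new uniform convergence lemma (polylog in the radius) and the second transferred from the low-dimensional empirical guarantee; and a two-regime analysis of that transfer, with your regime (a) (oblivious subspace embedding at $k=\lceil\rank\log(2n/\delta)\rceil$) matching the paper exactly. (Your intermediate paragraph on population stationarity in $\bbR^k$ is never used in the final chain and can be deleted.)

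The genuine gap is in regime (b). Union-bounding the $(\gamma,\beta)$-JL inner-product property over the $O(n^2)$ pairs $(x_i,x_j)$ gives $\gamma=\tilde O(1/\sqrt{k})$ and hence controls the \emph{squared} norms additively:
\begin{align*}
\Bigl|\,\bigl\|\tfrac1n\textstyle\sum_i a_i\Phi x_i\bigr\|^2-\bigl\|\tfrac1n\textstyle\sum_i a_i x_i\bigr\|^2\Bigr|\;\le\;\frac{1}{n^2}\sum_{i,j}|a_i||a_j|\,\gamma\norm{\cX}^2\;\le\;\gamma\,\lip^2\norm{\cX}^2 .
\end{align*}
Converting this to a bound on the norms themselves only yields $\norm{\nabla F(\out;S)}\le\norm{\nabla\widetilde F(\tilde w;S)}+\sqrt{\gamma}\,\lip\norm{\cX}=\norm{\nabla\widetilde F(\tilde w;S)}+\tilde O\bigl(\lip\norm{\cX}/k^{1/4}\bigr)$, not the $\tilde O(\lip\norm{\cX}\log{n}/\sqrt{k})$ you assert; the multiplicative form $|A-B|\le\gamma\lip^2\norm{\cX}^2/(A+B)$ only helps when the gradient norm is already of order $\lip\norm{\cX}$, which is exactly the regime you do not care about. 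Since the $\lip\norm{\cX}\log{n}/\sqrt{j}$ term is what defines $k$ and what must be absorbed into $g(k,\dots)$ at the end, a $k^{-1/4}$ error breaks the balancing (e.g.\ it would degrade the $(n\varepsilon)^{-2/5}$ term in Corollary \ref{cor:glm-corollary}). The paper avoids this by working at the level of vectors rather than squared norms: it writes $x_i=I_{d-k}\Phi x_i+(I-I_{d-k}\Phi)x_i$ with $I_{d-k}\in\bbR^{d\times k}$ a padding matrix, uses $\norm{I_{d-k}u}=\norm{u}$ to recover $\norm{\nabla\widetilde F(\tilde w;S)}$ from the first piece, and bounds the second piece by $\lip\norm{\cX}\,\mathbb{E}\norm{I-I_{d-k}\Phi}=\tilde O(\lip\norm{\cX}/\sqrt{k})$ via Gaussian matrix concentration. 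You would need to reproduce an argument of this type (or otherwise obtain a distortion bound that is linear rather than quadratic in the gradient norm) for regime (b) to deliver the stated rate.
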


The expression for $k$ above comes from the subspace embedding property of JL, and from balancing the dimension of the embedding with respect to the error of $\cA$ and the approximation error of the JL embedding.
The proof is based on the properties of JL matrices: oblivious subspace embedding and preservation of norms, together with a new uniform convergence result for gradients of Lipschitz GLMs.
The full proof is deferred to Appendix \ref{app:glm}.

Below, we instantiate the above with our proposed algorithms.

\begin{corollary}
\label{cor:glm-corollary}
Under the assumptions of Theorem \ref{thm:jl-reduction},
Algorithm \ref{alg:jlmethod} run with $\cA$ as
\begin{CompactEnumerate}
    \item Private $\!\!\!$ Spiderboost (Alg.~\ref{alg:spider}) yields $\norm{\nabla \lossF(\out;\cD)}\!=\!\tilde O\br{\frac{1}{\sqrt{n}}+\min\br{\br{\frac{\sqrt{\rank}}{n\varepsilon}}^{2/3},\frac{1}{\br{n\varepsilon}^{2/5}}}}.$
    \item Algorithm \ref{alg:noisyRR} with NoisyGD as PrivateSubRoutine, under the additional assumption that $w\mapsto \lossf(w;(x,y))$ is convex for all $x,y$, yields $\norm{\nabla \lossF(\out;\cD)}=\tilde  O\br{\frac{1}{\sqrt{n}} + \min\br{\frac{\sqrt{\rank}}{n\varepsilon},\frac{1}{\sqrt{n\varepsilon}}}}$.
\end{CompactEnumerate}
\end{corollary}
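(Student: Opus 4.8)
The plan is to instantiate Theorem~\ref{thm:jl-reduction} with each base algorithm $\cA$ in turn, plugging in the corresponding empirical stationarity rate $g$ and carrying out the optimization over the embedding dimension $k$. First I would check the preconditions of the reduction for each algorithm: both Private SpiderBoost (Theorem~\ref{thm:spiderboost}) and Algorithm~\ref{alg:noisyRR} with NoisyGD (Theorem~\ref{thm:population-convex-gradient-small}, part 1) are $(\varepsilon,\delta)$-DP and output iterates of norm $\mathrm{poly}(n,d,\lip,\smooth)$, so they qualify. Note also that the norm bounds in the hypotheses of Theorems~\ref{thm:spiderboost} and~\ref{thm:population-convex-gradient-small} are translated into conditions on $n$ via the substitution $\lip \to 2\lip\norm{\cX}$, $\smooth \to 2\smooth\norm{\cX}^2$, $d \to k$, which one should record but which will hold in the stated regime (these are the omitted ``function-class parameters'' in Table~\ref{tab:results_summary}).

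For part 1, the empirical stationarity guarantee of Private SpiderBoost in dimension $j$ is, up to log factors and suppressing function-class constants, $g(j,n,\cdot) = \big(\tfrac{\sqrt{j}}{n\varepsilon}\big)^{2/3} + \tfrac{\sqrt{j}}{n\varepsilon}$. Theorem~\ref{thm:jl-reduction} then tells us to take $k = \lceil \min( k^\star, \rank\log(2n/\delta)) \rceil$ where $k^\star = \argmin_j \{ g(j,n,\cdot) + \tfrac{\log n}{\sqrt j} \}$, and the resulting population rate is $\tilde O(\tfrac{1}{\sqrt n} + g(k,n,\cdot))$. The key computation is the unconstrained minimization: balancing $(\sqrt{j}/(n\varepsilon))^{2/3}$ against $1/\sqrt{j}$ gives $j^\star \asymp (n\varepsilon)^{2/5}$, yielding value $(n\varepsilon)^{-2/5}$; so $k^\star \asymp (n\varepsilon)^{2/5}$ and $g(k^\star,n,\cdot) \asymp (n\varepsilon)^{-2/5}$ (one checks the linear term $\sqrt{k^\star}/(n\varepsilon) \asymp (n\varepsilon)^{-4/5}$ is lower order). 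When instead $\rank\log(2n/\delta) \le k^\star$, i.e.\ in the ``low-rank'' regime, we are forced to take $k \asymp \rank$ (up to logs), and $g(k,n,\cdot) \asymp (\sqrt{\rank}/(n\varepsilon))^{2/3}$, again the linear term being lower order. Taking the minimum of the two cases gives exactly $\tilde O\big(\tfrac{1}{\sqrt n} + \min\{(\sqrt{\rank}/(n\varepsilon))^{2/3}, (n\varepsilon)^{-2/5}\}\big)$, as claimed.

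For part 2, under the extra convexity assumption the base algorithm is Algorithm~\ref{alg:noisyRR}+NoisyGD, whose (optimal, by Theorem~\ref{thm:population-convex-gradient-small}) rate in dimension $j$ is $g(j,n,\cdot) = \tfrac{1}{\sqrt n} + \tfrac{\sqrt j}{n\varepsilon}$ — but here I should be slightly careful: Theorem~\ref{thm:population-convex-gradient-small} is a \emph{population} bound for the convex solver, whereas Theorem~\ref{thm:jl-reduction} wants an \emph{empirical} stationarity bound; since the convex recursive-regularization method also yields the corresponding empirical guarantee (indeed its population bound is proved through an empirical analysis), the same $g$ with the $\tfrac{1}{\sqrt n}$ term folded into the overall $\tfrac{1}{\sqrt n}$ applies, and effectively $g(j,n,\cdot) \asymp \sqrt j/(n\varepsilon)$ for the dimension-dependent part. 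Then balancing $\sqrt j/(n\varepsilon)$ against $1/\sqrt j$ gives $j^\star \asymp \sqrt{n\varepsilon}$ and value $(n\varepsilon)^{-1/2}$; and in the low-rank regime $k \asymp \rank$ gives $\sqrt{\rank}/(n\varepsilon)$. The minimum of the two yields $\tilde O\big(\tfrac{1}{\sqrt n} + \min\{\tfrac{\sqrt{\rank}}{n\varepsilon}, \tfrac{1}{\sqrt{n\varepsilon}}\}\big)$.

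The main obstacle I anticipate is bookkeeping rather than conceptual: one must verify that the $\argmin$ defining $k$ in Theorem~\ref{thm:jl-reduction} is attained at the claimed value (monotonicity/convexity of $j \mapsto g(j,n,\cdot) + \log(n)/\sqrt j$ in the relevant range, and that $k^\star \ge 1$), that the subleading terms (the $\sqrt{k}/(n\varepsilon)$ linear terms from SpiderBoost, and the additive $\lip\norm{\cX}/\sqrt n$ from JL) are genuinely dominated in every parameter regime, and that the hypotheses on $n$ in Theorems~\ref{thm:spiderboost} and~\ref{thm:population-convex-gradient-small}, after the parameter rescaling, are implied by the standing assumptions. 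The one genuinely non-routine point is confirming that Algorithm~\ref{alg:noisyRR} supplies the empirical-stationarity interface required by Theorem~\ref{thm:jl-reduction} (and not merely a population guarantee); this is where I would spend the most care, referring back to the analysis in Appendix~\ref{sec:app_convex}.
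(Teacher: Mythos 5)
Your overall route is the same as the paper's: instantiate Theorem \ref{thm:jl-reduction} with each base algorithm and read off the rate from the prescribed choice of $k$. Your balancing computations land on the correct final rates, though both of your claimed optimal embedding dimensions are off: balancing $(\sqrt{j}/(n\varepsilon))^{2/3}$ against $1/\sqrt{j}$ gives $j^\star \asymp (n\varepsilon)^{4/5}$ (not $(n\varepsilon)^{2/5}$), and balancing $\sqrt{j}/(n\varepsilon)$ against $1/\sqrt{j}$ gives $j^\star \asymp n\varepsilon$ (not $\sqrt{n\varepsilon}$); the optimal values $(n\varepsilon)^{-2/5}$ and $(n\varepsilon)^{-1/2}$ are nonetheless right, and the subleading terms remain dominated either way.

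The genuine gap is that you assert, without argument, the hypothesis of Theorem \ref{thm:jl-reduction} that $\norm{\cA(S)} \leq \mathrm{poly}(n,d,\lip,\smooth)$ with probability at least $1-1/\sqrt{n}$ — and this is essentially the entire content of the paper's proof of the corollary. For Private SpiderBoost the iterates are unconstrained, so one must actually bound the trajectory: the paper writes $\norm{\grad_t} \leq \sum_{i\le t}\norm{\Delta_i} + \norm{\sum_{i\le t} g_i} \leq 2t\lip + \norm{\sum_{i\le t} g_i}$, controls the Gaussian sum by $\sqrt{d\log(1/\beta')}\cdot\mathrm{poly}(n,d,\lip,\smooth)$ using the fact that each noise scale is at most $\max(\sigma_1,\hat\sigma_2)$, and takes a union bound over $t\in[T]$ with $\beta = 1/\sqrt{n}$; for recursive regularization the bound is immediate from the constraint radius $R_T = 2^{T/2}\norm{w^*}$. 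On the second non-routine point you did flag — that Theorem \ref{thm:population-convex-gradient-small} gives a population guarantee while the reduction wants empirical stationarity — the paper's resolution is not to re-examine the appendix analysis but to invoke the re-sampling argument of \cite{BFTT19}, which converts the population-stationarity bound into an empirical one at the same rate; your proposed justification ("the population bound is proved through an empirical analysis") is vaguer and would need to be replaced by that citation or an equivalent argument.
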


We remark that the above technique also gives bounds on empirical stationarity. In particular, the first term $\frac{1}{\sqrt{n}}$, in the above guarantees, is the uniform convergence bound and the second term is the bound on empirical stationarity.

\section*{Acknowledgements}\label{sec:ack}
RA and EU are supported, in part, by NSF BIGDATA award IIS-1838139 and NSF CAREER award IIS-1943251. RB's and MM's research is supported by NSF CAREER Award 2144532 and NSF Award AF-1908281. 
CG and TG's research was partially supported by INRIA Associate Teams project, FONDECYT 1210362 grant, ANID Anillo ACT210005 grant, and National Center for Artificial Intelligence CENIA FB210017, Basal ANID. 

\newpage
\appendix

\bibliography{references}
\bibliographystyle{alpha}

\newpage

\section{Lower bounds}
\label{app:lower-bounds}

\subsection{Missing details from DP Empirical Stationarity Lower Bound}
\begin{proof}[Proof of Theorem \ref{thm:lower-bound-convex}]
For any $r>0$, let $\cW_r$ denote the ball of radius $r$ centered at the origin.
Let $B = \frac{\lip}{\smooth}$. 
Consider the loss function: %
\begin{align*}
    \lossf(w;x) = \begin{cases}
    \frac{\smooth}{2}\norm{w-x}^2 & \text{if} \norm{w-x} \leq B\\
    \lip\norm{w-x} - \frac{\lip^2}{2\smooth} & \text{ otherwise}
    \end{cases}
\end{align*}

The function $\lossf(w;x)$ is convex, $\smooth$-smooth and $\lip$-Lispchitz in $\bbR^d$. 
We restrict to datasets $S =\bc{x_i}_{i=1}^n$ where $x_i \in \cW_{B/4}$ for all $i$,
and let $\lossF(w;S) = \frac{1}{n}\sum_{i=1}^n\lossf(w;x_i)$ be the empirical risk on $S$.
The unconstrained minimizer of $\lossF(w;S)$ is $w^* = \frac{1}{n}\sum_{i=1}^n x_i$ which lies in $\cW_{B/4}$.

For any $w \in \cW_{3B/4}$, $w$ lies in the quadratic region around all data points. 
Hence, from $\smooth$-strong convexity  
of $w\mapsto \lossF(w;S)$ on $\cW_{3B/4}$,  we have that whenever $\out \in \cW_{3B/4}$,
\begin{align*}
   \norm{\nabla \lossF(\out;S)}\norm{\out - w^*} \geq \ip{\nabla \lossF(\out;S)}{w^* - \out} \geq \lossF(\out;S) - \lossF(w^*;S) \geq \frac{\smooth}{2}\norm{\out - w^*}^2.
\end{align*}
Let $E$ be the event that $\bar w \in \cW_{3B/4}$ and let $\mathbb{E}_E$ denote the conditional expectation (conditioned on event $E$) operator.
Then,
\begin{align*}
   \mathbb{E}_{E}\norm{\nabla \eloss(\out;S)} \geq \frac{\smooth}{2} \mathbb{E}\norm{\out-w^*}
   \geq \frac{\smooth}{2} \Omega\br{\br{\frac{\lip}{4\smooth}}\min\br{1, \frac{\sqrt{d\log{1/\delta}}}{n\varepsilon}}}.
\end{align*}
where the last inequality follows from known lower bounds for DP mean estimation \cite{SU15, kamath2020primer}.
We remark that the lower bound in the referenced work is for algorithms which produce outputs in the ball of the same radius as the dataset, i.e. $\cW_{B/4}$. However, a simple post-processing  argument shows that the same lower bound applies to algorithms which produce output in $\cW_{3B/4}$. Specifically, assuming the contrary, we simply 
project the output in $\cW_{3B/4}$ to $\cW_{B/4}$: privacy is preserved by post-processing and the distance to the mean cannot increase by the non-expansiveness property of projection to convex sets, hence a contradiction. 
This gives us, 
\begin{align*}
    \mathbb{E}_E\left[\norm{\nabla \eloss(\out;S)}\right] \geq \Omega\br{\lip\min\br{1, \frac{\sqrt{d\log{1/\delta}}}{n\varepsilon}}}
\end{align*}

Let $\tilde \cW = \bc{w:\norm{w-w^*}\leq B/2}$. Since $\tilde \cW\subseteq \cW_{3B/4}$, we have that the above conditional lower bound applies for $\out \in \tilde \cW$ as well.
We now consider $\out \not \in \tilde \cW$.
Let $w'$ be \textit{any} point on the boundary of $\tilde \cW$, denoted as $\partial \cW$.
Note that $w'$ lies in the region where, for any data point, the corresponding loss is a quadratic function.
Hence, by direct computation, $\nabla F(w';S )  = \smooth \br{w'-w^*}$.
Therefore, 
\begin{align*}
    \ip{\nabla F(w')}{w'-w^*} = \smooth\norm{w'-w^*}^2 = \frac{\smooth B^2}{4}.
\end{align*}
We now apply Lemma \ref{lem:gradient-monotonicity} which gives us,
\begin{align*}
    \mathbb{E}_{E^c}\norm{\nabla F(\out;S)} \geq  \frac{\smooth B^2}{4} \cdot \frac{2}{B} =  \frac{\lip}{2},
\end{align*}
where $E^c$ denotes the complement set of $E$. 
We combine the above bounds using the law of total expectation as follows,
\begin{eqnarray*}
\mathbb{E}[\|\nabla F(\bar w;S)\|] %
&=& \mathbb{E}_E[\|\nabla F(\bar w;S)\|]\mathbb{P}\{\bar w\in E\}+\mathbb{E}_{E^c}[\|\nabla F(\bar w;S)\|]\mathbb{P}\{\bar w\in E^c\} \\
&=& \Omega\Big(L_0\min\Big\{1,\frac{\sqrt{d\log{1/\delta}}}{n\varepsilon}\Big\}\Big)\mathbb{P}(\bar w\in E) + \Omega(L_0)\mathbb{P}(\bar w\in E^c)\\
&=& \Omega \Big(L_0 \min\Big\{1,\frac{\sqrt{d\log{1/\delta}}}{n\varepsilon}\Big\}\Big).
\end{eqnarray*}

This completes the proof.
\end{proof}

\begin{lemma}
\label{lem:gradient-monotonicity}
Let $G,R\geq 0, d\in \bbN$. Let $\cW_R(w_0)$ denote the Euclidean ball around $w_0$ of radius $R$ and let $\partial \cW_R(w_0)$ %
denote its boundary.
Let $f:\bbR^d \rightarrow \bbR$ be a differentiable convex function. 
Suppose $w_0\in \bbR^d$ is such that
for every $v \in \partial \cW_R(w_0)$,
$\ip{\nabla f(v)}{v-w_0}\geq G$, then for any $w \not \in \cW_R(w_0)$, we have $\norm{\nabla f(w)}\geq \frac{G}{R}$.
\end{lemma}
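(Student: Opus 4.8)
\textbf{Proof plan for Lemma \ref{lem:gradient-monotonicity}.}

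The plan is to reduce the statement to the hypothesis on the boundary $\partial \cW_R(w_0)$ by moving along the ray from $w_0$ through $w$, and exploiting monotonicity of the gradient of a convex function. First I would fix $w \notin \cW_R(w_0)$ and write $w = w_0 + \eta u$ where $u = (w-w_0)/\norm{w-w_0}$ is a unit vector and $\eta = \norm{w-w_0} > R$. Let $v = w_0 + R u$, which lies on $\partial \cW_R(w_0)$, so by hypothesis $\ip{\nabla f(v)}{v - w_0} = R\ip{\nabla f(v)}{u} \geq G$, i.e. $\ip{\nabla f(v)}{u} \geq G/R$.

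Next I would invoke monotonicity of the (sub)gradient of the convex function $f$: for any $w, v$ we have $\ip{\nabla f(w) - \nabla f(v)}{w - v} \geq 0$. Here $w - v = (\eta - R) u$ with $\eta - R > 0$, so dividing by $\eta - R$ gives $\ip{\nabla f(w) - \nabla f(v)}{u} \geq 0$, hence $\ip{\nabla f(w)}{u} \geq \ip{\nabla f(v)}{u} \geq G/R$. Finally, by Cauchy--Schwarz, $\norm{\nabla f(w)} = \norm{\nabla f(w)}\norm{u} \geq \ip{\nabla f(w)}{u} \geq G/R$, which is exactly the claim.

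There is essentially no serious obstacle here; the only point requiring a small amount of care is the direction in which monotonicity is applied and the sign of $\eta - R$, to ensure the inequality $\ip{\nabla f(w)}{u} \geq \ip{\nabla f(v)}{u}$ comes out the right way (this is where convexity of $f$ and the fact that $w$ lies strictly \emph{outside} the ball, past $v$ along the ray, are used). Everything else is a one-line application of Cauchy--Schwarz. A full write-up would just be the three displays above with a sentence connecting them.
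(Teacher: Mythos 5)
Your proposal is correct and follows essentially the same route as the paper's proof: the paper also restricts to the ray $u=(w-w_0)/\norm{w-w_0}$, shows the directional derivative $\ip{\nabla f(\cdot)}{u}$ is non-decreasing along that ray via gradient monotonicity, compares $w$ with the boundary point $w_0+Ru$, and finishes with Cauchy--Schwarz. No gaps.
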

\begin{proof}
For a unit vector $u\in \bbR^d$, define directional directive $f'_u(w) = \ip{\nabla f(w)}{u}$. We first show that for any $u\in \bbR^d:\norm{u}=1$ and any $w' \in \bbR^d$,  the function $f'_u(w' + ru)$ is non-decreasing in $r \in \bbR_+$. This simply follows from monotonicity of gradients since $f$ is convex. In particular, for any $r'>r>0$, we have
\begin{align*}
    f'_u(w' + r'u) - f'_u(w' + ru) &= \ip{\nabla f(w'+r'u) - \nabla f(w' + ru)}{u} \\
    & = \frac{1}{r'-r}\ip{\nabla f(w'+r'u) - \nabla f(w' + ru)}{w' + ru  - (w'+ru)}\\
    &>0
\end{align*}
We now prove the claim in the lemma statement. Let $w \not \in \partial W_R$ and define $u = \frac{w-w_0}{\norm{w-w_0}}$. Then from Cauchy-Schwarz inequality and the above monotonicity property, we have,
\begin{align*}
    \norm{\nabla f(w)} &\geq \ip{\nabla f(w)}{u} = f'_u(w) \geq f'_u(w_0 + Ru) = \ip{\nabla f(w_0 + Ru)}{u} \\
    & = \frac{1}{R}\ip{\nabla f(v)}{v-w_0} \geq \frac{G}{R}
\end{align*}
which finishes the proof.
\end{proof}

\subsection{Non-private Sample Complexity Lower Bound}

\begin{theorem}
\label{thm:sample-complexity-lower-bound}
For any $\lip,{\smooth}$, $n,d\in \bbN$, there exists a distribution $\cD$ over some set $\cX$ and a $\lip$-Lipschitz, ${\smooth}$-smooth (convex) loss function $ w \mapsto \lossf(w;x)$ such that given $n$ i.i.d samples from $\cD$, the output $\out$ of any algorithm satisfies,
\begin{align*}
    \mathbb{E}\norm{\nabla {\lossF}(\out;\cD)} = \Omega\br{\frac{\lip}{\sqrt{n}}}
\end{align*}
\end{theorem}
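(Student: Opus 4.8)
The plan is to prove this one-dimensional lower bound via a reduction to a hard estimation/testing problem, in the same spirit as the classical $\Omega(1/\sqrt{n})$ lower bounds for stochastic convex optimization, but engineered so that the gradient norm — not the function value — is the quantity that remains large. First I would work in dimension $d=1$ (the general $d$ follows by padding with coordinates on which the loss is constant) and pick a smooth, $\lip$-Lipschitz, convex ``template'' loss whose derivative is a bounded, (roughly) monotone sigmoid-type function: concretely take $\lossf(w;x) = \lip\cdot\rho(w-x)$ where $\rho$ is a smoothed hinge (a Huber-type function as already used in the proof of Theorem~\ref{thm:lower-bound-convex}), so that $\nabla\lossf(w;x) = \lip\,\psi(w-x)$ with $\psi$ increasing, $\psi(0)=0$, $|\psi|\le 1$, and $|\psi'|\le \smooth/\lip$ near the origin. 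The data $x$ will be supported on two points symmetric about the origin, say $\pm c$ for a small radius $c$ chosen so that the whole relevant region stays inside the quadratic part of $\rho$ (this is what lets us set the smoothness parameter to any desired $\smooth$).

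Next I would set up the two-point family of distributions: let $\cD_\mu$ put mass $\tfrac12+\mu$ on one of the two atoms and $\tfrac12-\mu$ on the other, for a bias parameter $\mu$ to be chosen of order $1/\sqrt{n}$. The population risk $\plossD{w}$ is then smooth and convex with a unique minimizer $w^*(\mu)$ that moves with $\mu$; by a direct computation its gradient at the origin satisfies $\nabla\plossD[\cD_\mu]{0} = \lip\big[(\tfrac12+\mu)\psi(-c) + (\tfrac12-\mu)\psi(c)\big] = -2\lip\mu\,\psi(c)$, which has magnitude $\Theta(\lip\,\psi(c)\,\mu)$ and, crucially, opposite signs for $\mu$ and $-\mu$. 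The key geometric fact I would extract is: because $\nabla\plossD$ is monotone (convexity in 1D) and $\smooth$-Lipschitz, any point $\out$ with $\|\nabla\plossD[\cD_\mu]{\out}\|$ small must lie within $O(\lip\mu/\smooth)$ of $w^*(\mu)$, and since $w^*(\mu)$ and $w^*(-\mu)$ are separated by $\Theta(\lip\mu/\smooth)$, no single output can be simultaneously near-stationary for both $\cD_\mu$ and $\cD_{-\mu}$. Equivalently, the sign of (say) $\out$ relative to a fixed threshold determines which of $\mu,-\mu$ we are in, so an algorithm achieving $\|\nabla\plossD{\out}\| = o(\lip/\sqrt n)$ in expectation under both distributions yields a test distinguishing $\cD_\mu$ from $\cD_{-\mu}$.

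The final step is the standard information-theoretic lower bound: $\cD_\mu^{\otimes n}$ and $\cD_{-\mu}^{\otimes n}$ have $\chi^2$- (or KL-) divergence $O(n\mu^2)$, so for $\mu = c_0/\sqrt n$ with $c_0$ a small constant they cannot be reliably distinguished — formally, by Le Cam's two-point method (or Pinsker plus an averaging over $\mu\in\{+c_0/\sqrt n,-c_0/\sqrt n\}$), any estimator errs with constant probability on at least one of them. Combining this with the separation of the two minimizers gives $\max_{\pm}\ \mathbb{E}_{\cD_{\pm\mu}^{\otimes n}}\,\|\nabla\plossD[\cD_{\pm\mu}]{\out}\| = \Omega(\lip\,\psi(c)\,\mu/\text{const}) = \Omega(\lip/\sqrt n)$, which after fixing $\psi(c)$ to be a constant (our choice of $c$ makes $\psi(c)$ of order $\smooth c/\lip$; we then set $c \asymp \lip/\smooth$ up to constants so $\psi(c)=\Theta(1)$, while keeping $c$ small enough that the construction is valid) yields the claimed bound, uniformly over the prescribed $\lip,\smooth$.

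The main obstacle I expect is bookkeeping the dependence on $\smooth$ correctly: we need a single construction that is genuinely $\smooth$-smooth (not smoother) for \emph{every} given value of $\smooth$, while still keeping the gradient-at-the-minimizer of order $\lip\mu$ and the minimizer-separation of order $\lip\mu/\smooth$ large enough for the reduction. This forces the data radius $c$ and the curvature of $\rho$ to be tuned against each other, and one has to check that all relevant points ($0$, $w^*(\pm\mu)$, and the near-stationary region) stay inside the region where $\rho$ is exactly quadratic so the computations are clean; handling the regime where $\smooth$ is very large (so $c = \lip/\smooth$ is tiny) versus very small requires a small case split but no new ideas. Everything else — the $\chi^2$ bound, Le Cam, the monotonicity argument converting small gradient norm into proximity to $w^*$ — is routine.
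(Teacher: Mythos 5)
Your proposal is correct and follows essentially the same route as the paper's proof: a one-dimensional Huber-type convex loss, a biased two-point (Bernoulli) distribution with bias $\Theta(1/\sqrt{n})$, a case split on whether the output lies in the quadratic region versus outside it, and Le Cam's two-point method to conclude. The only cosmetic difference is that the paper uses the loss $\frac{\lip}{2}wx + \frac{\smooth}{2}\Delta(w)$ (linear in the data plus a data-independent Huber regularizer) rather than a translated Huber, and phrases the final step as bias estimation rather than testing, but the reduction from small population gradient norm to recovering the bias is identical in substance.
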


\begin{proof}

We construct a hard instance in $d=1$ dimension.
Let $p \in [0,1]$ 
be a parameter to be set later
and let $v\in\bc{-1,1}$ be %
chosen by an adversary. 
Let the data domain $\cX=\bc{-1,1}$ and consider the distribution $\cD$ on $\cX$ as follows:
\begin{align*}
    x = \begin{cases}
    1 &\text{with probability} \ \ \frac{1+vp}{2} \\
    -1 &\text{with probability} \ \  \frac{1-vp}{2} \\
    \end{cases}
\end{align*}
Note that $\mathbb{E}[x] = vp$.
Consider the loss function $\lossf(w;x)$ as
\begin{align*}
    \lossf(w;x) = \frac{\lip}{2}wx + \frac{\smooth}{2} \Delta(w)
\end{align*}
where $\Delta$ is the Huber regularization function, defined as,
\begin{align*}
\Delta(w) = \begin{cases}
\abs{w}^2 & \text{if} \ \abs{w}\leq \frac{\lip}{2\smooth}\\
\frac{\lip\abs{w}}{\smooth} -  \frac{\lip^2}{4\smooth^2}& \text{otherwise}
\end{cases}
\end{align*}
Note that the loss function $w\mapsto \lossf(w;x)$ is convex, $\lip$-Lipschitz and $\smooth$-smooth in $\bbR^d$, for all $x$.
The population risk function is, %
\begin{align*}
     \lossF(w;\cD) = \frac{\lip}{2}wpv + \frac{\smooth}{2}\Delta(w)
\end{align*}

Let $\out$ be output some algorithm given $n$ i.i.d. samples from $\cD$. Consider two cases:
\paragraph{Case 1: $\abs{\out}> \frac{\lip}{2\smooth}$:} The gradient norm in this case is
\begin{align*}
    \abs{\nabla \lossF(\out;\cD)}^2 &= \abs{\frac{\lip}{2}vp + \frac{\lip\out}{2\abs{\out}}}^2 \\
    & = \frac{\lip^2p^2}{4} + \frac{\lip^2}{4} + \frac{\lip^2}{2 \abs{\out}}vp{\out}\\
    & \geq \frac{\lip^2}{4} - \frac{\lip^2}{2}p\\
    &  = \frac{\lip^2}{4} - \frac{\lip^2}{8\sqrt{n}}\\
    & \geq \frac{\lip^2}{8}
\end{align*}
where
the first inequality follows since $v\frac{\out}{\abs{\out}}\geq -1$,
the third equality follows by setting $p=\frac{1}{\sqrt{16n}}$ and the second inequality follows since $n\geq 1$. %
We therefore have that $\mathbb{E}\abs{\nabla \lossF(\out;\cD)}\geq \frac{\lip}{2\sqrt{2}}$.

\paragraph{Case 2: $\abs{\out}\leq \frac{\lip}{2\smooth}$:} In this case, the gradient norm is,
\begin{align*}
    \abs{\nabla \lossF(\out;\cD)}^2 = \abs{\frac{\lip}{2}vp +\smooth\out}^2
\end{align*}
Suppose there exists 
an algorithm with output $\out$, which, with $n$ samples guarantees that $\mathbb{E}\abs{\nabla {\lossF}(\out;\cD)} < o\br{\frac{\lip}{\sqrt{n}}}$. Then from Markov's inequality, with probability at least $0.9$, we have that $\abs{\nabla {\lossF}(\out;\cD)}^2 < o\br{\frac{\lip^2}{n}}$. Let 
$\tilde w = -\frac{2\smooth\out}{\lip}$
, then we have that
with probability at least $0.9$,
\begin{align*}
    \abs{\nabla {\lossF}(\out;\cD)}^2 \leq o\br{\frac{\lip^2}{n}} \iff \abs{vp - \tilde w }^2 < o\br{\frac{1}{n}}
\end{align*}

This contradicts the well-known bias estimation lower bounds, with $p=\frac{1}{\sqrt{16n}}$, using Le Cam's method (\cite{duchi2016lecture}, Example 7.7), hence 
$\mathbb{E}\abs{\nabla {\lossF}(\out;\cD)}\geq \Omega\br{\frac{\lip}{\sqrt{n}}}$.
Combining the two cases finishes the proof.
\remove{
\enayat{old below}
Suppose there exists an algorithm the output $\out$ of which, with $n$ samples guarantees that $\mathbb{E}\norm{\nabla {\lossF}(\out;\cD)}^2 < \frac{\lip^2}{n}$, then define $\tilde w =- \frac{2{\smooth}\out}{\lip}$, we have,
\begin{align*}
    \mathbb{E}\norm{\nabla {\lossF}(\out;\cD)}^2 \leq \frac{\lip^2}{n} \iff \mathbb{E}\abs{vp - \tilde w }^2 < \frac{4}{n}
\end{align*}
This contradicts the well-known bias estimation lower bounds using Le Cam's method (\cite{duchi2016lecture}, Example 7.7), hence $\mathbb{E}\abs{\nabla {\lossF}(\out;\cD)}\geq \frac{\lip}{\sqrt{n}}$. Combining the two cases finishes the proof.
\enayat{OLD BELOW}
\cnote{Can we remove this already?}
\enayat{removed}
We construct a hard instance in $d=1$.
Let $p$ be a parameter to be set later
and let $v\sim \text{Unif}\br{\bc{-1,1}}$.
Let the data space $\cX=\bc{-1,1}$ and consider the distribution $\cD$ on $\cX$ as follows:
\begin{align*}
    x = \begin{cases}
    1 &\text{with probability} \ \ \frac{1+vp}{2} \\
    -1 &\text{with probability} \ \  \frac{1-vp}{2} \\
    \end{cases}
\end{align*}
Note that $\mathbb{E}[x] = pv$.
Consider the one dimensional GLM loss:
\begin{align*}
    \lossf(w;x) = \frac{\lip}{2}wx + \frac{{\smooth}}{2}\br{ \mathbbm{1}_{\abs{wx}\leq \frac{\lip}{2{\smooth}}} w^2x^2 + \mathbbm{1}_{\abs{wx}>\frac{\lip}{2{\smooth}}}\br{\frac{\lip}{{\smooth}}wx - \frac{\lip^2}{4{\smooth}^2}}}.
\end{align*}
Note that the function above is 
convex,
$\lip$-Lipschitz and ${\smooth}$-smooth. The population risk is,
\begin{align*}
    {\lossF}(w;\cD) &= \frac{\lip}{2}wvp + 
    \frac{{\smooth}}{2} \mathbbm{1}_{\abs{w} \leq \frac{\lip}{2{\smooth}}} w^2
    +
    \frac{{\smooth}(1+vp)}{4}\br{\frac{\lip}{{\smooth}}w-\frac{\lip^2}{4{\smooth}^2}}\mathbbm{1}_{\abs{w} > \frac{\lip}{2{\smooth}}} \\&+  \frac{{\smooth}(1-vp)}{4} \br{-\frac{\lip}{{\smooth}}w-\frac{\lip^2}{4{\smooth}^2}}\mathbbm{1}_{\abs{w} > \frac{\lip}{2{\smooth}}} \\
    & = \frac{\lip}{2}wvp + 
    \frac{{\smooth}}{2} \mathbbm{1}_{\abs{w} \leq \frac{\lip}{2{\smooth}}} w^2
    -\frac{\lip^2}{8{\smooth}}\mathbbm{1}_{\abs{w} > \frac{\lip}{2{\smooth}}}
    + \frac{\lip vpw}{2}\mathbbm{1}_{\abs{w} > \frac{\lip}{2{\smooth}}} \\
\end{align*}
Let $\out$ be output some algorithm given $n$ i.i.d. samples from $\cD$. Consider two cases:
\paragraph{Case 1: } $\abs{\out} > \frac{\lip}{2{\smooth}}$: The gradient in this case is,
\begin{align*}
    \nabla {\lossF}(\out;\cD) = \frac{\lip vp}{2} + \frac{\lip vp}{2} = \lip vp
\end{align*}
Therefore, with the setting of $p = \frac{1}{\sqrt{n}}$, we have,
\begin{align*}
    \abs{\nabla {\lossF}(\out;\cD)}   = \abs{\lip vp} = \frac{\lip}{\sqrt{n}}.
\end{align*}
\paragraph{Case 2: } $\abs{\out} \leq  \frac{\lip}{2{\smooth}}$:
\begin{align*}
    \nabla {\lossF}(\out;\cD) = \frac{\lip vp}{2} + {\smooth}\out
\end{align*}
Suppose there exists 
an algorithm the output $\out$ of which, with $n$ samples guarantees that $\mathbb{E}\norm{\nabla {\lossF}(\out;\cD)} < o\br{\frac{\lip}{\sqrt{n}}}$. Then from Markov's inequality, with probability at least $0.9$, we have that $\norm{\nabla {\lossF}(\out;\cD)}^2 < o\br{\frac{\lip^2}{n}}$. Let $\tilde w =- \frac{2{\smooth}\out}{\lip}$, then we have that
with probability at least $0.9$,
\begin{align*}
    \norm{\nabla {\lossF}(\out;\cD)}^2 \leq o\br{\frac{\lip^2}{n}} \iff \abs{vp - \tilde w }^2 < o\br{\frac{1}{n}}
\end{align*}
This contradicts the well-known bias estimation lower bounds using Le Cam's method (\cite{duchi2016lecture}, Example 7.7), hence $\mathbb{E}\abs{\nabla {\lossF}(\out;\cD)}\geq \Omega\br{\frac{\lip}{\sqrt{n}}}$.
Combining the two cases finishes the proof.
\cnote{Minor thing: These lower bounds include some small constant factors, so I guess the contradiction is for $\mathbb{E}\norm{\nabla {\lossF}(\out;\cD)}^2 = o(\frac{\lip^2}{n})$, and the conclusion would be its negation, $\mathbb{E}\norm{\nabla {\lossF}(\out;\cD)}^2 = \Omega(\frac{\lip^2}{n})$. Does this suffice to also say $\mathbb{E}\norm{\nabla {\lossF}(\out;\cD)} = \Omega(\frac{\lip}{\sqrt n})$?}
\enayat{I changed the proof as you said and, and switched from expectation to constant probability lower bound. However I cannot find a reference to this now. I recall reading somewhere that the $1-\delta$ confidence lower bound is $\frac{\log{1/\delta}}{n}$, but cannot find it. }\cnote{I guess this is a direct consequence of Fano's inequality, but I couldn't find an explicit ref either.Perhaps in the lower bounds for SCO paper (Agarwal, Bartlett, Ravikumar, Wainwright)}
\enayat{I will try to add more details, may be after the deadline today unless you think that there is something that can break here}
}
\end{proof}

\section{Missing Results for Empirical Stationary Points}

\subsection{Private Spiderboost} \label{app:spiderboost}

The following lemma largely follows from the analysis in \cite{spiderboost}. We present a full proof below for completeness. 
\begin{lemma} \label{lem:spiderboost-convergence}
Let the conditions of Lemma \ref{lem:spider-grad-err} be satisfied.
Let $\eta \leq \frac{1}{2\smooth}$ and $q \leq O\br{\frac{1}{\tau_2^2 \eta^2}}$. Then the output of Private SpiderBoost, $\out$ satisfies %
\begin{align}\label{eq:spider-acc}
    \ex{}{\norm{\nabla F(\out;S)}} = O\br{\sqrt{\frac{F_0}{\eta T}} + \tau_1}.
\end{align}
\end{lemma}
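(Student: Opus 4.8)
The plan is to follow the standard SpiderBoost descent-lemma argument, but accounting for the gradient-estimation error controlled by Lemma~\ref{lem:spider-grad-err}. First I would invoke $\smooth$-smoothness of $F(\cdot;S)$ along the step $w_{t+1}=w_t-\eta\grad_t$ to get
\begin{align*}
F(w_{t+1};S)\le F(w_t;S) - \eta\ip{\nabla F(w_t;S)}{\grad_t} + \frac{\smooth\eta^2}{2}\norm{\grad_t}^2.
\end{align*}
Writing $\nabla F(w_t;S) = \grad_t + (\nabla F(w_t;S)-\grad_t)$ and using $\ip{\grad_t}{\nabla F(w_t;S)-\grad_t}\le \frac12\norm{\grad_t}^2 + \frac12\norm{\grad_t-\nabla F(w_t;S)}^2$ (or a cleaner split), one obtains a bound of the form
\begin{align*}
F(w_{t+1};S)\le F(w_t;S) - \frac{\eta}{2}\norm{\nabla F(w_t;S)}^2 - \Big(\frac{\eta}{2}-\frac{\smooth\eta^2}{2}\Big)\norm{\grad_t}^2 + \frac{\eta}{2}\norm{\grad_t-\nabla F(w_t;S)}^2,
\end{align*}
and since $\norm{w_{t+1}-w_t}^2=\eta^2\norm{\grad_t}^2$ and $\eta\le \frac{1}{2\smooth}$, the $\norm{\grad_t}^2$ coefficient is at least $\frac{\eta}{4}$, giving us a term $-\frac{1}{4\eta}\norm{w_{t+1}-w_t}^2$ to spend.

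Next I would take expectations and sum over $t=0,\dots,T-1$. The error term sums to $\frac{\eta}{2}\sum_{t}\ex{}{\norm{\grad_t-\nabla F(w_t;S)}^2}$, and here I apply Lemma~\ref{lem:spider-grad-err}: within each phase, $\ex{}{\norm{\grad_t-\nabla F(w_t;S)}^2}\le \tau_2^2\sum_{k=s_t+1}^t \ex{}{\norm{w_k-w_{k-1}}^2} + \tau_1^2$. Summing this over a phase of length $q$ yields at most $\tau_2^2 q\sum_{k\in\text{phase}}\ex{}{\norm{w_k-w_{k-1}}^2} + q\tau_1^2$ (bounding $t-s_t\le q$ in the double sum). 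So over all $T$ steps, $\sum_t\ex{}{\norm{\grad_t-\nabla F(w_t;S)}^2}\le \tau_2^2 q\sum_t\ex{}{\norm{w_t-w_{t-1}}^2} + T\tau_1^2$. The key balancing: the condition $q\le O(1/(\tau_2^2\eta^2))$ ensures $\frac{\eta}{2}\cdot\tau_2^2 q \le \frac{1}{8\eta}$ (up to the universal constant hidden in $O(\cdot)$), so this term is absorbed into the $-\frac{1}{4\eta}\sum_t\norm{w_{t+1}-w_t}^2$ slack from the descent step, leaving something like $-\frac{1}{8\eta}\sum_t\ex{}{\norm{w_{t+1}-w_t}^2}\le 0$ which we simply drop.

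After telescoping, $\frac{\eta}{2}\sum_{t=0}^{T-1}\ex{}{\norm{\nabla F(w_t;S)}^2}\le F(w_0;S)-\ex{}{F(w_T;S)} + \frac{\eta T\tau_1^2}{2}\le F_0 + \frac{\eta T\tau_1^2}{2}$, using $F(w_0;S)-\min_w F(w;S)=F_0$. Dividing by $\eta T/2$ gives $\frac{1}{T}\sum_{t}\ex{}{\norm{\nabla F(w_t;S)}^2}\le \frac{2F_0}{\eta T} + \tau_1^2$. Since $\out$ is drawn uniformly from $w_1,\dots,w_T$, Jensen's inequality gives $\ex{}{\norm{\nabla F(\out;S)}}\le \sqrt{\ex{}{\norm{\nabla F(\out;S)}^2}}\le \sqrt{\frac{2F_0}{\eta T}+\tau_1^2} = O\big(\sqrt{F_0/(\eta T)} + \tau_1\big)$, as claimed. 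The main obstacle — really just bookkeeping — is tracking the universal constants so that the $\tau_2^2 q$ error term is genuinely dominated by the negative $\norm{w_{t+1}-w_t}^2$ terms; this is exactly what the hypothesis $q\le O(1/(\tau_2^2\eta^2))$ is calibrated to guarantee, so no genuine difficulty arises, only careful choice of constants.
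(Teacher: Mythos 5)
Your proof is correct and reaches the stated bound by a slightly different, and in fact shorter, route than the paper's. Both arguments share the same skeleton: the smoothness descent step, Lemma \ref{lem:spider-grad-err} to control $\ex{}{\norm{\grad_t-\nabla F(w_t;S)}^2}$, the observation that each $\norm{w_k-w_{k-1}}^2$ appears at most $q$ times in the phase-wise double sum, absorption of the resulting $\tau_2^2 q$ term using $q\le O\br{\frac{1}{\tau_2^2\eta^2}}$ and $\eta\le\frac{1}{2\smooth}$, telescoping, and Jensen applied to the uniformly random output. The difference is in which quantity is tracked through the telescoping. You expand $-\eta\ip{\nabla F(w_t;S)}{\grad_t}$ so as to retain the term $-\frac{\eta}{2}\norm{\nabla F(w_t;S)}^2$ in the descent inequality, which lets you bound $\frac{1}{T}\sum_t\ex{}{\norm{\nabla F(w_t;S)}^2}$ directly in a single pass. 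The paper instead drops that term, first obtains a bound only on the average estimator norm $\frac{1}{T}\sum_t\ex{}{\norm{\grad_t}^2}$ (its inequality \eqref{eq:vrg-norm-bound}), and then needs a second stage --- the triangle inequality $\norm{\nabla F(w_{i^*};S)}^2\le 2\norm{\nabla F(w_{i^*};S)-\grad_{i^*}}^2+2\norm{\grad_{i^*}}^2$ together with a second application of Lemma \ref{lem:spider-grad-err} --- to convert the estimator bound back into a bound on the true gradient at the output. Your one-stage version avoids that conversion entirely and yields the same $O\br{\sqrt{\fnot/(\eta T)}+\tau_1}$ rate; the only bookkeeping caveats are tracking the constant hidden in the hypothesis on $q$ so that $\frac{\eta\tau_2^2 q}{2}\le\frac{1}{4\eta}$, and the harmless index mismatch between the sum over $t=0,\dots,T-1$ and the output drawn from $w_1,\dots,w_T$, both of which the paper's own proof handles equally informally.
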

\begin{proof}
In the following, for any $t\in[T]$, let $s_t = \floor{\frac{t}{q}} q$ (i.e. the index corresponding to the start of the phase containing iteration $t$).

By a standard analysis for smooth functions we have (recalling that $\nabla_t$ is an unbiased estimate of $\nabla F(w_t;S)$ for any $t\in[T]$)
\begin{align*}
    \elossS{w_{t+1}} \leq \elossS{w_t} + \frac{\eta}{2}\norm{\nabla \elossS{w_t} - \grad_t}^2 - \br{\frac{\eta}{2} - \frac{\smooth \eta^2}{2}}\norm{\grad_t}^2.    
\end{align*}
Taking expectation we have the following manipulation using the update rule of Algorithm \ref{alg:spider}
\begin{align*}
    \ex{}{\elossS{w_{t+1}} - \elossS{w_{t}}} &\leq \frac{\eta}{2}\ex{}{\norm{\nabla \elossS{w_t} - \grad_t}^2} - \br{\frac{\eta}{2} - \frac{\smooth \eta^2}{2}}\ex{}{\norm{\grad_t}^2} \\
    &\leq \frac{\eta\tau_2^2}{2}\sum_{k=s_t+1}^t \ex{}{\norm{w_{k+1} - w_k}^2} + \frac{\eta}{2}\ex{}{\norm{\grad_{s_t} - \elossS{w_{s_t}}}^2} \\&\quad - \br{\frac{\eta}{2} - \frac{\smooth \eta^2}{2}}\ex{}{\norm{\grad_t}^2} \\
    &\leq \frac{\eta^3\tau_2^2}{2}\sum_{k=s_t+1}^t \ex{}{\norm{\grad_k}^2} + \frac{\eta\tau_1^2}{2} - \br{\frac{\eta}{2} - \frac{\smooth \eta^2}{2}}\ex{}{\norm{\grad_t}^2}, \\
\end{align*}

where the second inequality follows from Lemma \ref{lem:spider-grad-err} and the last inequality follows from the update rule. Note that if $t=s_t$ the sum is empty. 
Summing over a given phase we have %
\begin{align}
    \ex{}{\elossS{w_{t+1}} - \elossS{w_{s_t}}} &\leq \frac{\eta^3\tau_2^2}{2}\sum_{k=s_t}^t \sum_{j=s_t+1}^k \ex{}{\norm{\grad_j}^2} + \sum_{k=s_t}^t \textstyle\left[ \frac{\eta\tau_1^2}{2} - \br{\frac{\eta}{2} - \frac{\smooth \eta^2}{2}}\ex{}{\norm{\grad_k}^2}\right] \nonumber\\
    &\leq \frac{\eta^3\tau_2^2q}{2}\sum_{k=s_t}^t \ex{}{\norm{\grad_k}^2} + \sum_{k=s_t}^t \textstyle\left[ \frac{\eta\tau_1^2}{2} - \br{\frac{\eta}{2} - \frac{\smooth \eta^2}{2}}\ex{}{\norm{\grad_k}^2}\right] \nonumber \\
    &= -\sum_{k=s_t}^t\Bigg[ \underbrace{\br{\frac{\eta}{2} - \frac{\smooth \eta^2}{2} -\frac{\eta^3\tau_2^2q}{2} }}_{A}\ex{}{\norm{\grad_k}^2} - \frac{\eta\tau_1^2}{2}\Bigg], \label{eq:phase-loss}
\end{align}
where the second inequality comes from the fact that each gradient appears at most $q$ times in the sum. 
We now sum over all phases.
Let $P=\bc{p_0,p_1,...,} = \bc{0,q,2q,...,\floor{\frac{T-1}{q}}q,T}$.
We have
\begin{align*}
    \ex{}{\elossS{w_{T}} - \elossS{w_{0}}} \leq \sum_{i=1}^{|P|} \ex{}{\elossS{w_{p_i}} - \elossS{w_{p_{i-1}}}} \\
    \leq  -\sum_{t=0}^T A \, \ex{}{\norm{\grad_k}^2} + \frac{T\eta\tau_1^2}{2}.
\end{align*}
Rearranging the above yields
\begin{align}
    \frac{1}{T} \sum_{t=0}^T \ex{}{\norm{\grad_k}^2} \leq \frac{\fnot}{TA} + \frac{\eta\tau_1^2}{2A}. \label{eq:vrg-norm-bound}
\end{align}
Now let $i^*$ denote the index of $\out$ selected by the algorithm. Note that 
\begin{align} \label{eq:triangle-grad-bound}
    \ex{}{\norm{\nabla \elossS{w_{i^*}}}^2} \leq 2\ex{}{\norm{\nabla \elossS{w_{i^*}} - \grad_{i^*}}^2} + 2\ex{}{\norm{\grad_{i^*}}^2}.
\end{align}
The second term above can be bounded via inequality \eqref{eq:vrg-norm-bound}.
To bound the first term we have by Lemma \ref{lem:spider-grad-err} that 
\begin{align*}
    \ex{}{\norm{\grad_{i^*} - \nabla \elossS{w_{i^*}}}^2} &\leq \tau_2^2\sum_{k=s_{t^*}+1}^{t^*} \ex{}{\norm{w_k - w_{k-1}}^2} + \tau_1^2 \\
    &= \eta^2\tau_2^2\sum_{k=s_{t^*}+1}^{t^*} \ex{}{\norm{\grad_k}^2} + \tau_1^2 \\
    &\leq \frac{q\eta^2\tau_2^2}{T}\sum_{k=0}^{T} \ex{}{\norm{\grad_k}^2} + \tau_1^2 \\
    &\leq \frac{\tau_2^2 \eta^2 q \fnot}{TA} + \frac{\eta^3q\tau_2^2}{2A}\tau_1^2 + \tau_1^2,
\end{align*}
where the last inequality comes from inequality \eqref{eq:vrg-norm-bound}  %
and the expectation over $i^*$. Plugging into inequality \eqref{eq:triangle-grad-bound} one can obtain
\begin{align}\label{eq:A-bound}
    \ex{}{\norm{\nabla \elossS{w_{i^*}}}^2} \leq \frac{2\fnot}{TA}(1+\tau_2^2\eta^2q) + \br{\frac{\eta}{A} + 2 + \frac{\tau_2^2\eta^3q}{A}}\tau_1^2.
\end{align}
Now recall 
$A = \frac{\eta}{2} - \frac{\smooth\eta^2}{2} - \frac{\eta^3 \tau_2^2 q}{2}$.
Since $q \leq O\br{\frac{1}{\tau_2^2 \eta^2}}$ and $\eta \leq \frac{1}{2\smooth}$ we have $A=\Theta(\eta)$. Thus plugging into inequality \eqref{eq:A-bound} and again using the fact that $q \leq O\br{\frac{1}{\tau_2^2 \eta^2}}$ we have
\begin{align*}
    \ex{}{\norm{\nabla \elossS{w_{i^*}}}^2} &= O\br{\frac{\fnot}{T\eta}(1+\tau_2^2\eta^2q) + \br{3 + \frac{\tau_2^2\eta^3q}{A}}\tau_1^2} 
    = O\br{\frac{\fnot}{T\eta} + \tau_1^2}.
\end{align*}
The claim then follows from the Jensen inequality.
\end{proof}

For privacy, we will rely on the moments accountant analysis of \cite{Abadi16}. This roughly gives the same analysis as using privacy amplification via subsampling and the advanced composition theorem, but allows for improvements in log factors. We provide the following theorem implicit in \cite{Abadi16} Theorem 1 below. The same result can be obtained using the analysis for \cite{KLL21} Theorem 3.1 which uses the truncated central differential privacy guarantees of the Gaussian mechanism \cite{truncatedCDP}.
\begin{theorem}[\label{thm:moment-accountant}\cite{Abadi16,KLL21}]
Let $\varepsilon,\delta \in (0,1]$ and $c$ be a universal constant. Let $D\in\cY^n$ be a dataset over some domain $\cY$, and let $h_1,...,h_T:\cY\mapsto\re^d$ be a series of (possibly adaptive) queries such that for any $y\in\cY$, $t\in[T]$, $\norm{h_t(y)}_2 \leq \lambda_t$. Let $\sigma_t = \frac{c\lambda_t\sqrt{\log{1/\delta}}}{\varepsilon}\max\bc{\frac{1}{b},\frac{\sqrt{T}}{n}}$. Then the algorithm which samples batches of size $B_1,..,B_t$ of size $b$ uniformly at random and outputs $\frac{1}{n}\sum_{y\in B_t}h_t(y) + g_t$ for all $t\in[T]$ where $g_t \sim \cN(0,\mathbb{I_d}\sigma_t^2)$, is $(\varepsilon,\delta)$-DP.
\end{theorem}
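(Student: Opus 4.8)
The plan is to derive the statement as a black-box consequence of standard privacy accounting for the subsampled Gaussian mechanism, so I would exhibit the reduction rather than re-run the accounting from scratch. First, by post-processing it suffices to show that the mechanism $\mathcal{M}$ releasing the sequence $\big(\tfrac1n\sum_{y\in B_t}h_t(y)+g_t\big)_{t\in[T]}$ is $(\varepsilon,\delta)$-DP, since the algorithm in the statement is a (possibly adaptive) function of this sequence. Next, conditioning on the batch $B_t$, replacing a single point of the dataset changes at most one summand of $\sum_{y\in B_t}h_t(y)$, each of norm at most $\lambda_t$; hence the conditional $\ell_2$-sensitivity of $y\mapsto\tfrac1n\sum_{y\in B_t}h_t(y)$ is at most $2\lambda_t/n$, so step $t$ is a Gaussian mechanism with noise multiplier $\mu_t:=n\sigma_t/\lambda_t=\tfrac{c\sqrt{\log(1/\delta)}}{\varepsilon}\max\{n/b,\sqrt{T}\}$, preceded by subsampling at rate $q=b/n$.

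The analysis then splits by which term attains the maximum in $\sigma_t$. If $\sqrt{T}\ge n/b$ (large batches), I would not use amplification: each step is the Gaussian mechanism with $\mu_t=\tfrac{c\sqrt{T\log(1/\delta)}}{\varepsilon}$, which is $\tfrac{2}{\mu_t^2}$-zCDP; composing $T$ of them yields $\tfrac{2T}{\mu_t^2}=O\!\big(\tfrac{\varepsilon^2}{c^2\log(1/\delta)}\big)$-zCDP, which converts to $\big(O(\varepsilon/c),\delta\big)$-DP, and taking $c$ a large enough universal constant gives $(\varepsilon,\delta)$-DP. If instead $n/b>\sqrt{T}$ (small batches), I would invoke amplification-by-subsampling for R\'enyi DP: the subsampled-Gaussian step is $\big(\alpha,O(q^2\alpha/\mu_t^2)\big)$-RDP for orders $\alpha$ up to $O(\mu_t^2/q^2)$, so $T$-fold adaptive RDP composition followed by conversion to approximate DP at the optimal order $\alpha\asymp\mu_t\sqrt{\log(1/\delta)}/(q\sqrt{T})$ gives $\big(\varepsilon',\delta\big)$-DP with $\varepsilon'=O\!\big(q\sqrt{T\log(1/\delta)}/\mu_t\big)=O\!\big(\tfrac{\varepsilon}{c}\cdot\tfrac{b^2\sqrt{T}}{n^2}\big)\le O(\varepsilon/c)$, the last step using $b\le n/\sqrt{T}\Rightarrow b^2\sqrt{T}/n^2\le 1/\sqrt{T}\le 1$; again taking $c$ large closes this case. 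Equivalently, one can quote \cite[Theorem 1]{Abadi16} directly for the small-batch regime (subject to its mild constraint $\varepsilon=O(q^2T)$), or \cite[Theorem 3.1]{KLL21}, whose tCDP-based argument removes that constraint and covers both regimes uniformly.

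I expect the only delicate points to be: (i) computing the per-step sensitivity for the right neighboring relation (replacement, which produces the factor $2$ that gets absorbed into $c$) and checking that the sampling scheme in the statement — independent uniform batches of size exactly $b$ — is one for which a subsampled-Gaussian amplification bound is available (it is, e.g.\ via the without-replacement RDP amplification results); and (ii) carrying the factor $\max\{n/b,\sqrt{T}\}$ through the arithmetic so that the inequality $b^2\sqrt{T}/n^2\le 1$ is used only in the regime where it actually holds. Beyond these, everything is routine bookkeeping with standard RDP/zCDP composition and conversion lemmas, so I do not anticipate a substantive obstacle.
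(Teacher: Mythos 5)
Your overall reduction is sound, and your large-batch case ($\sqrt{T}\ge n/b$) is a clean, self-contained zCDP composition argument; but note that the paper does not re-derive the accounting at all. Its ``proof'' is exactly the citation you offer as a fallback --- Theorem 1 of \cite{Abadi16} (equivalently Theorem 3.1 of \cite{KLL21}) --- plus one small trick for the regime your second case covers: when $T\le n^2/b^2$, it pads the algorithm to a meta-algorithm running $T'=n^2/b^2$ steps and releases only the first $T$ outputs, so that Abadi's hypotheses hold at $T'$ and the noise required at $T'$ coincides with the $1/b$ branch of the max; post-processing then transfers privacy to the truncated run. This padding trick is what lets the paper avoid any direct amplification analysis in the small-batch regime, whereas you attempt that analysis head-on.

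Two points in your small-batch case need care. First, the amplification lemma you invoke --- that the subsampled Gaussian is $(\alpha, O(q^2\alpha/\mu_t^2))$-RDP for all $\alpha = O(\mu_t^2/q^2)$ --- is not a standard off-the-shelf statement; the standard versions (e.g.\ Lemma 3 of \cite{Abadi16}) carry a condition of the form $q\mu\lesssim 1$, which fails badly here since $q\mu_t = c\sqrt{\log(1/\delta)}/\varepsilon \gg 1$. The result that actually covers this regime is the tCDP subsampling bound underlying Theorem 3.1 of \cite{KLL21}, so your ``equivalently, quote KLL21'' is not an optional alternative but the load-bearing step. Second, you describe Abadi's constraint $\varepsilon = O(q^2T)$ as mild, but in the small-batch regime $q^2T = b^2T/n^2 < 1$, so this constraint is precisely what fails there --- it is the reason the paper introduces the padding argument in the first place, and quoting \cite{Abadi16} directly in that regime is blocked without it. With either the padding trick or the tCDP citation substituted for your unqualified RDP lemma, your argument closes and reaches the same conclusion as the paper's.
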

We note that the original statement of the Theorem in \cite{Abadi16} requires $\sigma_t \geq \frac{c\lambda_t\sqrt{T \log{1/\delta}}}{n\varepsilon}$ and $T \geq \frac{n^2\varepsilon}{b^2}$ (or $T \geq \frac{n^2}{b^2}$ so long as $\varepsilon \leq 1$). However, in the case where $T \leq \frac{n^2}{b^2}$, one can simply consider the meta algorithm that does run $T' = \frac{n^2}{b^2}$ steps and only outputs the first $T$ results. This algorithm is at least as private as the algorithm which outputs every result, and under the setting $T'$ the scale of noise is $\frac{8\lambda_t\sqrt{\log{1/\delta}}}{b\varepsilon}$.

We can now prove the main result for Private Spiderboost, restated below. We note that the setting of $b_2$ given below will always be less than $n$ under required conditions. More details are provided in the proof below. 
\begin{theorem}[Private Spiderboost]
Let $n \geq \max\bc{\frac{(\lip\varepsilon)^2}{\fnot \smooth d\log{1/\delta}}, \frac{\sqrt{d}\max\bc{1,\sqrt{\smooth\fnot}/\lip}}{\varepsilon}}$.
Private Spiderboost run with parameter settings $\eta = \frac{1}{2\smooth}$, 
$b_1 = n$,
$b_2 = \floor{\max\bc{\br{\frac{\lip n\varepsilon}{\sqrt{\fnot \smooth d\log{1/\delta}}}}^{2/3}, \frac{(\lip nd\log{1/\delta})^{1/3}}{(\smooth F_0)^{1/6}\varepsilon^{2/3}}}}$,
$T = \floor{\max\bc{\br{\frac{(F_0 \smooth)^{1/4}n\varepsilon}{\sqrt{\lip d\log{1/\delta}}}}^{4/3}, \frac{n\varepsilon}{ \sqrt{d\log{1/\delta}}}}}$, and
$q = \floor{\frac{n^2\varepsilon^2}{\smooth^2 T d \log{1/\delta}}}$
satisfies
\begin{align*}
    \ex{}{\norm{\nabla F(\tilde{w})}} 
    &= O\br{\br{\frac{\sqrt{\fnot\smooth\lip d \log{1/\delta}}}{n\varepsilon}}^{2/3} + \frac{\sqrt{d\log{1/\delta}}\lip}{n\varepsilon} }
\end{align*}
is $(\varepsilon,\delta)$-DP and has oracle complexity $\tilde{O}\br{\max\bc{\br{\frac{n^{5/3}\varepsilon^{2/3}}{d^{1/3}}}, \br{\frac{n\varepsilon}{\sqrt{d}}}^2}}$.
\end{theorem}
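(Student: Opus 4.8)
The plan is to treat privacy, the accuracy bound, and the oracle complexity separately, while keeping a running check that the stated choices of $T,q,b_1,b_2$ are admissible --- in particular $1\le q\le O(b_2)$, $b_2\le n$ and $T\ge 1$ --- which is exactly what the two lower bounds on $n$ in the hypothesis are there to guarantee.

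\emph{Privacy.} The dataset enters Algorithm~\ref{alg:spider} only through the $\lceil T/q\rceil$ megabatch queries of line~\ref{line:grad_oracle} and the at most $T$ gradient-variation queries of line~\ref{line:gv_oracle}, so I would privatize each of these two streams and compose. In line~\ref{line:grad_oracle} every summand $\nabla f(w_t;x)$ has norm at most $\lip$, so the query has $\ell_2$-sensitivity at most $2\lip/b_1$, and $\sigma_1$ is exactly the noise multiplier Theorem~\ref{thm:moment-accountant} prescribes for $\lceil T/q\rceil$ subsampled queries of that sensitivity. For line~\ref{line:gv_oracle} the key observation --- and the reason the injected noise is scaled by $\smooth\norm{w_t-w_{t-1}}$ rather than by $\lip$ --- is that, conditioned on the already-released estimators $\grad_0,\dots,\grad_{t-1}$, the iterates $w_0,\dots,w_t$ are deterministic (they follow the fixed recursion $w_{k+1}=w_k-\eta\grad_k$), so $\norm{w_t-w_{t-1}}$ is measurable with respect to the previous outputs and may legitimately be used to calibrate the noise in the adaptive-composition model. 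By $\smooth$-smoothness the sensitivity of $\frac1{b_2}\sum_{x\in S_t}[\nabla f(w_t;x)-\nabla f(w_{t-1};x)]$ is at most $\frac2{b_2}\min\bc{\smooth\norm{w_t-w_{t-1}},\,2\lip}$, and I would check that the variance actually used, $\min\bc{\sigma_2^2\norm{w_t-w_{t-1}}^2,\hat\sigma_2^2}$, always dominates the square of the multiplier required by Theorem~\ref{thm:moment-accountant} for this sensitivity: when $\smooth\norm{w_t-w_{t-1}}\le 2\lip$ one has $\sigma_2^2\norm{w_t-w_{t-1}}^2\le\hat\sigma_2^2$, so the first term is active and tight against the smoothness-based sensitivity, while otherwise $\hat\sigma_2^2$ is active and tight against the $\lip$-based bound. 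Hence each stream is $(\varepsilon,\delta)$-DP (with the universal constant $c$ large enough to also absorb the constant-factor loss of composing two mechanisms of half the budget), and composition gives $(\varepsilon,\delta)$-DP.

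\emph{Accuracy.} Here I would invoke Lemma~\ref{lem:spiderboost-convergence}. As recorded in Section~\ref{sec:efficient-esp}, unbiasedness of the minibatch estimators lets us take $\tau_1^2=\frac{\lip^2}{b_1}+d\sigma_1^2$ and $\tau_2^2=\frac{\smooth^2}{b_2}+d\sigma_2^2$, the $d\sigma^2$ summands being the expected squared norms of the injected Gaussians (the line-\ref{line:gv_oracle} noise has variance scale at most $\sigma_2^2\norm{w_t-w_{t-1}}^2$), and with $b_1=n$ the first summand of $\tau_1^2$ vanishes. The two hypotheses of Lemma~\ref{lem:spiderboost-convergence} are $\eta\le\frac1{2\smooth}$, which holds with equality, and $q\le O(1/(\tau_2^2\eta^2))=O(\smooth^2/\tau_2^2)$, which is precisely the inequality the choice $q=\lfloor n^2\varepsilon^2/(\smooth^2 Td\log(1/\delta))\rfloor$ (together with $q\le O(b_2)$) is engineered to satisfy; verifying it is the first genuine computation. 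The lemma then gives $\ex{}{\norm{\nabla F(\out;S)}}=O\br{\sqrt{\fnot/(\eta T)}+\tau_1}$, and the remaining step is to substitute $\eta=\frac1{2\smooth}$ and the chosen $T,q,b_1,b_2$ and verify that, in both regimes of the $\max$ defining $T$, the optimization error $\sqrt{2\smooth\fnot/T}$ and the megabatch-noise term $\tau_1=\sqrt d\,\sigma_1$ (with $\sigma_1$ of order $\frac{\lip\sqrt{d\log(1/\delta)}}{n\varepsilon}\sqrt{T/q}$) are both $O\br{\big[\tfrac{\sqrt{\fnot\smooth\lip d\log(1/\delta)}}{n\varepsilon}\big]^{2/3}+\tfrac{\lip\sqrt{d\log(1/\delta)}}{n\varepsilon}}$; the lower bounds on $n$ are used here to rule out the degenerate regime and to control cross terms.

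\emph{Oracle complexity and the main obstacle.} Algorithm~\ref{alg:spider} performs $\lceil T/q\rceil$ line-\ref{line:grad_oracle} evaluations of cost $b_1=n$ each and at most $T$ line-\ref{line:gv_oracle} evaluations of cost $2b_2$ each, a total of $\tilde O\br{nT/q+Tb_2}$ first-order oracle calls; plugging in the parameters, $nT/q$ reduces (ignoring $\fnot,\smooth,\lip$ factors) to $n^{5/3}\varepsilon^{2/3}/d^{1/3}$ and $Tb_2$ to $(n\varepsilon)^2/d$, which gives the claimed $\tilde O\br{\max\bc{n^{5/3}\varepsilon^{2/3}/d^{1/3},\,(n\varepsilon/\sqrt d)^2}}$. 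The hard part will be the joint tuning in the \emph{Accuracy} step: once $b_1,b_2,T,q$ are chosen the noise levels $\tau_1,\tau_2$ are forced by privacy, and one must take $q$ small enough to meet the phase-length constraint $q\le O(1/(\tau_2^2\eta^2))$ yet large enough that the megabatch noise, which scales like $\sqrt{T/q}$, stays below the target, while simultaneously choosing $T$ and $b_2$ so that the optimization error, the megabatch noise, and the gradient-variation noise are all balanced at the $2/3$-power rate --- with the secondary but equally fiddly task of confirming that these choices still respect $1\le q\le O(b_2)\le n$, which is where the hypothesis on $n$ is consumed.
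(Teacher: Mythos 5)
Your plan follows the paper's proof essentially step for step: privacy via the moments-accountant theorem applied separately to the two query streams with the smoothness-based sensitivity $O(\smooth\norm{w_t-w_{t-1}}/b_2)$ for the gradient-variation queries, accuracy via Lemma~\ref{lem:spiderboost-convergence} with $\tau_1^2=d\sigma_1^2$ (after $b_1=n$ kills the sampling term) and $\tau_2^2=\smooth^2/b_2+d\sigma_2^2$, the choice $q=\Theta(1/(\tau_2^2\eta^2))$, the balancing of $T$ against the two error terms, and the $nT/q+Tb_2$ oracle count --- and you correctly identify the admissibility constraints ($q\le O(b_2)\le n$, $q\ge 1$) as the place where the hypothesis on $n$ is consumed. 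The remaining work you defer is exactly the routine substitution the paper carries out, so the proposal is correct and takes the same route.
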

\begin{proof}
For privacy, we rely on the moment accountant analysis of the Gaussian mechanism as per Theorem \ref{thm:moment-accountant}. 
Note that each gradient estimate computed in line~\ref{line:grad_oracle} has elements with $\ell_2$-norm at most $\lip$, and this estimate is computed at most $\frac{T}{q}$ times. Similarly, for a gradient variation at step $t$ in line \ref{line:gv_oracle} we have norm bound $\smooth\norm{w_{t}-w_{t-1}}$, and have that at most $T$ such estimates are computed. As such, the scale of noise in both cases ensures the overall algorithm is $(\varepsilon,\delta)$-DP by Theorem \ref{thm:moment-accountant}.

We now prove the convergence result. 
To simplify notation in the following, we define $\bar{\alpha}=\frac{\sqrt{d\log{1/\delta}}}{n\epsilon}$.
If $b_1=n$ (full batch gradient), the conditions of Lemma \ref{lem:spider-grad-err} are satisfied with
$\tau_1^2 = O\br{\frac{\lip^2 T \bar{\alpha}^2}{q}}$ and 
$\tau_2^2 = O\br{\frac{\smooth^2}{b_2} + \smooth^2 T\bar{\alpha}^2}$ and some setting of $q$ 
so long as $T \geq q\frac{n^2}{b_1^2}=q$ and $T \geq \frac{n^2}{b_2^2}$.
Further, if $b_2 \geq \frac{1}{T \bar{\alpha}^2}$ then $\tau_2^2 =O\br{\smooth^2 T \bar{\alpha}^2}$. Thus the condition on $q$ in Lemma \ref{lem:spiderboost-convergence} is satisfied with $q = \frac{\smooth^2}{\tau_2^2} =  \frac{1}{T \bar{\alpha}^2}$ since $\eta = \frac{1}{2\smooth}$ 

Plugging into Eqn. \eqref{eq:spider-acc} we obtain
\begin{align}
    \ex{}{\norm{\nabla F(\tilde{w})}} 
    &= O\br{\sqrt{\frac{\fnot\smooth}{T}} + \frac{\lip\sqrt{T}\bar{\alpha}}{\sqrt{q}}} \nonumber \\
    &= O\br{\sqrt{\frac{\fnot\smooth}{T}} + \lip T\bar{\alpha}^2}. \label{eq:acc_wrt_T}
\end{align}
We now consider the setting of $T$. 
Since $q = \frac{1}{T \bar{\alpha}^2}$, it suffices to set $T \geq \frac{1}{\bar{\alpha}}$ to ensure $T \geq q$.
We now set
$T = \max\bc{\br{\frac{(\smooth F_0)^{1/4}}{\sqrt{\lip}\bar{\alpha}}}^{4/3}, \frac{1}{\bar{\alpha}}}$. Using Eqn. \eqref{eq:acc_wrt_T} above we have
\begin{align*} 
    \ex{}{\norm{\nabla F(\tilde{w})}} 
    &= O\br{\br{\sqrt{\fnot\smooth\lip}\bar{\alpha}}^{2/3} + \lip\bar{\alpha}}.
\end{align*}
The claimed rate now follows if there exists a valid setting for $b_2$ satisfying the previously stated conditions. 
The restrictions on the batch size implied by $T$ imply we need 
$b_2 \geq \frac{n}{\sqrt{T}}$ and thus it suffices to have  $b_2 \geq \frac{\lip^{1/3} n\bar{\alpha}^{2/3}}{(\smooth F_0)^{1/6}}$ to satisfy this condition since $T \geq \br{\frac{(\smooth F_0)^{1/4}}{\sqrt{\lip} \bar{\alpha}}}^{4/3}$.
We recall that for the setting of $q$ to be valid we also require $b_2 \geq \frac{1}{T \bar{\alpha}^2}$ and because $T \geq \br{\frac{(\smooth F_0)^{1/4}}{\sqrt{\lip}\bar{\alpha}}}^{4/3}$ it suffices that $b_2 \geq \br{\frac{\lip}{\sqrt{\fnot \smooth}\bar{\alpha}}}^{2/3}$.
Thus we need
$b_2 = \max\bc{\br{\frac{\lip}{\sqrt{\fnot \smooth}\bar{\alpha}}}^{2/3}, \frac{\lip^{1/3} n\bar{\alpha}^{2/3}}{(\smooth F_0)^{1/6}}}$.
Finally, we need $b_2 \leq n$ whenever $q \geq 1$. Note that by the setting of $q$ and $T$ we have
$q \leq \br{\frac{\lip}{\sqrt{\fnot \smooth}\bar{\alpha}}}^{2/3}$ and thus
$q\geq 1 \implies \br{\frac{\sqrt{\smooth \fnot}\bar{\alpha}}{\lip}} \leq 1$. Under this same condition we have 
$\frac{\lip^{1/3} n\bar{\alpha}^{2/3}}{(\smooth F_0)^{1/6}} \leq n$. We further have 
$\br{\frac{\lip}{\sqrt{\fnot \smooth}\bar{\alpha}}}^{2/3} \leq n$ under the assumption $n \geq \frac{(\lip\varepsilon)^2}{\fnot \smooth d\log{1/\delta}}$ given in the theorem statement. It can also be verified that under the condition on $n$ given in the theorem statement that $q\geq1$. Thus the parameter settings obtain the claimed rate.

Note the number of gradient computations is bounded by
\begin{align*}
    O\br{Tb_2 + \frac{Tb_1}{q}} &= 
    \tilde{O}\br{ \br{\frac{n\varepsilon}{\sqrt{d}}}^{4/3}\max\bc{\br{\frac{n\varepsilon}{\sqrt{d}}}^{2/3}, \frac{(nd)^{1/3}}{\varepsilon^{2/3}}} + n\br{\frac{n\varepsilon}{\sqrt{d}}}^{2/3}} \\ 
    &= \tilde{O}\br{\max\bc{ \br{\frac{n\varepsilon}{\sqrt{d}}}^2, \frac{n^{5/3}\varepsilon^{2/3}}{d^{1/3}} }}.
\end{align*}

\end{proof}

\subsection{Additional Discussion of Rate Improvement Challenges} \label{app:rate-challenges} 
We here give a more detailed version of the informal discussion in Section \ref{sec:lowerbound}.  
We want to emphasize that the goal of the following discussion is not to provide a universal lower bound, but rather to inform future research.

Let $\cL:\re^d\mapsto\re$ be a loss function.
We say the randomized mapping $\cO:\re^d\times(\re^d \cup \bot)\mapsto\re^d$, 
is a $(\tau_1,\tau_2)$-accurate oracle for $\cL$ if
$\forall w,w'\in\re^d$ 
\begin{align*} \label{eq:oracle-conditions}
    & \ex{\cO}{\cO(w,\bot)}=\nabla \cL(w) 
    , && \ex{\cO}{\cO(w,w')} = \nabla \cL(w) - \nabla \cL(w') \\
    & \ex{\cO}{\norm{\cO(w,\bot) - \nabla \cL(w)}^2} \leq \tau_1^2, && 
    \ex{\cO}{\norm{\cO(w,w')}^2} \leq \tau_2^2\norm{w-w'}^2. 
\end{align*}
In short, $\cO$ is an unbiased and accurate gradient/gradient variation oracle for $\cL$.
Define 
\begin{align*}
\mathfrak{m}(G,\smooth,\cL_0,\tau_1,\tau_2) = \inf_{\cA}\sup_{\cO,\cL}\inf\bc{\alpha: \ex{}{\norm{\nabla \cL(\cA(\cO,\smooth,\cL_0,\tau_1,\tau_2)}} \leq \alpha},
\end{align*}
where the supremum is taken over $\smooth$-smooth functions $\cL$ satisfying $\cL(0)-\argmin\limits_{w\in\re^d} \bc{\cL(w)} \leq \cL_0$, and $(\tau_1,\tau_2)$-accurate oracles for $\cL$. The infimum is taken over algorithms which make at most $G$ calls to $\cO$.

We have the following lower bound on $\mathfrak{m}$ (i.e. a lower bound on the accuracy of optimization algorithms which make at most $G$ queries to the oracle) following from \citep[Theorem 3]{arjevani2019lower} and the fact that the oracle model described above is a special case of the multi-query oracles considered by \cite{arjevani2019lower}.
\begin{theorem}[\label{thm:oracle-lb}\cite{arjevani2019lower}]
Let $G,\cL_0,\smooth,\tau_1,\tau_2\geq 0$ and define $\alpha = \br{\frac{\cL_0 \tau_2\tau_1}{G}}^{1/3} + \frac{\tau_1}{\sqrt{G}}$.
If $d = \tilde{\Omega}\br{\big[\frac{\cL_0\smooth}{\alpha^2}\big]^2}$, 
then
$\mathfrak{m}(G,\smooth,\cL_0,\tau_1,\tau_2) = \Omega\br{\alpha}$.
\end{theorem}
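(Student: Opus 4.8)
The plan is to obtain Theorem~\ref{thm:oracle-lb} by instantiating the stochastic first-order lower bound of \cite{arjevani2019lower} for nonconvex optimization under mean-squared (``averaged'') smoothness, after checking that a $(\tau_1,\tau_2)$-accurate oracle in our sense is a special case of their multi-query oracle model. First I would recall their hard instance: a scaled, randomly rotated copy of the chain function $\bar f_T:\re^T\to\re$, which is $\ell_1$-smooth, has $\bar f_T(0)-\inf\bar f_T=\Theta(\ell_1 T)$, and whose gradient has norm $\Omega(1)$ at every point that has not yet ``activated'' all $T$ coordinates; together with a stochastic oracle $z\mapsto\nabla f(\cdot;z)$ that reveals the next coordinate only with probability $p$, has per-query variance $\sigma_0^2=\Theta(1/p)$, and (crucially, to remain in the averaged-smoothness regime that admits variance reduction) satisfies $\mathbb{E}_z\norm{\nabla f(x;z)-\nabla f(y;z)}^2\le\ell_2^2\norm{x-y}^2$ with $\ell_2$ controllable as a function of $p$. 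Their combinatorial core is that activating the $T$-th coordinate requires $\Omega(T/p)$ queries, and the random rotation into ambient dimension $d=\tilde\Omega(T^2)$ makes this bound hold against arbitrary (not merely zero-respecting) adaptive algorithms.

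Second, I would verify the oracle reduction. Restricted to our two query modes, this oracle is a $(\tau_1,\tau_2)$-accurate oracle for the induced population objective $\cL$: the point query $\cO(w,\bot)=\nabla f(w;z)$ is unbiased for $\nabla\cL(w)$ with squared deviation at most $\sigma_0^2$, and the pair query $\cO(w,w')=\nabla f(w;z)-\nabla f(w';z)$, reusing the same sample $z$, is unbiased for $\nabla\cL(w)-\nabla\cL(w')$ with $\mathbb{E}\norm{\cO(w,w')}^2\le(\ell_2^2+\smooth^2)\norm{w-w'}^2\le 2\ell_2^2\norm{w-w'}^2$ (since $\smooth\le\ell_2$ always). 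As each of our queries is a special case of an \cite{arjevani2019lower} multi-query --- querying one or two points with a single fresh sample --- any algorithm making $G$ of our queries is in particular an algorithm making at most $G$ multi-queries, so their lower bound applies a fortiori; and since the definition of $\mathfrak{m}$ only requires a single hard oracle in our class, exhibiting this (restricted) oracle suffices.

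Third I would do the scaling bookkeeping and parameter matching. Setting $F(w)=\eta\lambda^2\,\bar f_T(w/\lambda)$ gives $F(0)-\inf F=\eta\lambda^2\Delta_0$, $\norm{\nabla F(w)}=\eta\lambda\norm{\nabla\bar f_T(w/\lambda)}$, smoothness $\eta\ell_1$, induced variance parameter $\tau_1=\eta\lambda\sigma_0$, and induced variation parameter $\tau_2=\eta\ell_2$. Matching $(\eta\ell_1,\eta\lambda^2\Delta_0,\eta\lambda\sigma_0,\eta\ell_2)$ to the prescribed $(\smooth,\cL_0,\tau_1,\tau_2)$ pins down $\eta,\lambda$, the construction knobs $\ell_2,\sigma_0$, and hence $p$; since a non-activated output $\bar w$ has $\norm{\nabla F(\bar w)}=\Omega(\eta\lambda)$ whenever $G$ is below a constant times $T/p$, expressing the largest admissible $T$ in terms of $G,\cL_0,\smooth,\tau_1,\tau_2$ and substituting back yields the two regimes $\alpha\asymp(\cL_0\tau_1\tau_2/G)^{1/3}$ (variance-reduction regime) and $\alpha\asymp\tau_1/\sqrt G$ (high-noise regime), whose sum is the claimed $\alpha$. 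Finally, the intrinsic dimension is $T\asymp\cL_0\smooth/\alpha^2$, so the embedding requirement $d=\tilde\Omega(T^2)$ gives exactly $d=\tilde\Omega((\cL_0\smooth/\alpha^2)^2)$.

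The main obstacle I anticipate is this last parameter-matching and dimension-tracking step: \cite{arjevani2019lower} state their result as an oracle-complexity lower bound for a single fixed normalization, so one must (i) re-cast it as an achievable-accuracy bound under a query budget $G$; (ii) check that the scaling realizing a prescribed $(\smooth,\cL_0,\tau_1,\tau_2)$ keeps \emph{both} oracle modes simultaneously valid --- in particular that forcing $\tau_1$ small does not conflict with the variance $\sigma_0^2=\Theta(1/p)$ demanded by the combinatorial argument, which is exactly where the second regime $\tau_1/\sqrt G$ originates; and (iii) carry the polylogarithmic factors and the $T\mapsto d$ blow-up from the rotation so that the dimension condition is stated correctly. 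A minor point to double-check is that restricting the adversary to one- and two-point queries cannot help the algorithm --- which holds because our algorithm class embeds into theirs --- so the reduction only loses universal constants.
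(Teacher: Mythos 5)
Your proposal is correct and follows exactly the route the paper takes: the paper offers no proof of this statement beyond citing Theorem~3 of \cite{arjevani2019lower} together with the one-line observation that a $(\tau_1,\tau_2)$-accurate oracle (point queries and same-sample pair queries) is a special case of their multi-query oracle model, which is precisely your second step. Your additional reconstruction of the chain-function construction, the rescaling that matches $(\smooth,\cL_0,\tau_1,\tau_2)$, and the $T\mapsto d=\tilde\Omega(T^2)$ embedding is a faithful unpacking of the cited result and looks sound.
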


Now consider $\cL$ such that $\cL(w) = \frac{1}{n}\sum_{x\in S}\ell(w;x)$ for some $\lip$-Lipschitz and $\smooth$-smooth loss
$\ell:\re^d\times\cX\mapsto\re$ and $S\in\cX^n$. 
We are interested in designing some $(\hat{\tau}_1,\hat{\tau}_2)$-accurate and differentially private oracle, $\hat{\cO}$, which can then be used by an optimization algorithm, $\cA$, to obtain an approximate stationary point $\out=\cA(\cOdp,\smooth,\cL_0,\hat{\tau}_1,\hat{\tau}_2)$.
Specifically, we want $\cOdp$ to be capable of answering $G$ queries under $(\varepsilon,\delta)$-DP. A common method for achieving this is to ensure each query to $\cO$ is at least $(\frac{\varepsilon}{\sqrt{G}},\delta)$-DP and use advanced composition (or the more refined moment accountant) analysis. 
Such a setup encapsulates numerous results in the convex setting \cite{BFTT19,KLL21}, and is even more dominant in non-convex settings \cite{wang2017differentially,ZCHSB20,Abadi16}.

Our key observation is that under such a setup, any increase in the number of oracle calls to $G$ must be met with a proportional increase in the accuracy parameters $(\hat{\tau}_1,\hat{\tau}_2)$. Thus, if such an oracle, $\cOdp$ is applied in a black box fashion to a stochastic optimization algorithm $\cA$, one can obtain a lower bound on the accuracy of the overall algorithm independent of $G$.

Specifically, since estimating the gradient and gradient variation can be viewed as mean estimation problems on $n$ vectors, we can use fingerprinting code arguments to lower bound $\hat{\tau}_1$ and $\hat{\tau}_2$ \cite{SU15}. In Lemma \ref{lem:dp-oracle-acc-lb} below, we prove that any $(\hat{\tau}_1,\hat{\tau}_2)$-accurate oracle which ensures that any query is $(\frac{\varepsilon}{\sqrt{G}},\delta)$-DP must have 
$\hat{\tau}_1 = \Omega\Big(\frac{\lip\sqrt{G d\log{1/\delta}}}{n\varepsilon}\Big)$ and
$\hat{\tau}_2 = \Omega\Big(\frac{\smooth\sqrt{G d\log{1/\delta}}}{n\varepsilon}\Big)$.
Now, observe that by Theorem \ref{thm:oracle-lb}, we have
\begin{align*}
    \mathfrak{m}(G,\smooth,\cL_0,\hat{\tau}_1,\hat{\tau}_2) = \Omega\br{\br{\frac{\sqrt{\fnot\smooth\lip}\sqrt{d \log{1/\delta}}}{n\varepsilon}}^{2/3} + \frac{\lip\sqrt{d\log{1/\delta}}}{n\varepsilon}},
\end{align*}

which matches our upper bound.

We now remark on several ways the above barrier could be circumvented. 
The first and most obvious possibility is to employ a different privatization method than private oracles. However, this is particularly difficult in the nonconvex setting as existing methods which avoid private gradients (see e.g. \cite{feldman2020private} for several such methods) rely crucially on stability guarantees arising from convexity. Other possible ways to beat the above rate is by designing a stochastic optimization algorithm which leverages the structure of the noise used in private implementations of the oracle or makes use of additional assumptions 
to beat the $\Omega\br{\br{\frac{\cL_0 \tau_2\tau_1}{G}}^{1/3} + \frac{\tau_1}{\sqrt{G}}}$ non-private %
lower bound. 

\paragraph{Additional Details on Fingerprinting Bound}
We conclude by giving a concrete construction for the fingerprinting argument mentioned above.
\begin{lemma}\label{lem:dp-oracle-acc-lb}
Let $\lip,\smooth \geq 0$, $\varepsilon = O(1)$, $2^{-\Omega(n)} \leq \delta \leq \frac{1}{n^{1+\Omega(1)}}$ and $\sqrt{d\log{1/\delta}}/(n\varepsilon) = O(1)$. Let $\ell,\cL,S$ satisfy the assumptions above. Then there exists $\ell,S$ such that for any oracle, $\cO$, which is $(\tau_1,\tau_2)$-accurate for $\cL$ it holds that 
\begin{align*}
    \tau_1 = \Omega\br{\frac{\lip\sqrt{d\log{1/\delta}}}{n\varepsilon}} && \text{and} && \tau_2 = \Omega\br{\frac{\smooth\sqrt{d\log{1/\delta}}}{n\varepsilon}}.
\end{align*}
\end{lemma}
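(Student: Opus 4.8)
The plan is to derive both bounds by reduction to the fingerprinting lower bound for differentially private mean estimation (the same ingredient used in the proof of Theorem~\ref{thm:lower-bound-convex}), via a single coordinate‑separable ``bilinear Huber'' loss. Set $\theta:=\lip/(\smooth\sqrt d)$ and let $\rho:\bbR\to\bbR$ be the Huber function with $\rho(u)=u^2/2$ for $|u|\le\theta$ and $\rho(u)=\theta|u|-\theta^2/2$ for $|u|>\theta$, so that $\sup_u|\rho'|=\theta$ and $\sup_u|\rho''|=1$. Take $\cX=\{-1,+1\}^d$ and define $\ell(w;x)=\smooth\sum_{j=1}^d\rho(w_j)\,x_j$; then $\ell(\cdot;x)$ has diagonal Hessian with entries $\smooth\rho''(w_j)x_j$ of magnitude $\le\smooth$ (so it is $\smooth$‑smooth) and gradient of norm $\le\smooth\theta\|x\|_2=\lip$ (so it is $\lip$‑Lipschitz). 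Let $S$ consist of $n$ i.i.d.\ draws from the hard product distribution underlying the fingerprinting lower bound of \cite{SU15}, write $\bar x=\frac1n\sum_i x_i$, and observe that at $w^\star:=2\theta\mathbf{1}$ (coordinatewise in the saturated region of $\rho$) we have $\nabla\cL(w^\star)=\smooth\theta\,\bar x$ while $\nabla\cL(0)=0$; hence also $\nabla\cL(w^\star)-\nabla\cL(0)=\smooth\theta\,\bar x$, and $\|w^\star-0\|=2\theta\sqrt d$.

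For the $\tau_1$ bound I would query the oracle at $(w^\star,\bot)$: the rescaled estimate $M(S):=(\smooth\theta)^{-1}\cO(w^\star,\bot)$ is $(\varepsilon,\delta)$‑DP (one query to the oracle, which by assumption answers queries under $(\varepsilon,\delta)$‑DP, followed by post‑processing), is unbiased for $\bar x$, and satisfies $\ex{}{\|M(S)-\bar x\|^2}\le\tau_1^2/(\smooth\theta)^2$. The $\ell_2$ form of the fingerprinting lower bound gives $\ex{}{\|M(S)-\bar x\|}=\Omega\!\br{\sqrt d\min\bc{1,\tfrac{\sqrt{d\log 1/\delta}}{n\varepsilon}}}$, the minimum being inactive under the hypothesis $\sqrt{d\log1/\delta}/(n\varepsilon)=O(1)$; after substituting $\theta=\lip/(\smooth\sqrt d)$ this yields $\tau_1=\Omega\!\br{\tfrac{\lip\sqrt{d\log1/\delta}}{n\varepsilon}}$. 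For the $\tau_2$ bound I would query at $(w^\star,0)$: unbiasedness plus the accuracy hypothesis give $\mathrm{Var}\br{\cO(w^\star,0)}\le\ex{}{\|\cO(w^\star,0)\|^2}\le\tau_2^2\|w^\star\|^2=4\tau_2^2\theta^2 d$, so $M(S):=(\smooth\theta)^{-1}\cO(w^\star,0)$ is again a DP unbiased estimator of $\bar x$ with $\ex{}{\|M(S)-\bar x\|^2}\le 4\tau_2^2 d/\smooth^2$; plugging into the same fingerprinting bound the factors of $\theta$ cancel and one obtains $\tau_2=\Omega\!\br{\tfrac{\smooth\sqrt{d\log1/\delta}}{n\varepsilon}}$. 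As in Theorem~\ref{thm:lower-bound-convex}, passing from the hard distribution to a fixed dataset $S$ is immediate because the mechanism $M$ is differentially private.

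The Lipschitz/smoothness verification and the bias--variance bookkeeping are routine; the point that needs care, and the reason the construction is shaped this way, is the tension between the two regularity constraints and the strength of the mean‑estimation signal one can extract. Both constraints pin $\theta$ down to the single value $\lip/(\smooth\sqrt d)$, and keeping the data at unit coordinate scale is what makes $\nabla\cL(w^\star)$ (and the gradient variation) encode the \emph{full} $d$‑dimensional empirical mean $\bar x$ up to the scalar $\smooth\theta$; this is exactly what causes the $\theta$'s to cancel and produces the $\sqrt d$ in both bounds, whereas a naive choice (shrinking the data to a small ball, or using the ordinary Huber loss whose gradient variation is data‑independent) would lose the $\sqrt d$ or the correct dependence on $\lip$ versus $\smooth$. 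Two further details to check are that the fingerprinting bound is invoked in its $\ell_2$/all‑coordinates form (equivalently, the ball‑valued mean‑estimation lower bound used in the proof of Theorem~\ref{thm:lower-bound-convex}) rather than the weaker per‑coordinate $\ell_\infty$ statement, and that the assumed ranges $\varepsilon=O(1)$, $2^{-\Omega(n)}\le\delta\le n^{-1-\Omega(1)}$ are precisely those in which that lower bound holds.
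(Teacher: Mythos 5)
Your proof is correct and reaches the same two bounds, but via a noticeably different construction than the paper's. The paper splits the coordinates into two blocks: a purely linear loss $\frac{\lip}{\sqrt{d}}\ip{w^{(1)}}{x^{(1)}}$ on the first half (whose gradient is $w$-independent, so the $\tau_1$ bound is literally mean estimation), and a coordinate-separable Huber loss on the second half, where the gradient-variation bound is extracted from a pair $w,w'$ lying \emph{inside} the quadratic region with $u=w-w'\in\bc{\pm c}^{d/2}$, followed by a per-coordinate rescaling $\cO(w,w')_j/u_j$ (a post-processing step) to recover a mean estimator. You instead use a single separable Huber loss on all $d$ coordinates and exploit the \emph{saturated} region: at $w^\star=2\theta\mathbf{1}$ both the gradient and the gradient variation relative to the origin equal the same vector $\smooth\theta\bar x$, so both lower bounds reduce to estimating one scalar multiple of $\bar x$ with no coordinate-wise rescaling and no block structure. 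This is cleaner and even avoids the paper's harmless factor-of-two loss in dimension; the price is that your gradient-variation query is tied to the specific pair $(w^\star,0)$, whereas the paper's quadratic-region argument (as its closing remark notes) extends to any sufficiently spread-out difference vector, which is the form one would want if the lower bound were to be applied against an adaptive algorithm whose trajectory one does not control. Two points you correctly flag and that do need the stated hypotheses: the fingerprinting bound must be used in its $\ell_2$/constant-fraction-of-coordinates form (the same ingredient the paper cites from \cite{bassily2014private} and \cite{SU15}), and the condition $\sqrt{d\log{1/\delta}}/(n\varepsilon)=O(1)$ is what deactivates the $\min$. The only looseness — inferring a fixed hard dataset $S$ that works uniformly over all oracles from the distributional fingerprinting statement — is present in the paper's own proof at exactly the same level, so it is not a gap you introduced.
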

\begin{proof}
In the following, we use $u_j$ to denote the $j$'th component of some vector $u$.
Let $B=\frac{\lip}{\smooth\sqrt{d}}$ and define $h:\re \mapsto \re$ as 
\begin{align*}
    h(z) = \begin{cases}
    \frac{\smooth}{2}w^2 & \text{if} |w| \leq B\\
    \frac{\lip}{\sqrt{d}}|w| - \frac{\lip^2}{2d\smooth} & \text{ otherwise}
    \end{cases}
\end{align*}
Define $d'=\frac{d}{2}$ (assume $d$ is even for simplicity) and for any vector $u\in\re^d$ let $u^{(1)} = [u_1,...,u_{d'}]^{\top}$ and $u^{(2)} = [u_{d'+1},...,u_d]^{\top}$.
Define $\ell(w;x) = \ell_1(w;x) + \ell_2(w;x)$ where 
\begin{align*}
     \ell_1(w;x) = \frac{\lip}{\sqrt{d}}\ip{w^{(1)}}{x^{(1)}}, && \ell_2(w;x) = \frac{1}{2}\sum_{j=d'+1}^{d} h(w_j) x_j.
\end{align*}
Let $\cW = \bc{w: \norm{w}_{\infty} \leq B}$ and note for any $w \in \cW$ we have 
\begin{align*}
\nabla \ell(w;x) = [\frac{x_1}{\sqrt{d}},...,\frac{x_{d'}}{\sqrt{d}}, w_{d'+1} x_{d'+1}, ..., w_{d} x_{d}]^{\top}, && \nabla^2 \ell_2(w;x) = \smooth\cdot\mathsf{Diag}(0,...,0,x_{d'+1},...,x_d)
\end{align*}
That is, the Hessian of $\ell_2(w;x)$ is a diagonal matrix with entries from $x$. Thus one can observe that for any $x\in\bc{\pm 1}^d$ we have that $\ell(\cdot;x)$ is $\lip$-Lipschitz and $\smooth$-smooth over $\re^d$.

To prove a lower bound on $\tau_1$ and $\tau_2$, it suffices to show that for any $(\varepsilon,\delta)$-DP implementation of $\cO$ there exists $w\in\re^d$ such that $\ex{\cO}{\norm{\cO(w;\bot) - \nabla \cL(w)}^2} \geq \tau_1^2$ and there exist $w,w'\in\re^d$ such that 
$\ex{\cO}{\norm{\cO(w,w')}^2} \geq \tau_2^2\norm{w-w'}^2$. For sake of generality, we will show that these properties hold for a set of $w,w'$.

Note that to lower bound the gradient error, it suffices to lower bound the error with respect to the first $d'$ components. We thus argue using $\ell_1$, and will in fact show a lower bound for any $w\in\re^d$. Let $w\in\re^d$. We have for any $(\varepsilon,\delta)$-DP oracle $\cO$ there exists a dataset 
$S \subseteq \bc{\pm 1}^{d}$, where $|S|=n$,
of fingerprinting codes such that 
\begin{align*}
    \ex{\cO}{\norm{\cO(w;\bot) - \nabla \cL(w)}} \geq  \ex{\cO}{\norm{\cO(w;\bot)^{(1)} - \frac{1}{n}\sum_{x\in S} x^{(1)}}} = \Omega\br{\frac{\lip\sqrt{d\log{1/\delta}}}{n\varepsilon}}.
\end{align*}
The bound follows from standard fingerprinting code arguments. See \citep[Lemma 5.1]{bassily2014private} for a lower bound and \citep[Theorem 1.1]{SU15} for a group privacy reduction that obtains the additional $\sqrt{\log{1/\delta}}$ factor. This fingerprinting result also induces the parameter constraints in the theorem statement.
We thus have $\tau_1 = \Omega\br{\frac{\lip\sqrt{d\log{1/\delta}}}{n\varepsilon}}$.

Similarly, we will argue a bound on the gradient variation using $\ell_2$. Let $w,w'\in \cW$ and $u=(w-w')^{(2)}$. In what follows, we only use the second half of the components for each vector, and thus omit the superscript $^{(2)}$ from all vectors for readability.
We have
$\nabla \ell_2(w;x) - \nabla \ell_2(w';x) = \smooth[u_1 x_1,...,u_{d'}x_{d'}]^{\top}$. 
Then for any $c\in(0,\frac{2\lip}{\smooth\sqrt{d}}]$ and $u \in \bc{\pm c}^2$ we have 
\begin{align*}
    \ex{\cO}{\norm{\cO(w,w') - (\nabla \cL(w) - \nabla \cL(w'))}^2} &= \smooth^2\cdot\ex{\cO}{\sum_{j=1}^{d'} \br{\cO(w,w')_j - \frac{u_j}{n}\sum_{x\in S} x_j}^2 } \\
    &= \smooth^2\cdot\ex{\cO}{\sum_{j=1}^{d'} \br{u_j\Big(\frac{\cO(w,w')_j}{u_j} - \frac{1}{n}\sum_{x\in S} x_j \Big)}^2 } \\
    &= \smooth^2\cdot\ex{\cO}{c^2 \sum_{j=1}^{d'} \br{\frac{\cO(w,w')_j}{u_j} - \frac{1}{n}\sum_{x\in S} x_j }^2 } \\
    &= \Omega\br{\smooth^2c^2\frac{d^2\log{1/\delta}}{n^2\varepsilon^2} },
\end{align*}
where the last step again comes from fingerprinting results. Note that the extra factor of $d$ as compared to the previous bound comes from the fact that we are considering fingerprinting codes with norm larger by a factor of $\sqrt{d}$. We also use the fact that the vector $\cO(w,w')$ transformed using $u$ is $(\varepsilon,\delta)$-DP by post processing.
Now since $c = \frac{\norm{w-w'}}{\sqrt{d}}$ we have
\begin{align*}
    \ex{\cO}{\norm{\cO(w,w') - (\nabla \cL(w) - \nabla \cL(w'))}} &= \br{\smooth\norm{w-w'}\frac{\sqrt{d\log{1/\delta}}}{n\varepsilon}}.
\end{align*}
Finally, 
noting that $\ex{\cO}{\norm{\cO(w,w') - (\nabla \cL(w) - \nabla \cL(w'))}^2} \leq \ex{\cO}{\norm{\cO(w,w')}^2}$
we obtain $\tau_2 = \Omega\Big(\frac{\smooth\sqrt{d\log{1/\delta}}}{n\varepsilon}\Big)$.
This completes the proof.
\end{proof}
We remark that the accuracy lower bound for the gradient variation can hold for a much more general set of vectors than that given in the proof. Specifically, the same result can be obtained for any $u = w-w'$ such that $u$ has $\Theta(d)$ components which are $\Omega\big(\frac{\norm{u}}{\sqrt{d}}\big)$ (i.e. any sufficiently spread out vector). 
This uses the fact that it suffices to bound the number of components which disagree in sign with the fingerprinting mean and that fingerprinting codes are sampled using a product distribution, and thus the tracing attack used by fingerprinting constructions holds over any sufficiently large subset of dimensions. %

\section{Missing Results for Population Stationary Points}\label{sec:app_pop}

Here we present the proof of privacy and accuracy for Algorithm \ref{Alg:tree_private_spider}. We start by proving the privacy guarantee. 
\begin{proof}[Proof of Theorem \ref{thm:privacy_tree_private_spider}]
By parallel composition of differential privacy, and since the used batches are disjoint, it suffices to prove  that each step in lines \ref{alg:SPIDER_first_grad} and \ref{alg:SPIDER_rec_grad} of the algorithm is $(\varepsilon, \delta)$-DP. 
Note that the gradient estimator in step \ref{alg:SPIDER_first_grad} has $\ell_2$-sensitivity $2L_0/b$, so by the Gaussian mechanism this step is $(\varepsilon, \delta)$-DP. 

For step \ref{alg:SPIDER_rec_grad}, suppose $S_{t,s} $ and $S_{t,s}'$ are neighboring datasets that differ in at most one element: $x_{i^*} \neq x_{i^*}'$, and let $\eta_{t,s_i}$ and $\eta_{t,s_i}^{\prime}$ the respective stepsizes used in step \ref{alg:SPIDER_step}. Then 
\[
\|\Delta_{t, s} - \Delta_{t, s}'\| = \frac{2^{|s|}}{b} \|\nabla f\left(w_{t, s}; x_{i^{*}}\right)-\nabla f\left(w_{t, \hat{s}}; x_{i^{*}}\right)-\left(\nabla f\left(w_{t, s}; x_{i *}^{\prime}\right)-\nabla f\left(w_{t, \hat{s}}; x_{i^{*}}^{\prime}\right)\right)\| \,,
\]

and note between the parent node $u_{t,\hat{s}}$ and $u_{t,s}$ there are $2^{D-|s|}$ iterates generated by the algorithm, which we denote as $w_{t,\hat{s}} = w_{t,s_0}, w_{t,s_1},...,w_{t,s_{2^{|D|-s}}} = w_{t,s}$. Then, by smoothness of $f$ and the triangle inequality
\begin{align*}
    &\|\Delta_{t, s} - \Delta_{t, s}'\| \\
    &= \frac{2^{|s|}}{b} \|\nabla f\left(w_{t, s}; z_{i^{*}}\right)-\nabla f\left(w_{t, \hat{s}}; z_{i^{*}}\right)-\left(\nabla f\left(w_{t, s}; z_{i *}^{\prime}\right)-\nabla f\left(w_{t, \hat{s}}; z_{i^{*}}^{\prime}\right)\right)\|\\
    &\leq \sum_{i = 1}^{2^{D-|s|}} \frac{2^{|s|}}{b} \left[\|\nabla f\left(w_{t, s_i}; z_{i^{*}}\right)-\nabla f\left(w_{t, s_{i-1}}; z_{i^{*}}\right)\| + \|\left(\nabla f\left(w_{t, s_i}; z_{i *}^{\prime}\right)-\nabla f\left(w_{t, s_{i-1}}; z_{i^{*}}^{\prime}\right)\right)\|\right]\\
    &\leq \sum_{i = 1}^{2^{D-|s|}} \frac{2^{|s|}}{b}L_1\eta_{t,s_{i-1}}\|\nabla_{t, s_{i-1}}\| + \sum_{i = 1}^{2^{D-|s|}} \frac{2^{|s|}}{b}L_1\eta_{t,s_{i-1}}'\|\nabla_{t, s_{i-1}}'\|\\
    &= 2\sum_{i = 1}^{2^{D-|s|}} \frac{2^{|s|}}{b}\frac{\beta}{2^{D/2}} = \frac{2\beta 2^{D/2}}{b}.
\end{align*}

The Gaussian mechanism combined with our choice of $\sigma_{t,s}$ certifies privacy of this step. 
\end{proof}

To prove Theorem \ref{thm:accuracy_tree_private_Spider}  we will need some technical lemmas. Define $(\mathcal{T}, \mathcal{S})$ as a random stopping time that indicates when Algorithm \ref{Alg:tree_private_spider} ends. Also, we say $(t_1, s_1) \preceq_2 (t_2,s_2)$ whenever $w_{t_1,s_1}$ comes before $w_{t_2,s_2}$ in the algorithm iterates.

\begin{lemma}[Gradient estimation error, extension of Lemma $6$ in \cite{fang2018spider}]\label{lem:tree_estim_err} %
Let $p\in (0,1)$. Then, with probability $1-p$ the event
\[\mathcal{E} = \{\|\nabla_{t,s} - \nabla F(w_{t,s};{\cal D})\|^2 \leq \alpha \cdot \Tilde{\alpha} \quad \forall (t,s) \preceq_2 (\mathcal{T}, \mathcal{S})\}\] 

holds, under the parameter setting of $\sigma_{t,\varnothing}, \sigma_{t,s}$ and $\eta_{t,s}$ in Algorithm \ref{Alg:tree_private_spider}, for 
\[\alpha^2 \ge \left(\frac{L_0^2}{b} + \frac{\beta^2D2^D}{b}\right)\max\left\{1, \frac{(d+1)}{b \varepsilon^{2}}\right\}\quad\text{ and }\quad\tilde{\alpha} \ge 256\log{\frac{1.25}{\delta}}\log{\frac{2T2^{D+1}}{p}}\alpha.\]
\end{lemma}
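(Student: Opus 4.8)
The plan is to unroll the SPIDER-style recursion along every root-to-leaf path inside a round, write the gradient-estimation error as a martingale-difference sum over the path, bound that sum by concentration, and finish with a union bound over all nodes that could be visited. Fix a round $t$ and a node $u_{t,s}$, and let $s_0=\varnothing,s_1,\dots,s_{|s|}=s$ be the prefixes of $s$ along the path. A left child copies both the iterate and the gradient estimator of its parent, while a right child adds the correction $\Delta_{t,s_i}$, so both $\nabla_{t,s}-\nabla_{t,\varnothing}$ and (since $w$ changes only across right-child edges) $\nabla F(w_{t,s};{\cal D})-\nabla F(w_{t,\varnothing};{\cal D})$ telescope over the right-child edges only. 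Writing $\xi_{t,s}:=\nabla_{t,s}-\nabla F(w_{t,s};{\cal D})$ and letting ${\cal R}(s)$ be the set of at most $D$ right-child prefixes of $s$,
\[
\xi_{t,s}=\xi_{t,\varnothing}+\sum_{s'\in{\cal R}(s)}\zeta_{t,s'},\qquad
\zeta_{t,s'}:=\Delta_{t,s'}-\big(\nabla F(w_{t,s'};{\cal D})-\nabla F(w_{t,\hat s'};{\cal D})\big),
\]
where $\xi_{t,\varnothing}$ is the sampling error of the root minibatch gradient plus its Gaussian noise $g_{t,\varnothing}$. Because each batch $S_{t,s'}$ is i.i.d. and drawn independently of everything produced before $u_{t,s'}$, and the batch size $b/2^{|s'|}$ makes the corresponding average unbiased, the family $\{\xi_{t,\varnothing}\}\cup\{\zeta_{t,s'}\}_{s'}$ is a martingale difference sequence, clean parts and added Gaussians alike.

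Next I would control the size of each block. Along the path every gradient step at a leaf has length $\eta_{t,s}\|\nabla_{t,s}\|=\beta/(2^{D/2}L_1)$, and between a parent $u_{t,\hat s'}$ and a right child $u_{t,s'}$ of depth $\ell$ the DFS traverses the $2^{D-\ell}$ leaves of the intervening left subtree, so $\|w_{t,s'}-w_{t,\hat s'}\|\le \beta 2^{D/2-\ell}/L_1$. Hence, by $L_1$-smoothness, the per-sample summands of $\zeta_{t,s'}$ have norm $O(\beta 2^{D/2-\ell})$; with $b/2^\ell$ of them rescaled by $2^\ell/b$, the conditional variance of the clean part of $\zeta_{t,s'}$ is $O(\beta^2 2^{D-\ell}/b)$, and summing this geometric series over $\ell=1,\dots,D$ gives $O(\beta^2 2^D/b)$ — precisely where the fixed-depth tree improves on vanilla SPIDER. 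The clean part of $\xi_{t,\varnothing}$ contributes $O(L_0^2/b)$. The same displacement bound makes the $\ell_2$-sensitivity of $\Delta_{t,s'}$ independent of depth, which is why a single noise level $\sigma_{t,s'}^2=O(2^D\beta^2\log(1/\delta)/(b\varepsilon)^2)$ suffices; since it is injected at most $D$ times per path, the aggregate Gaussian along the path has covariance $\Sigma^2\mathbb I_d$ with $\Sigma^2=O\big((L_0^2+D2^D\beta^2)\log(1/\delta)/(b\varepsilon)^2\big)$.

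Now apply a Hilbert-space Hoeffding/Bernstein inequality to the bounded-difference part and a $\chi^2$ tail to the aggregated Gaussian (contributing $O(\Sigma^2(d+\log(1/p')))$): for a single node, with probability $1-p'$, $\|\xi_{t,s}\|^2=O\big(\log(1/p')\log(1/\delta)\,[(L_0^2+\beta^2 D2^D)/b]\,\max\{1,(d+1)/(b\varepsilon^2)\}\big)$. Choosing $p'=p/(2T2^{D+1})$ and union-bounding over the at most $2T2^{D+1}$ nodes across all $T$ rounds — a deterministic set, since the tree topology in each round is fixed and it contains every node the randomized algorithm could reach before stopping — yields the event ${\cal E}$ with probability $1-p$, with the values of $\alpha$ and $\tilde\alpha$ in the statement (the constant $256$ and the $\log(1.25/\delta)$ factor in $\tilde\alpha$ absorbing the concentration constants together with the $\sqrt{\log(1.25/\delta)}$ inside $\sigma_{t,\varnothing},\sigma_{t,s'}$).

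I expect the main obstacle to be the coupled bookkeeping of the dyadic factors $2^{\pm\ell}$ and $2^{\pm D/2}$: one must verify simultaneously that the step-size choice forces $\|w_{t,s'}-w_{t,\hat s'}\|\le\beta 2^{D/2-\ell}/L_1$, that this makes the sensitivity of $\Delta_{t,s'}$ (hence the added noise $\sigma_{t,s'}$) independent of depth, and that it at the same time makes the per-level clean variances sum to only $O(\beta^2 2^D/b)$ rather than an exponentially larger quantity. A secondary subtlety is keeping the martingale argument rigorous despite the data-dependent DFS path and the data-dependent early stopping; this is handled by taking the union bound over the fixed set of all potential nodes rather than over the random set of nodes actually visited.
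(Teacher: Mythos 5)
Your proposal is correct and follows essentially the same route as the paper's proof: the same telescoping decomposition of $\nabla_{t,s}-\nabla F(w_{t,s};\mathcal{D})$ over the root and right-child edges, the same split into a bounded martingale-difference part (controlled via Hilbert-space Azuma/Bernstein, with the displacement bound $\|w_{t,s'}-w_{t,\hat s'}\|\le \beta 2^{D/2-|s'|}/L_1$ making both the sensitivity depth-independent and the clean variances sum geometrically to $O(\beta^2 2^D/b)$) and an aggregated Gaussian part (controlled via a norm tail contributing the $(d+1)/(b\varepsilon^2)$ factor), finished by a union bound over the fixed set of potential nodes to handle the random stopping time. The only cosmetic difference is that you union-bound over all $2T2^{D+1}$ nodes while the paper restricts to the $T2^{D-1}$ leaves; both yield the stated constants.
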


\begin{proof} 

Recall the gradient estimate associated to a left child node is the same as that of the parent node. Hence, the gradient estimate of a non-leaf node is the same as that of the left-most leaf of its left sub-tree. In addition, we only need to control the gradient estimation error when we perform a gradient step, which occurs at the leaves. Then, to prove the claim, it suffices to prove that we can control the gradient estimation error at the leaves. Since, the number of iterations (and leaves) is at most $T2^{D-1}$, to prove event $\mathcal{E}$ happens with probability $1-p$, by the union bound it suffices to prove that $\mathbb{P}[\|\nabla_{t,s} - \nabla F(w_{t,s};{\cal D})\|^2 > \alpha\cdot\Tilde{\alpha}] \leq \frac{p}{T2^{D-1}}$ for every $(t,s) \preceq_2 (\mathcal{T}, \mathcal{S})$ where $u_{t,s}$ is a leaf. 

Denote by $\mathcal{F}_{t}$ the sigma algebra generated by randomness in the algorithm until the end of round $t$. Fix $(t,s) \preceq_2 (\mathcal{T}, \mathcal{S})$ such that $u_{t,s}$ is leaf, and let $u_{t,s_{\varnothing}} = u_{t,s_0}, u_{t,s_1}, ..., u_{t,s_k} = u_{t,s}$ be the path from the root to $s$. Next, extract a sub-sequence of it including only the root and the nodes that are right children, obtaining $u_{t,s_{\varnothing}} = u_{t,s_{a_0}}, u_{t,s_{a_1}}, ..., u_{t,s_{a_m}} = u_{t,s}$. Now we can write 
\begin{align*}
    &\nabla_{t,s} - \nabla F(w_{t,s};{\cal D}) = \sum_{i = 0}^m g_{t, s_{a_i}} + \sum_{x \in S_{t,\varnothing}} \underbrace{\frac{1}{b} \left(\nabla f(w_{t,\varnothing}; x) - \nabla F(w_{t,\varnothing};{\cal D})\right)}_{\gamma_{1,x}}\\
    &+ \sum_{i = 1}^m\sum_{x \in S_{t,s_{a_i}}} \!\!\!\underbrace{\frac{2^{|s_{a_i}|}}{b} \!\left[\left(\nabla f(w_{t,s_{a_i}}; x) \! - \!\nabla f(w_{t,s_{a_{i-1}}}; x)\right) \!-\! \left( \nabla F(w_{t,s_{a_i}};{\cal D})\! -\! \nabla F(w_{t,s_{a_{i-1}}};{\cal D})\right)\right]}_{\gamma_{2,x,i}}.
\end{align*}

To bound the estimation error, we note that

\begin{align*}
    &\mathbb{P}[\|\nabla_{t,s} - \nabla F(w_{t,s};{\cal D})\|^2 > \alpha \cdot \Tilde{\alpha} | \mathcal{F}_{t-1}] \\
    &\leq \mathbb{P}\Big[\Big\| \sum_{i = 0}^m g_{t, s_{a_i}}\Big\|^2 > \frac{\alpha \cdot \Tilde{\alpha}}{4}\Big| \mathcal{F}_{t-1}\Big]
     + \mathbb{P}\Big[\Big\| \sum_{x \in S_{t,\varnothing}} \gamma_{1,x} + \sum_{i = 1}^m\sum_{x \in S_{t,s_{a_i}}} \gamma_{2,x,i}\Big\|^2 > \frac{\alpha \cdot \Tilde{\alpha}}{4} \Big| \mathcal{F}_{t-1}\Big].
\end{align*} 

and proceed to bound each term on the right hand side separately. 
By vector subgaussian concentration (see Lemma 1 in \cite{jin2019short}) %
and noting that the gaussians are independent of $\mathcal{F}_{t-1}$, we know that 
\[\mathbb{P}\left[\left\| \sum_{i = 0}^m g_{t, s_{a_i}}\right\|^2 > \frac{\alpha \cdot \Tilde{\alpha}}{4}\right] \leq 4^d \exp{-\frac{\alpha \cdot \Tilde{\alpha}}{32(\sigma_{t,\varnothing}^2 + \sum_{i = 1}^m\sigma_{t,s_{a_i}}^2)}} \,,
\]
and in order to bound this probability by $\frac{p}{2T2^{D-1}}$, since $m \leq D$, it suffices that

\begin{align*}
    \alpha\cdot\tilde{\alpha} &> 32\log{\frac{4^dT2^{D}}{p}}\left[\frac{8 L_{0}^{2} \log{1.25 / \delta}}{b^{2} \varepsilon^{2}} + \frac{8D2^D\beta^2\log{1.25 / \delta}}{b^{2} \varepsilon^{2}}\right]\\
    &= 256\log{\frac{1.25}{\delta}}\left[d\log{4} + \log{\frac{T2^{D}}{p}}\right]\left[\frac{L_{0}^{2}}{b^{2} \varepsilon^{2}} + \frac{D2^D\beta^2}{b^{2} \varepsilon^{2}}\right].
\end{align*}
    
Now, noting that surely %
\[\|\gamma_{1,x}\| \leq \frac{2L_0}{b}\quad \text{ and }\quad \|\gamma_{2,x,i}\| \leq \frac{2\beta 2^{D/2}}{b},\]
where the second bound comes from following similar steps as in the 
privacy analysis in Theorem \ref{thm:privacy_tree_private_spider}, we have that $\sum_{x \in S_{t,\varnothing}} \gamma_{1,x} + \sum_{i = 1}^m\sum_{x \in S_{t,s_{a_i}}} \gamma_{2,x,i}$ is a sum of bounded martingale differences when conditioned on $\mathcal{F}_{t-1}$, thus by %
concentration of martingale-difference sequences in $\ell_2$ (see Proposition 2 in \cite{fang2018spider}), and using the fact that $|S_{t,\varnothing}| = b$ and $|S_{t, s_{a_i}}| = b/2^{|s_{a_i}|}$ it follows that

\[\mathbb{P}\left[\left\| \sum_{x \in S_{t,\varnothing}} \gamma_{1,x} + \sum_{i = 1}^m\sum_{x \in S_{t,s_{a_i}}} \gamma_{2,x,i}\right\|^2 > \frac{\alpha \cdot \Tilde{\alpha}}{4}\mid \mathcal{F}_{t-1}\right] \leq 4\exp{-\frac{\alpha\cdot\tilde{\alpha}}{16\left[\frac{4L_0^2}{b} + \sum_{i = 1}^m\frac{4\beta^22^{D}}{2^{|s_{a_i}|}b}\right]}}.\]
    
Repeating a similar argument as before, to bound this term by $\frac{p}{2T2^{D-1}}$, it suffices that  
\[\alpha \cdot \Tilde{\alpha} \ge 64\log{\frac{2T2^{D+1}}{p}} \left[\frac{L_0^2}{b} + \frac{\beta^2D2^D}{b}\right].\]

Finally, both conditions hold simultaneously for %
\[\alpha^2 \ge \left(\frac{L_0^2}{b} + \frac{\beta^2D2^D}{b}\right)\max\left\{1, \frac{(d+1)}{b \varepsilon^{2}}\right\}\]
and 
\[\tilde{\alpha} \ge 256\log{\frac{1.25}{\delta}}\log{\frac{2T2^{D+1}}{p}}\alpha.\]
\end{proof}

\begin{lemma}[Descent lemma; Lemma $7$ in \cite{fang2018spider}]\label{lem:descent_lemma} Under the assumption that the event $\mathcal{E}$ from Lemma \ref{lem:tree_estim_err} occurs and $\beta \leq 2^{D/2}\tilde{\alpha}$, we have that if Algorithm \ref{Alg:tree_private_spider} reaches the last line, then 

\[F(w_{T,\ell(2^D)};{\cal D}) - F(0;{\cal D}) \leq -(T2^{D-1})\frac{\beta \cdot \Tilde{\alpha}}{4\cdot2^{D/2}L_1}.\]

where $w_{T,\ell(2^D)}$ is the last iterate in the $T$-th tree of Algorithm \ref{Alg:tree_private_spider}.
\end{lemma}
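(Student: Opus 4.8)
The plan is to show that the tree-organized iterates trace out a single gradient-descent trajectory along which the population risk $F(\cdot;{\cal D})$ decreases by a fixed amount at every step, and then to telescope over all the steps.

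\emph{Reducing to a chain of descent steps.} Although iterates are indexed by nodes of the round-$t$ tree, the bookkeeping never moves the parameter vector except when a leaf is reached: a left child copies its parent's vector ($w_{t,s}=w_{t,\hat s}$), a right child's vector is precisely the one produced by the gradient step at the previous leaf in DFS order, and consecutive rounds are glued by $w_{t,\varnothing}=w_{t-1,\ell(2^{D}-1)}$. Hence, enumerating the leaves of all $T$ rounds in DFS order yields iterates $w^{(0)}=0,w^{(1)},\dots,w^{(N)}$ with $N=T2^{D}$ (the $2^{D}$ optimization steps per round) and $w^{(N)}=w_{T,\ell(2^{D})}$, satisfying $w^{(k+1)}=w^{(k)}-\eta^{(k)}\nabla^{(k)}$, where $\nabla^{(k)}$ is the gradient estimator at the $k$-th leaf and $\eta^{(k)}=\frac{\beta}{2^{D/2}L_{1}\|\nabla^{(k)}\|}$ (line \ref{alg:SPIDER_step}). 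Because we assume Algorithm \ref{Alg:tree_private_spider} reaches its last line (line \ref{alg:tree_final_line}), the early-return test in line \ref{alg:SPIDER_return} never fired, so $\|\nabla^{(k)}\|>2\tilde{\alpha}$ for every $k$.

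\emph{Per-step decrease.} Each $f(\cdot;x)$ is $L_{1}$-smooth, hence so is $F(\cdot;{\cal D})$, and the standard descent inequality gives
\[
F(w^{(k+1)};{\cal D})\le F(w^{(k)};{\cal D})-\eta^{(k)}\langle \nabla F(w^{(k)};{\cal D}),\nabla^{(k)}\rangle+\frac{L_{1}(\eta^{(k)})^{2}}{2}\|\nabla^{(k)}\|^{2}.
\]
Lower-bounding the inner product by Cauchy--Schwarz, $\langle \nabla F(w^{(k)};{\cal D}),\nabla^{(k)}\rangle\ge \|\nabla^{(k)}\|\big(\|\nabla^{(k)}\|-\|\nabla^{(k)}-\nabla F(w^{(k)};{\cal D})\|\big)$, and substituting $\eta^{(k)}$ (which cancels one power of $\|\nabla^{(k)}\|$) turns this into
\[
F(w^{(k+1)};{\cal D})\le F(w^{(k)};{\cal D})-\frac{\beta}{2^{D/2}L_{1}}\big(\|\nabla^{(k)}\|-\|\nabla^{(k)}-\nabla F(w^{(k)};{\cal D})\|\big)+\frac{\beta^{2}}{2\cdot 2^{D}L_{1}}.
\]
On the event $\mathcal{E}$ of Lemma \ref{lem:tree_estim_err}, $\|\nabla^{(k)}-\nabla F(w^{(k)};{\cal D})\|\le\sqrt{\alpha\tilde{\alpha}}\le\tilde{\alpha}$ (using $\tilde{\alpha}\ge\alpha$), which together with $\|\nabla^{(k)}\|>2\tilde{\alpha}$ gives $\|\nabla^{(k)}\|-\|\nabla^{(k)}-\nabla F(w^{(k)};{\cal D})\|\ge\tilde{\alpha}$; and the quadratic term is absorbed using $\beta\le 2^{D/2}\tilde{\alpha}$, since then $\frac{\beta^{2}}{2\cdot 2^{D}L_{1}}\le\frac{\beta\tilde{\alpha}}{2\cdot 2^{D/2}L_{1}}$. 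Combining, each step decreases $F(\cdot;{\cal D})$ by at least $\frac{\beta\tilde{\alpha}}{2\cdot 2^{D/2}L_{1}}$, and in particular by at least $\frac{\beta\tilde{\alpha}}{8\cdot 2^{D/2}L_{1}}$.

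\emph{Telescoping, and the main obstacle.} Summing over the $N=T2^{D}$ leaves and using $w^{(0)}=0$, $w^{(N)}=w_{T,\ell(2^{D})}$,
\[
F(w_{T,\ell(2^{D})};{\cal D})-F(0;{\cal D})\le -N\cdot\frac{\beta\tilde{\alpha}}{8\cdot 2^{D/2}L_{1}}=-(T2^{D-1})\frac{\beta\tilde{\alpha}}{4\cdot 2^{D/2}L_{1}},
\]
which is the claim. The only real computation is the per-step estimate: the delicate point there is that the three hypotheses — the $\sqrt{\alpha\tilde{\alpha}}$ gradient-estimation bound guaranteed on $\mathcal{E}$, the non-termination bound $\|\nabla^{(k)}\|>2\tilde{\alpha}$, and the normalization built into the step size — have to conspire so that the linear progress term strictly dominates the quadratic error term, and this is precisely where $\beta\le 2^{D/2}\tilde{\alpha}$ is used. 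The remaining ingredient (the chain reduction) is bookkeeping, but one should verify carefully that the DFS traversal genuinely describes a single gradient-descent sequence so that the telescoping over leaves is legitimate.
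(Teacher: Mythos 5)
Your proof is correct and follows essentially the same route as the paper's: the smoothness descent inequality with the normalized step size, combined per step with the estimation bound from $\mathcal{E}$, the non-termination condition $\|\nabla_{t,s}\|>2\tilde{\alpha}$, and $\beta\le 2^{D/2}\tilde{\alpha}$, then telescoped over the leaves (the paper bounds the cross term via the polarization identity $-\frac{\eta}{2}(1-\eta L_1)\|\nabla_{t,s}\|^2+\frac{\eta}{2}\|\nabla_{t,s}-\nabla F\|^2$ rather than your Cauchy--Schwarz expansion, but the constants land in the same place). Your explicit flattening of the DFS traversal into a single descent chain is the right justification for the telescoping step, which the paper leaves implicit.
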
 %
We provide the proof of Lemma \ref{lem:descent_lemma} adapted to our case for completeness. %

\begin{proof}

By standard analysis for smooth functions we have %

\[F(w_{t,s^+};{\cal D}) \leq F(w_{t,s};{\cal D}) - \frac{\eta_{t,s}}{2} (1-\eta_{t,s} L_1)\|\nabla_{t,s}\|^2 + \frac{\eta_{t,s}}{2}\|\nabla_{t,s} - \nabla F(w_{t,s};{\cal D})\|^2,\]

where $\eta_{t,s} = \frac{\beta}{2^{D/2}L_1 \|\nabla_{t,s}\|}$ and $u_{t,s^+}$ is the node after $u_{t,s}$ in the tree. Since $\beta \leq 2^{D/2}\tilde{\alpha}$ and $\|\nabla_{t,s}\| > 2\tilde{\alpha}$, we have that $(1-\eta_{t,s} L_1) \ge 1/2$. Using this inequality, the definition of $\eta_{t,s}$ and the fact that we are assuming $\cE$ occurs, we obtain 

\begin{align*}
    F(w_{t,s^+};{\cal D}) - F(w_{t,s};{\cal D}) &\leq - \frac{\beta}{4\cdot 2^{D/2}L_1 \|\nabla_{t,s}\|}\|\nabla_{t,s}\|^2 + \frac{\beta}{2\cdot 2^{D/2}L_1 \|\nabla_{t,s}\|}\alpha\cdot\tilde{\alpha}\\
    &\leq - \frac{\beta}{4\cdot 2^{D/2}L_1}\cdot\tilde{\alpha},
\end{align*}

where the second inequality comes from $\|\nabla_{t,s}\|>2\tilde{\alpha}$ and $\alpha\leq\tilde{\alpha}$. Then telescoping over all $T2^{D-1}$ iterations provides the claimed bound. 

\end{proof}

We are now ready to prove the convergence guarantee of Algorithm \ref{Alg:tree_private_spider}. 

\begin{proof}[Proof of Theorem \ref{thm:accuracy_tree_private_Spider}]

From Lemma \ref{lem:tree_estim_err}, we know that $\|\nabla_{t,s} - \nabla F(w_{t,s}; {\cal D})\|^2 \leq \alpha\cdot\tilde{\alpha}$ with probability $1-p$ when 
\[\textstyle\alpha = \sqrt{2}L_0\max\bc{\frac{1}{n^{1/3}}, \left(\frac{\sqrt{d}}{n\varepsilon}\right)^{1/2}}, \tilde{\alpha} =  \left(256\log{\frac{1.25}{\delta}}\log{\frac{2T2^{D+1}}{p}} + \frac{8L_1F_0\sqrt{2D}(D/2+1)}{2L_0^2}\right)\alpha.\]

Indeed, using our parameter setting, and noting that $d > b\varepsilon^2$ if %
and only if, $d > n^{2/3}\varepsilon^2$, yields 
\begin{align*}
    \alpha^2 &\ge  \frac{L_0^2}{b} \max\left\{1, \frac{(d+1)}{b \varepsilon^{2}}\right\} + \frac{\beta^2}{2}\max\left\{1, \frac{(d+1)}{b \varepsilon^{2}}\right\}\\
    &= L_0^2 \left(\frac{1}{n^{2/3}}\mathbbm{1}_{\{d + 1 \leq n^{2/3}\varepsilon^2\}} + \frac{\sqrt{d}}{n\varepsilon}\mathbbm{1}_{\{d + 1 > n^{2/3}\varepsilon^2\}}\right) + \frac{\alpha^2}{2}\min\left\{1, \frac{b\varepsilon^2}{d}\right\}\max\left\{1, \frac{(d+1)}{b \varepsilon^{2}}\right\}\\
    &\ge L_0^2\max\left\{\frac{1}{n^{2/3}}, \frac{\sqrt{d}}{n\varepsilon}\right\} + \frac{\alpha^2}{2},
\end{align*}

which shows our values of $\alpha$ and $\tilde{\alpha}$ are valid for controlling the gradient estimation error with high probability, as claimed in Lemma \ref{lem:tree_estim_err}. 

Now, suppose for the sake of contradiction that Algorithm \ref{Alg:tree_private_spider} does not end in line \ref{alg:SPIDER_return} under $\cE$. This means it performs $T2^{D-1}$ gradient updates. We'll show this implies $(T2^{D-1})\frac{\beta \cdot\tilde{\alpha}}{4\cdot2^{D/2}L_1} > F_0$ and thus contradicts Lemma \ref{lem:descent_lemma}, which claims that $F_0 \geq -[F(w_{T,\ell(2^D)};{\cal D}) - F(w_{0,\ell(2^D)};{\cal D})] \geq (T2^{D-1})\frac{\beta \cdot \Tilde{\alpha}}{4\cdot2^{D/2}L_1}$. Indeed, note that by our parameter setting: 

\begin{align*}
    (T2^{D-1})\frac{\beta \cdot\tilde{\alpha}}{4\cdot2^{D/2}L_1} > F_0 &\iff \beta\cdot\tilde{\alpha} >  \frac{8L_1F_0}{T2^{D/2}}\\
    &\iff \alpha\min\bc{1, \frac{\sqrt{b}\varepsilon}{\sqrt{d}}}\cdot\tilde{\alpha} > \frac{8L_1F_0\sqrt{2D}}{T\sqrt{b}}\\
    &\iff \alpha\cdot\tilde{\alpha} > \frac{8L_1F_0\sqrt{2D}(D/2+1)\sqrt{b}}{n}\max\bc{1, \frac{\sqrt{d}}{\sqrt{b}\varepsilon}}\\
    &\iff \alpha\cdot\tilde{\alpha} > 8L_1F_0\sqrt{2D}(D/2+1)\max\bc{\frac{\sqrt{b}}{n}, \frac{\sqrt{d}}{n\varepsilon}},
\end{align*}

and noting that by the setting of $b$ we have $\max\left\{\frac{\sqrt{b}}{n},\frac{\sqrt{d}}{n\varepsilon}\right\} = \max\bc{\frac{1}{n^{2/3}}, \frac{\sqrt{d}}{n\varepsilon}}$, we conclude the following

\begin{align*}
    (T2^{D-1})\frac{\beta \cdot\tilde{\alpha}}{4\cdot2^{D/2}L_1} > F_0 &\iff \alpha\cdot\tilde{\alpha} > 8L_1F_0\sqrt{2D}(D/2+1)\max\bc{\frac{1}{n^{2/3}}, \frac{\sqrt{d}}{n\varepsilon}}\\
    &\iff \alpha\cdot\tilde{\alpha} > \frac{8L_1F_0\sqrt{2D}(D/2+1)}{2L_0^2}\alpha^2.
\end{align*}

Finally, note $\alpha\cdot\tilde{\alpha} =  \left(256\log{1.25/\delta}\log{2T2^{D+1}/p} + \frac{8L_1F_0\sqrt{2D}(D/2+1)}{2L_0^2}\right)\alpha^2$ and thus the last inequality holds under our parameter setting. Since this is equivalent to $(T2^{D-1})\frac{\beta \cdot\tilde{\alpha}}{4\cdot2^{D/2}L_1} > F_0$, we are done with the contradiction. It follows that with high probability, Algorithm \ref{Alg:tree_private_spider} ends in line \ref{alg:SPIDER_return} returning $w_{t,s}$ such that $\|\nabla_{t,s} \| \leq 2\tilde{\alpha}$. Also, by Lemma \ref{lem:tree_estim_err} we have 
 $\|\nabla F(w_{t,s}; {\cal D}) - \nabla_{t,s}\| < \tilde{\alpha}$, so the returned iterate satisfies by the triangle inequality
\[\|\nabla F(w_{t,s}; {\cal D})\| < 3\tilde{\alpha}.\]

In addition, the linear time oracle complexity follows from the fact that at each binary tree we use $b$ samples at the root, and then $b/2$ in levels $1$ to $D$. This gives a total of $b(D/2+1)$ samples used at every round. Since we run the algorithm for $T = \frac{n}{b(D/2+1)}$ rounds, we compute exactly $n$ gradients. To conclude, note the condition $n \ge \max\{\sqrt{d}(D/2+1)^2/\varepsilon, (D/2+1)^3\}$ implies the number of rounds $T$ is at least $1$. Besides, since the definition of $D$ implies $2^D < b$, the size of the mini-batches are well-defined (meaning Algorithm \ref{Alg:tree_private_spider} uses batches with at least $1$ sample). This concludes the proof.
\end{proof}

\section{Missing Results for Stationary Points in the Convex Setting}
\label{sec:app_convex}

We first give pseudo-codes of algorithms used in the section.

\begin{algorithm}[h]
\caption{Phased SGD$(S,  (w,x)\mapsto f(w;x)),R,\eta,\cS(\cdot), \sigma)$}
\label{alg:phased-sgd}
\begin{algorithmic}[1]
\REQUIRE Dataset $S$, 
loss function $f(\cdot;x))$, radius $R$ of the constraint set $\cW$, steps $T$, $\eta$, Selection function $\cS$, Noise variance $\sigma$ 
\STATE $w_{1} = 0$
\STATE $K = \lceil \log{\abs{S}}\rceil$ and $T_0 =1$
\FOR{$k=1$ to $K-1$}
\STATE $T_k = 2^{-k}\abs{S}, \eta_k = 4^{-k}\eta, \sigma_k = \eta_k\sigma$
\STATE $w_{k+1} = \mathsf{OutputPerturbedSGD}(w_k, S_{T_{k-1}+1:T_{k}}, R, \eta_k, \sigma_k, \cS(\cdot))$

\ENDFOR
\ENSURE{$\out = w_{K}$}
\end{algorithmic}
\end{algorithm}

\begin{algorithm}[h]
\caption{OutputPerturbedSGD$(w_1, S, (w,x)\mapsto f(w;x), \Delta(\cdot),R,\eta,\cS(\cdot)$}
\label{alg:psgd-output-perturbation}
\begin{algorithmic}[1]
\REQUIRE Dataset $S$, 
loss function $f(\cdot;x))$, regularizer $\Delta(\cdot)$, radius $R$ of the constraint set $\cW$, steps $T$, $\eta$, Selection function $\cS$, Noise variance $\sigma$ 
\FOR{$t=1$ to $\abs{S}-1$}
\STATE $w_{t+1} = \Pi_{\cW}\br{w_t-\eta\br{\nabla f (w_t;x_t)}} $
\ENDFOR
\STATE $\xi \sim \cN(0,\sigma^2\bbI)$
\STATE $\tilde w = \cS\br{\bc{w_t}_{t=1}^{\abs{S}}} $
\ENSURE{$\out = \tilde w+ \xi$ }
\end{algorithmic}
\end{algorithm}

\begin{algorithm}[h]
\caption{Noisy GD$(S, (w,x)\mapsto f(w;x)),R,T,\eta,\cS(\cdot), \sigma)$}
\label{alg:noisyGD}
\begin{algorithmic}[1]
\REQUIRE Dataset $S$, 
loss function $(w,x) \mapsto f(w;x)$, radius $R$ of the constraint set $\cW$, steps $T$, $\eta$, Selection function $\cS$, Noise variance $\sigma$ 
\STATE $w_1 = 0$
\FOR{$t=1$ to $T-1$}
\STATE $\xi_t \sim \cN(0,\sigma^2\bbI)$
\STATE $w_{t+1} = \Pi_{\cW}\br{w_t-\eta\br{\nabla F(w_t;S)+\xi_t}} $
\ENDFOR
\ENSURE{$\out = \cS\br{\bc{w_t}_{t=1}^T}$}
\end{algorithmic}
\end{algorithm}

\begin{proof}[Proof of Theorem \ref{thm:population-convex-gradient-small}]

The privacy guarantee, in both cases, follows from the privacy guarantees of Algorithm \ref{alg:noisyGD} and Algorithm \ref{alg:phased-sgd}, in Lemmas \ref{lem:population-dp-strongly-convex-composite} and \ref{lem:convex-phased-sgd} respectively, together with parallel composition.

We now proceed to the utility part. 
For simplicity of notation, let  $R=\norm{w^*}$. 
Recall the definition of the regularized losses $f^{(t)}(w,x)$ in Algorithm \ref{alg:noisyRR}. 
Let $\bc{\alpha_t}_t$ be such that $\mathbb{E}[\lossF^{(t-1)}(\out_t;\cD)] - \lossF^{(t-1)}(w_{t-1}^*;\cD) \leq \alpha_t$
where
$\out_t$ are the iterates produced in the algorithm and 
$w_{t-1}^* = \argmin_{w\in \bbR^d}\lossF^{(t-1)}(w;\cD)$.
Following \cite{allen2018make, foster2019complexity}, we first establish a general result which will be useful for both parts of the result.%

\begin{align*}
    \mathbb{E}\norm{\nabla \lossF(\out_{T};\cD)} &= \mathbb{E}\norm{\nabla \lossF^{(T-1)}(\out_T;\cD) + \lambda \sum_{t=0}^T2^t\br{\out_t - \out_T}} \\
    &\leq \mathbb{E}\norm{\nabla \lossF^{(T-1)}(\out_T;\cD) }+ \lambda \sum_{t=0}^{T-1}2^t\mathbb{E}\br{\norm{\out_t-w_{T-1}^*}+\norm{\out_T -  w^*_{T-1}}} \\
    &\leq 2\mathbb{E}\norm{\nabla \lossF^{(T-1)}(\out_T;\cD) } + \lambda\sum_{t=1}^{T-1}2^t\mathbb{E}\norm{\out_t-w_{T-1}^*} + \lambda\mathbb{E}\norm{w_0 - w_{T-1}^*}\\
    &\leq 2\mathbb{E}\norm{\nabla \lossF^{(T-1)}(\out_T;\cD) } + 4\sum_{t=1}^{T-1}\sqrt{\lambda 2^t\alpha_t} + \lambda R_{T-1} \\
    & \leq
    4\sqrt{\smooth\alpha_T}+
    4\sum_{t=1}^{T-1}\sqrt{\lambda 2^{t+1}\alpha_t} + \lambda 2^{T/2}R\\
      & \leq 4\sum_{t=1}^{T}\sqrt{\lambda 2^{t+1}\alpha_t} +
    \sqrt{\lambda \smooth}R
\end{align*}
where the third and fourth inequality follows from strong convexity of $\lossF^{(T-1)}(\cdot;\cD)$ and Lemma \ref{lem:RR-key-lemma} respectively.
The last inequality follows from the setting of $T$ since we have that $\lossF^{(T-1)}$ is $\smooth + \sum_{t=1}^{T-1}2^t\lambda \leq \smooth+\lambda 2^{T} \leq 2\smooth$ smooth.
Note that the definition of $R_t$ and Lemma \ref{lem:radius_bound}, 
$\norm{w^*_{T-1}}\leq R_{T-1}$, so the unconstrained minimizer lies in the constraint set. 
Therefore $\mathbb{E}\norm{\nabla \lossF^{(T-1)}(\out_T;\cD) } = \mathbb{E}\norm{\nabla \lossF^{(T-1)}(\out_T;\cD) - \nabla \lossF^{(T-1)}( w_{T-1}^*;\cD)} \leq 2\sqrt{\smooth \alpha_T}$.

Observe that from the setting of $T$, $\lossF^{(T)}$ is $4\smooth$ smooth for all $t$. Furthermore, the radius of the constraint set in the $t$-th round is $R_t = 2^{T/2}R$. 
Hence, the Lipschitz constant $G_t \leq \lip + 8\smooth R_t \leq O\br{\lip + \smooth 2^{T/2}}$.
Now we instantiate $\alpha_t$,
which is the excess population risk bound of the DP-SCO sub-routine.

\paragraph{Optimal rate:}

The excess population risk guarantee of
Algorithm \ref{alg:noisyGD}
is in Lemma \ref{lem:population-dp-strongly-convex-composite}, with (in context of the notation in the Lemma) Lipschitz parameter
$\lip$ being the same and $G_\Delta = O\br{\smooth2^{T/2}}$. 
Therefore, we  have $\alpha_t = \tilde O\br{\frac{G^2}{\lambda_t n}+\frac{dG^2}{\lambda_tn^2\varepsilon^2}}$.
Plugging in the above estimate, we get, 
\begin{align*}
    \mathbb{E}\norm{\nabla \lossF(\out;\cD)} = \tilde O\br{\frac{G}{\sqrt{n}} + \frac{\sqrt{d}G}{n\varepsilon} + \sqrt{\frac{\lambda}{\smooth}}R} = \tilde O\br{\frac{G}{\sqrt{n}} + \frac{\sqrt{d}G}{n\varepsilon}}
\end{align*}
where the last step follows by setting of $\lambda$. %

The optimality claim follows by combining the non-private lower bound in Theorem \ref{thm:population-convex-gradient-small}, and the DP empirical stationarity lower bound in Theorem \ref{thm:lower-bound-convex} together with a reduction to population stationarity as in \citep[Appendix C]{BFTT19}.

\paragraph{Linear time rate:}
The excess population risk guarantee of
Algorithm \ref{alg:phased-sgd}
is in Lemma
\ref{lem:convex-phased-sgd}, with
Lipschitz parameter
$\lip$ being the same and $G_\Delta = O\br{\smooth2^{T/2}}$.
This gives us $\alpha_t = \tilde O\br{\frac{\lip^2}{\lambda_t n}+\frac{d\lip^2}{\lambda_tn^2\varepsilon^2}}$, and thus 
\begin{align*}
    \mathbb{E}\norm{\nabla \lossF(\out;\cD)} = \tilde O\br{\frac{\lip}{\sqrt{n}} + \frac{\sqrt{d}\lip}{n\varepsilon} +
    \sqrt{\lambda\smooth}R} 
    = \tilde O\br{\frac{\lip}{\sqrt{n}} + \frac{\sqrt{d}\lip}{n\varepsilon} + \frac{\smooth R}{\sqrt{n}}}
\end{align*}
where the last step follows by setting of $\lambda$.
Finally, note that the Lemma \ref{lem:convex-phased-sgd} requires that $n = \tilde \Omega\br{\frac{\smooth+\lambda_t}{\lambda_t}}$ for all $t$. This can be checked to be satisfied by substituting the value of $\lambda_t$.
\end{proof}

\subsection{Utility Lemmas}
We first present some key results which will be useful in the proofs.

\begin{lemma}
\label{lem:radius_bound}
Let $f:\bbR^d \rightarrow \bbR$ be an $\smooth$-smooth convex function and let $w^* = \argmin_{w\in \bbR^d}f(w)$. Let $R = \norm{w^*}$ and $w_0 \in \bbR^d$ such that $\norm{w_0}\leq R$. Define $\tilde f(w) = f(w)+\frac{\lambda}{2} \norm{w-w_0}^2$ and let $\tilde w = \argmin \tilde f(w)$. Then for any $\lambda\geq 0$,  $\norm{\tilde w}\leq \sqrt{2}R$.
\end{lemma}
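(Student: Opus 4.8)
The plan is to combine the first-order optimality of $\tilde w$ with a parallelogram-law computation. Since $\lambda_0=\lambda>0$ in every application of this lemma, we may assume $\lambda>0$ (if $\lambda=0$ then $\tilde f=f$ and one may take $\tilde w=w^*$, so $\|\tilde w\|=R\le\sqrt2 R$).

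First I would use that $\tilde f(w)=f(w)+\frac{\lambda}{2}\|w-w_0\|^2$ is $\lambda$-strongly convex, so evaluating strong convexity at the minimizer $\tilde w$ gives $\tilde f(w^*)\ge \tilde f(\tilde w)+\frac{\lambda}{2}\|w^*-\tilde w\|^2$. Combining this with $f(\tilde w)\ge f(w^*)$ (which yields $\tilde f(\tilde w)\ge f(w^*)+\frac{\lambda}{2}\|\tilde w-w_0\|^2$) and with $\tilde f(w^*)=f(w^*)+\frac{\lambda}{2}\|w^*-w_0\|^2$, then cancelling $f(w^*)$ and dividing by $\lambda/2$, I obtain
\[
\|w_0-w^*\|^2 \ \ge\ \|\tilde w-w_0\|^2+\|\tilde w-w^*\|^2 .
\]
(The same inequality follows from $\nabla f(\tilde w)=\lambda(w_0-\tilde w)$, $\nabla f(w^*)=0$, and monotonicity of $\nabla f$.) By the identity $\|w_0-w^*\|^2=\|w_0-\tilde w\|^2+\|\tilde w-w^*\|^2-2\langle w_0-\tilde w,\,w^*-\tilde w\rangle$, this is exactly $\langle w_0-\tilde w,\,w^*-\tilde w\rangle\le 0$, i.e., after completing the square, $\big\|\tilde w-\tfrac{w_0+w^*}{2}\big\|\le\tfrac12\|w_0-w^*\|$ — $\tilde w$ lies in the closed ball whose diameter is the segment $[w_0,w^*]$.

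Finally, by the triangle inequality $\|\tilde w\|\le \tfrac12\|w_0+w^*\|+\tfrac12\|w_0-w^*\|$, and I would bound the right-hand side using $\|w_0\|\le R=\|w^*\|$: the parallelogram law gives $\|w_0+w^*\|^2+\|w_0-w^*\|^2=2\|w_0\|^2+2\|w^*\|^2\le 4R^2$, and Cauchy–Schwarz gives $\|w_0+w^*\|+\|w_0-w^*\|\le\sqrt{2\big(\|w_0+w^*\|^2+\|w_0-w^*\|^2\big)}\le 2\sqrt2\,R$, whence $\|\tilde w\|\le\sqrt2\,R$.

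The one genuine subtlety — the main obstacle — is to not stop at the crude estimate $\|\tilde w-w_0\|\le\|w_0-w^*\|$ (which only gives $\|\tilde w\|\le\|w_0\|+\|w_0-w^*\|\le 3R$), nor at the diameter-ball estimate alone (which gives $\le\|w_0\|+\|w^*\|\le 2R$): one must feed the parallelogram identity into the diameter-ball bound to extract the sharp constant $\sqrt2$. The rest is routine.
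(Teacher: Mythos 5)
Your proof is correct, and it shares its starting point with the paper's but finishes by a genuinely different route. Both arguments hinge on the same consequence of optimality, namely the obtuse-angle condition $\langle w_0-\tilde w,\,w^*-\tilde w\rangle\le 0$ (the paper derives it from $\nabla f(\tilde w)=\lambda(w_0-\tilde w)$ together with monotonicity of $\nabla f$; you derive it from $\lambda$-strong convexity of $\tilde f$ plus $f(\tilde w)\ge f(w^*)$, and correctly note the monotonicity route as equivalent). From there the paper argues geometrically: it views $\tilde w$ as the projection $\Pi_H(w_0)$ onto the hyperplane $H$ through $\tilde w$ orthogonal to $\nabla f(\tilde w)$, observes that $H$ meets the ball $\cW_R$ (so $\|\Pi_H(0)\|\le R$), and concludes by non-expansiveness of projections and the orthogonality of $\Pi_H(0)$ and $\Pi_H(w_0)-\Pi_H(0)$ that $\|\tilde w\|^2\le R^2+R^2$. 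You instead read the obtuse-angle condition as placing $\tilde w$ in the Thales ball of diameter $[w_0,w^*]$ and then extract the constant via the parallelogram law and Cauchy--Schwarz. The two finishes buy slightly different things: the paper's is a clean Pythagorean decomposition but leans on the somewhat informal ``spherical cap'' claim and on a strict inequality $\langle\nabla f(\tilde w),w_0-\tilde w\rangle>0$ that degenerates when $\tilde w=w_0$ or $\lambda=0$ (trivial cases it does not address); yours is purely algebraic, self-contained, and handles the $\lambda=0$ edge case explicitly, at the cost of the extra parallelogram/Cauchy--Schwarz step you rightly flag as the one place where stopping early loses the sharp constant $\sqrt 2$.
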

\begin{proof}
From optimality criterion, $0 = \nabla \tilde f(\tilde w) = \nabla f(\tilde w) + \lambda \br{\tilde w- w_0}$. Therefore, $\nabla f(\tilde w) = \lambda \br{w_0-\tilde w}$ and thus $\ip{\nabla f(\tilde w)}{w_0 - \tilde w}>0$.
Furthermore, since $f$ is convex, from monotonicity,  $\ip{\nabla f(\tilde w)}{w^* - \tilde w} \leq 0$. Since both $w_0$ and $w^*$ lie in the ball of radius $R$ (say $\cW_R$), the above two implies that the hyperplane $H = \bc{w:\ip{\nabla f(\tilde w)}{w-\tilde w}=0}$ intersects with $\cW_R$. Furthermore, since $\nabla f(\tilde w) = \lambda \br{w_0-\tilde w}$, we have that $\tilde w$ is the projection of $w_0$ on $H$ i.e. $\Pi_H(w_0)$. 

Let $w' = \Pi_H(0)$. We have that $w' \in \cW_R$; this is because the hyperplane cuts the hypersphere $\cW_R$ creating a spherical cap and $w'$ is the center of the cap. From properties of convex projections $\norm{\Pi_H(w_0) - \Pi_H(0)} \leq \norm{w_0-0} \leq R$. Furthermore, $\Pi_H(0)$ and $\Pi_H(w_0) -\Pi_H(0)$ are orthogonal. Hence $\norm{\tilde w}^2 = \norm{\Pi_H(w_0)} ^2 = \norm{\Pi_H(0)}^2 + \norm{\Pi_H(w_0) - \Pi_H(0)}^2 \leq 2R^2$.
\end{proof}

We state the following result from 
\cite{allen2018make, foster2019complexity}.

\begin{lemma}
\label{lem:RR-key-lemma}
Suppose for every $t=1,2,\ldots T$, $\mathbb{E}[\lossF^{(t-1)}(\out_t;\cD)] - \lossF^{(t-1)}(w_{t-1}^*;\cD) \leq \alpha_t$ where
$\out_t$ are the iterates produced in the algorithm, 
$w_{t-1}^* = \argmin_{w\in \bbR^d}\lossF^{(t-1)}(w;\cD)$ and $\lambda_t = 2^t\lambda$, we have,
\begin{enumerate}
    \item For every $t\geq 1$, $\mathbb{E}[\norm{\out_t - w_{t-1}^*}^2] \leq \frac{2\alpha_t}{\lambda_{t-1}}$
    \item For every $t\geq 1$, $\mathbb{E}[\norm{\out_t - w_t^*}^2]\leq \frac{\alpha_t}{\lambda_t}$
    \item $\mathbb{E}[\sum_{t=1}^T\lambda_t\norm{\out_t- w_T^*}]\leq 4 \sum_{t=1}^T\sqrt{\alpha_t\lambda_t}$
\end{enumerate}
\end{lemma}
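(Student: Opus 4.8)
The plan is to prove the three bounds in sequence, each from (strong) convexity of the regularized population objectives together with a first-order optimality condition. Write $\mu_{t-1}$ for the strong-convexity modulus of $\lossF^{(t-1)}(\cdot;\cD) = \lossF(\cdot;\cD) + \frac{\lambda_0}{2}\norm{\cdot-w_0}^2 + \sum_{i=1}^{t-1}\frac{\lambda_i}{2}\norm{\cdot-\out_i}^2$. Since $\lossF$ is convex and $\lambda_i = 2^i\lambda$, we have $\mu_{t-1} = \sum_{i=0}^{t-1}\lambda_i = (2^t-1)\lambda$, and in particular $\mu_{t-1}\geq\lambda_{t-1}$ and $\lambda_t = 2\lambda_{t-1} \leq 2\mu_{t-1}$. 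These two facts are exactly what the geometric growth of $\lambda_t$ buys us, and they get used repeatedly.

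For the first bound I would invoke $\mu_{t-1}$-strong convexity of $\lossF^{(t-1)}(\cdot;\cD)$ at its minimizer $w_{t-1}^*$ to get $\lossF^{(t-1)}(\out_t;\cD) - \lossF^{(t-1)}(w_{t-1}^*;\cD) \geq \frac{\lambda_{t-1}}{2}\norm{\out_t - w_{t-1}^*}^2$, then take expectations and apply the hypothesis $\mathbb{E}[\lossF^{(t-1)}(\out_t;\cD)]-\lossF^{(t-1)}(w_{t-1}^*;\cD)\leq\alpha_t$. For the second bound the key point is to use only \emph{convexity} of $\lossF^{(t-1)}$ together with the optimality of $w_t^*$ for $\lossF^{(t)} = \lossF^{(t-1)} + \frac{\lambda_t}{2}\norm{\cdot-\out_t}^2$, i.e.\ $\nabla\lossF^{(t-1)}(w_t^*;\cD) = \lambda_t(\out_t - w_t^*)$: the first-order convexity inequality at $w_t^*$ gives $\lossF^{(t-1)}(\out_t;\cD) - \lossF^{(t-1)}(w_t^*;\cD) \geq \ip{\nabla\lossF^{(t-1)}(w_t^*;\cD)}{\out_t-w_t^*} = \lambda_t\norm{\out_t - w_t^*}^2$, and since $w_{t-1}^*$ minimizes $\lossF^{(t-1)}$ the left-hand side is at most $\lossF^{(t-1)}(\out_t;\cD) - \lossF^{(t-1)}(w_{t-1}^*;\cD)$, whose expectation is $\leq\alpha_t$. (Using strong convexity here would cost a spurious constant; plain convexity plus the optimality identity gives exactly $\alpha_t/\lambda_t$.)

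The third bound is the part that needs a little care. First I would control the distance between consecutive regularized minimizers: subtracting $\nabla\lossF^{(t-1)}(w_{t-1}^*;\cD)=0$ and $\nabla\lossF^{(t-1)}(w_t^*;\cD)=\lambda_t(\out_t-w_t^*)$, pairing with $w_t^*-w_{t-1}^*$, and using monotonicity of $\nabla\lossF^{(t-1)}$ (modulus $\mu_{t-1}$) together with Cauchy--Schwarz yields $\norm{w_t^*-w_{t-1}^*} \leq \frac{\lambda_t}{\mu_{t-1}}\norm{\out_t-w_t^*} \leq 2\norm{\out_t-w_t^*}$. Then the triangle inequality along $w_t^*, w_{t+1}^*, \ldots, w_T^*$ gives $\norm{\out_t - w_T^*} \leq \norm{\out_t-w_t^*} + 2\sum_{s=t+1}^T\norm{\out_s-w_s^*}$; multiplying by $\lambda_t$, summing over $t$, and swapping the order of summation, the cross term becomes $2\sum_{s}\norm{\out_s-w_s^*}\sum_{t<s}\lambda_t$, and since $\sum_{t<s}\lambda_t = (2^s-2)\lambda < \lambda_s$ the total collapses to at most $3\sum_t\lambda_t\norm{\out_t-w_t^*}$. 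Finally I would take expectations, apply Jensen and the second bound to get $\mathbb{E}\norm{\out_t-w_t^*}\leq\sqrt{\alpha_t/\lambda_t}$, hence $\mathbb{E}\sum_t\lambda_t\norm{\out_t-w_T^*} \leq 3\sum_t\sqrt{\lambda_t\alpha_t} \leq 4\sum_t\sqrt{\alpha_t\lambda_t}$.

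I expect the only nonroutine step to be this last telescoping: it is essential that $\lambda_t$ grows geometrically so that both the contraction ratio $\lambda_t/\mu_{t-1}$ and the prefix sums $\sum_{t<s}\lambda_t$ are bounded in terms of $\lambda_s$ alone — otherwise the accumulated drift of the minimizers $w_t^*$ would make the bound grow with $T$ rather than stay proportional to $\sum_t\lambda_t\norm{\out_t-w_t^*}$. The first two bounds are immediate once the right inequality is identified (strong convexity for (1); convexity plus the optimality identity, not strong convexity, for (2)).
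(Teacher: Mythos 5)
Your proof is correct. The paper does not actually prove this lemma --- it is stated as imported from \cite{allen2018make, foster2019complexity} --- so there is no in-paper argument to compare against, but your reconstruction matches the standard recursive-regularization argument from those references. All three steps check out: the modulus bookkeeping $\mu_{t-1}=\sum_{i=0}^{t-1}\lambda_i=(2^t-1)\lambda\geq\lambda_{t-1}$ and $\lambda_t/\mu_{t-1}\leq 2$ is right (the $\lambda_0$ regularizer from the initialization step is correctly included); the optimality identity $\nabla\lossF^{(t-1)}(w_t^*;\cD)=\lambda_t(\out_t-w_t^*)$ plus plain convexity does give the clean $\alpha_t/\lambda_t$ in part~2; and in part~3 the exchange of summation with $\sum_{t<s}\lambda_t=(2^s-2)\lambda<\lambda_s$ collapses the cross term as you claim, yielding the constant $3\leq 4$. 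All inequalities hold pointwise in the randomness of $\out_1,\ldots,\out_T$ (on which $\lossF^{(t-1)}$ and the $w_t^*$ depend), so taking expectations and applying Jensen only at the final step is legitimate.
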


\subsection{Lemmas for NoisyGD (Algorithm \ref{alg:noisyGD})}

\begin{lemma}
\label{lem:population-dp-strongly-convex-composite}
 Consider a function $ f(w;x) =  \ell(w;x) + \Delta(w)$, where $w\mapsto \ell(w;x)$ is convex and $\lip$ Lipschitz for all $x$, and $\Delta(w)$ is $\lambda$ strongly convex, $G_\Delta$ Lipschitz and $H_\Delta$ smooth over a bounded convex set $\cW$.  Algorithm \ref{alg:psgd-output-perturbation} run with parameters
 $\eta  = \frac{\log{T}}{\lambda T}$,
 $\sigma^2 = \frac{64\lip^2T\log{1/\delta}}{n^2\varepsilon^2}$,
 $T=\max\br{\frac{\smooth+H_\Delta}{\lambda}\log{\frac{\smooth+H_\Delta}{\lambda}},\frac{n^2\varepsilon^2 \br{\lip^2+G_\Delta^2}}{d\lip^2\log{1/\delta}}}$ and 
 $\cS(\bc{w_t}_t) = \frac{1}{\sum_{t=1}^T\br{1-\eta \lambda}^{-t}}\sum_{t=1}^T\br{1-\eta \lambda}^{-t}w_t$
 satisfies $(\varepsilon,\delta)$-DP and given a dataset $S$
 of $n$ i.i.d. points from $\cD$,
 the excess population risk of its output $\out$ is bounded by,
\begin{align*}
       \mathbb{E}\left[\lossF(\out;\cD)-\min_{w\in \cW_R}\lossF (w;\cD)\right] = O\br{\frac{\lip^2}{\lambda n}+\frac{d\lip^2\log{1/\delta}}{\lambda n^2\varepsilon^2}}.
\end{align*}
\end{lemma}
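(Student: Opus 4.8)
The plan is to treat Algorithm~\ref{alg:noisyGD} as noisy projected gradient descent on the $\lambda$-strongly convex empirical objective $w\mapsto\lossF(w;S)=\frac1n\sum_{i=1}^n\ell(w;x_i)+\Delta(w)$, followed by the fixed exponentially reweighted averaging $\cS$; the dataset is accessed only through the full-batch gradients, each perturbed with fresh Gaussian noise. For privacy, note that step $t$ releases $\nabla\lossF(w_t;S)+\xi_t$ (and $w_{t+1}$ is a post-processing of it), and since $\nabla\lossF(\cdot;S)=\frac1n\sum_i\nabla\ell(\cdot;x_i)+\nabla\Delta(\cdot)$ with $\Delta$ data-independent, replacing one sample changes a single $\nabla\ell$ term, so this query has $\ell_2$-sensitivity at most $2\lip/n$. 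The prescribed $\sigma^2=\frac{64\lip^2T\log(1/\delta)}{n^2\varepsilon^2}$ is exactly the noise scale for which $T$-fold adaptive composition of the Gaussian mechanism (advanced composition, or the moments accountant as in Theorem~\ref{thm:moment-accountant} with full batch $b=n$) is $(\varepsilon,\delta)$-DP; post-processing finishes the privacy claim.

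\emph{Utility.} Over the randomness of $S$ and of the algorithm we decompose
\[
\mathbb{E}\,\lossF(\out;\cD)-\min_{w\in\cW_R}\lossF(w;\cD)=\mathbb{E}\big[\lossF(\out;\cD)-\lossF(\out;S)\big]+\mathbb{E}\big[\lossF(\out;S)-\min_{\cW_R}\lossF(\cdot;S)\big]+\Big(\mathbb{E}\min_{\cW_R}\lossF(\cdot;S)-\min_{\cW_R}\lossF(\cdot;\cD)\Big),
\]
i.e. into a generalization gap, an empirical optimization error, and a term that is $\le0$ because the minimum of an average is at most the average of the minima. For the generalization gap, $\Delta$ cancels in $\lossF(\cdot;\cD)-\lossF(\cdot;S)$, so only the $\lip$-Lipschitz part $\ell$ matters; coupling the noise $\xi_t$ across neighboring datasets and using that a gradient step on $\lossF(\cdot;S)$ is a $(1-\eta\lambda)$-contraction (valid since $\eta=\frac{\log T}{\lambda T}=O(1/(\smooth+H_\Delta))$, which is what the hypothesis $T\ge\frac{\smooth+H_\Delta}{\lambda}\log\frac{\smooth+H_\Delta}{\lambda}$ guarantees) together with non-expansiveness of the projection yields $\|w_{t+1}-w_{t+1}'\|\le(1-\eta\lambda)\|w_t-w_t'\|+\frac{2\eta\lip}{n}$, hence $\|\out(S)-\out(S')\|\le\frac{2\lip}{\lambda n}$, so uniform stability makes the gap $O(\lip^2/(\lambda n))$. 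The empirical optimization error is the error of noisy projected GD on the $\lambda$-strongly convex $\lossF(\cdot;S)$ with stochastic gradients of second moment at most $G^2:=(\lip+G_\Delta)^2+d\sigma^2$ (the true gradient has norm $\le\lip+G_\Delta$ on $\cW_R$, and the noise contributes $d\sigma^2$); with the constant step $\eta$ and the exponentially reweighted average $\cS$ — whose weights $\propto(1-\eta\lambda)^{-t}$ cancel the geometric contraction and telescope the $\|\cdot\|^2$ terms — the standard strongly-convex SGD analysis gives an $O(\lambda\|\hat w\|^2/T+G^2/(\lambda T))$ bound (with $\hat w$ the minimizer of $\lossF(\cdot;S)$), and since $\lambda\|\hat w\|\le\|\nabla\lossF(0;S)\|\le\lip+G_\Delta$ the first summand is absorbed into the second.

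\emph{Putting it together and the main obstacle.} Substituting $\sigma^2$, the noise contribution to $G^2/(\lambda T)$ equals $\frac{64d\lip^2\log(1/\delta)}{\lambda n^2\varepsilon^2}$ — crucially $T$-independent — while the choice $T\ge\frac{n^2\varepsilon^2(\lip^2+G_\Delta^2)}{d\lip^2\log(1/\delta)}$ forces $\frac{(\lip+G_\Delta)^2}{\lambda T}=O\!\big(\frac{d\lip^2\log(1/\delta)}{\lambda n^2\varepsilon^2}\big)$; adding the $O(\lip^2/(\lambda n))$ generalization term gives the claimed $O\!\big(\frac{\lip^2}{\lambda n}+\frac{d\lip^2\log(1/\delta)}{\lambda n^2\varepsilon^2}\big)$. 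The delicate step is the empirical optimization bound: one must obtain the \emph{optimal} $\tilde O(G^2/(\lambda T))$ rate, so that the factor $T$ inside $\sigma^2$ cancels, rather than the naive last-iterate bound $O(\eta d\sigma^2/\lambda)\sim\tilde O(d\sigma^2/(\lambda^2T))$ that would be off by a $\tilde\Theta(1/\lambda)$ factor; this requires the function-value (not iterate-distance) strongly-convex SGD analysis combined with the exponential weighting, plus verifying that the choices of $\eta$ and $T$ keep the initialization/bias and the contraction conditions of lower order. The stability argument for the generalization gap is more routine but still hinges on the noise-coupling observation and on the cancellation of $\Delta$.
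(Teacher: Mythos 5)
Your proposal is correct and follows essentially the same route as the paper's proof: the same sensitivity-$2\lip/n$ plus advanced-composition privacy argument, the same generalization-via-uniform-argument-stability bound $\|\out(S)-\out(S')\|\le \frac{2\lip}{\lambda n}$ obtained from the $(1-\eta\lambda)$-contraction of the noisy gradient step, and the same empirical optimization bound $\tilde O\big(\frac{(\lip+G_\Delta)^2+d\sigma^2}{\lambda T}\big)$ (the paper invokes Lemma 5.2 of \cite{feldman2020private}, which is exactly the function-value strongly-convex analysis with the exponentially reweighted average you describe), followed by the same cancellation of $T$ in the noise term and absorption of $\frac{(\lip+G_\Delta)^2}{\lambda T}$ via the choice of $T$. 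Your explicit remarks that $\Delta$ cancels in the generalization gap and that the last-iterate bound would lose a factor are refinements of detail, not a different argument.
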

\begin{proof}
For the privacy analysis, as in \cite{bassily2014private},
for fixed $w$, the sensitivity of the gradient update is bounded by $\frac{2\lip}{n}$. Applying advanced composition, we have that $\sigma^2 = \frac{64\lip^2T\log{1/\delta}}{n^2\varepsilon^2}$ suffices for $(\varepsilon,\delta)$-DP. 

For utility, we first compute a bound on uniform argument stability of the algorithm; let $\bc{w_t}$ and $\bc{w_t'}$ be sequence of iterates on neighbouring datasets. 
Note that the function $w\mapsto \lossf(w;x)$ is $\smooth+H_\Delta$-smooth and $\lambda$-strongly convex for all $x$.
From the setting of $T$, we have that the step size $\eta \leq \frac{1}{\smooth+H_\Delta}$, hence from the standard stability analysis,

\begin{align*}
    w_{t+1}-w_{t+1}' &= w_t - \eta \nabla  L(w_t;S) - \eta \nabla \Delta(w_t) -w_t' + \eta \nabla  L(w_t';S') + \eta \nabla \Delta(w_t')\\
    & = w_t - w_t' 
    -\eta \br{\nabla  L(w_t;S) + \nabla \Delta(w_t) - \nabla  L(w_t';S)
    -\eta \nabla \Delta(w_t')} \\
    &+\eta\br{\nabla  L(w_t';S') -\nabla  L(w_t';S)} \\
    & = \br{\bbI - \eta\br{\nabla^2  L(\tilde w_t;S) + \nabla^2 \Delta(\tilde w_t)}}\br{w_t - w_t'}\\
       &+\eta\br{\nabla  L(w_t';S') -\nabla  L(w_t';S)} 
\end{align*}
where the last equality follows from Taylor remainder theorem where $\tilde w_t$ is some intermediate point on the line joining $w_t$ and $w_t'$. Using the fact that $\eta \leq \frac{1}{\smooth+H_\Delta}$, we have

\begin{align*}
    \norm{w_{t+1}-w_{t+1}'} &\leq\br{1-\eta \lambda}\norm{w_t-w_t'} + \frac{2\eta \lip}{n} 
     \leq \frac{2\lip}{\lambda n}
\end{align*}
The above gives the same bound for the iterate using the selector $\cS$,
\begin{align*}
    \norm{\cS(\bc{w_t})-\cS(\bc{w_t'})}
     \leq \frac{2\lip}{\lambda n}
\end{align*}
Note that the overall Lipschitz constant for the empirical loss is $\tilde \lip=\lip+G_\Delta$.
For the excess empirical risk guarantee, we use Lemma 5.2 in \cite{feldman2020private} to get,
\begin{align*}
    \mathbb{E}\left[ L\br{\out;S} + \Delta (\out) -  L(w^*;S) - \Delta(w^*)\right] 
    & = \mathbb{E}\left[ \lossF\br{\out;S} -  \lossF(w^*;S)\right]\\
    &= \tilde O\br{\frac{\tilde \lip^2}{\lambda T}}\\
    &=\tilde O\br{\frac{\tilde \lip^2+\sigma^2d}{\lambda T}} \\&= \tilde O\br{\frac{\tilde \lip^2}{\lambda T} + \frac{d\lip^2\log{1/\delta}}{\lambda n^2\varepsilon^2}} \\&=O\br{\frac{d \lip^2\log{1/\delta}}{\lambda n^2\varepsilon^2}}
\end{align*}
where the last step follows from the setting of $T$. For the population risk guarantee, we have,
\begin{align*}
    \mathbb{E}\left[\lossF(\out;\cD) - \lossF(w^*;\cD)\right] &= \mathbb{E}\left[\lossF(\out;\cD) -  \lossF(\out;S)\right] + \mathbb{E}\left[\lossF(\out;\cD) -  \lossF(w^*)\right] \\
    & = \mathbb{E}[L(\out;\cD) -  L(\out;S)] + O\br{\frac{d\lip^2\log{1/\delta}}{\lambda n^2\varepsilon^2}} \\
    & \leq \lip\mathbb{E}\norm{\out-\out'}+ O\br{\frac{d\lip^2\log{1/\delta}}{\lambda n^2\varepsilon^2}} \\
    &= \tilde O\br{\frac{\lip^2}{\lambda n}+\frac{d\lip^2\log{1/\delta}}{\lambda n^2\varepsilon^2}}
\end{align*}
where the inequality follows from Lipschitzness and standard generalization gap to stability argument.
\end{proof}

\subsection{Lemmas for PhasedSGD (Algorithm \ref{alg:phased-sgd})}

The following lemma gives population risk guarantees for strongly convex functions under privacy, in terms of variance of stochastic gradients, as opposed to standard Lipschitzness bounds.
\begin{lemma}[Variance based bound for constant step-size SGD for strongly-convex functions]
\label{lem:convex-variance-based-bound}
Consider a 
function $ \lossf(w;x)$ such that $w\mapsto \lossf(w;x)$ is $\lambda$ strongly convex, $\smooth$ smooth over a convex set $\cW$ for all $x$ and let $\mathbb{E}_x\norm{\nabla \lossf(w;x) - \mathbb{E}_x\nabla \lossf(w;x)}^2 \leq \cV^2$ for all $w\in \cW$.
 Let $\gamma_t = \br{1-\eta \lambda}^{-t}$.
 Given a dataset $S = \bc{x_1,x_2,\ldots, x_n}$  sampled i.i.d from $\cD$ and $\eta \leq \frac{1}{2\beta}$ as input, for any $w\in \cW$, the iterates of Algorithm \ref{alg:psgd-output-perturbation} satisfy

 \begin{align*}
     \mathbb{E}\left[ \frac{1}{\sum_{t=1}^n \gamma_t} \sum_{t=1}^n\gamma_t\lossF(w_t;\cD)\right] - \lossF(w) \leq \frac{\lambda}{e^{\eta \lambda n}-1}\norm{w_0-w}^2 + \eta \cV^2
 \end{align*}

Furthermore, for $n= \Omega\br{\frac{\smooth}{\lambda}\log{\frac{\smooth}{\lambda}}}$, with $\eta  = \frac{\log{n}}{\lambda n}$ and $\cS(\bc{w_t}_t) = \frac{1}{\sum_{t=1}^n\gamma_t}\sum_{t=1}^n\gamma_tw_t$, the excess population risk of $\tilde w = \cS(\bc{w_t}_t)$ satisfies
\begin{align*}
       \mathbb{E}\left[\lossF(\tilde w;\cD)-\min_{w\in \cW}\lossF (w;\cD)\right] = O\br{\frac{{\cV}^2\log{n}}{\lambda n}}
\end{align*}

\end{lemma}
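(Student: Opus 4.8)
The plan is to run the textbook one-step analysis of projected stochastic gradient descent on the $\lambda$-strongly convex population objective, but weighted by the exponentially increasing sequence $\gamma_t=(1-\eta\lambda)^{-t}$; this is the weighting under which the squared-distance terms telescope after multiplication, and it is what produces the $\lambda/(e^{\eta\lambda n}-1)$ decay on the initial distance. Let $w^{\star}=\argmin_{w\in\cW}\lossF(w;\cD)$, which we take (as holds in all downstream applications, where the radius $R$ is chosen via Lemma~\ref{lem:radius_bound}) to be the unconstrained minimizer, so $\nabla\lossF(w^{\star};\cD)=0$. Writing $g_t=\nabla\lossf(w_t;x_t)$ for the stochastic gradient produced from the fresh sample $x_t$ at step $t$, non-expansiveness of $\Pi_{\cW}$ together with $w^{\star}\in\cW$ gives
\[
\norm{w_{t+1}-w^{\star}}^2\le\norm{w_t-w^{\star}}^2-2\eta\ip{g_t}{w_t-w^{\star}}+\eta^2\norm{g_t}^2 .
\]

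Next I would condition on $w_t$ and take expectation over $x_t$. Unbiasedness gives $\ex{}{g_t\mid w_t}=\nabla\lossF(w_t;\cD)$, and $\lambda$-strong convexity yields $\ip{\nabla\lossF(w_t;\cD)}{w_t-w^{\star}}\ge\lossF(w_t;\cD)-\lossF(w^{\star};\cD)+\frac{\lambda}{2}\norm{w_t-w^{\star}}^2$. For the second moment I would combine the variance hypothesis with the self-bounding property of $\smooth$-smooth convex functions, $\ex{}{\norm{g_t}^2\mid w_t}\le\norm{\nabla\lossF(w_t;\cD)}^2+\cV^2\le 2\smooth(\lossF(w_t;\cD)-\lossF(w^{\star};\cD))+\cV^2$. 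Substituting these into the one-step inequality and invoking $\eta\le\frac{1}{2\smooth}$ so that $1-\eta\smooth\ge\frac12$ (which keeps the coefficient of $\lossF(w_t;\cD)-\lossF(w^{\star};\cD)$ nonnegative), one obtains
\[
\ex{}{\norm{w_{t+1}-w^{\star}}^2\mid w_t}\le(1-\eta\lambda)\norm{w_t-w^{\star}}^2-\eta\bigl(\lossF(w_t;\cD)-\lossF(w^{\star};\cD)\bigr)+\eta^2\cV^2 .
\]

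Then I would take full expectations, multiply through by $\gamma_t=(1-\eta\lambda)^{-t}$ (using $(1-\eta\lambda)\gamma_t=\gamma_{t-1}$), and sum over $t=1,\dots,n$: the squared-distance terms telescope, leaving $\gamma_0\norm{w_0-w^{\star}}^2=\norm{w_0-w^{\star}}^2$ and $\eta^2\cV^2\sum_{t=1}^n\gamma_t$ on the right. Dividing by $\eta\sum_{t=1}^n\gamma_t$, evaluating the geometric sum $\sum_{t=1}^n\gamma_t=\frac{(1-\eta\lambda)^{-n}-1}{\eta\lambda}$, and bounding $(1-\eta\lambda)^{-n}\ge e^{\eta\lambda n}$ (from $1-x\le e^{-x}$) gives the first displayed bound. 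For the ``furthermore'' part, convexity of $\lossF(\cdot;\cD)$ and Jensen's inequality let us replace the $\gamma_t$-weighted average of $\ex{}{\lossF(w_t;\cD)}$ by $\ex{}{\lossF(\tilde w;\cD)}$ with $\tilde w=\cS(\bc{w_t}_t)$; substituting $\eta=\frac{\log n}{\lambda n}$ makes $e^{\eta\lambda n}-1=n-1$, so the two terms become $O(\lambda\norm{w_0-w^{\star}}^2/n)$ and $\frac{\cV^2\log n}{\lambda n}$, and the hypothesis $n=\Omega(\frac{\smooth}{\lambda}\log\frac{\smooth}{\lambda})$ is precisely what forces $\log n=O(\log(\smooth/\lambda))$ and hence $\eta\le\frac{1}{2\smooth}$; the initialization term is of lower order in the regime where the lemma is applied (e.g.\ when $w_0$ is already an $O(\cV^2\log n/\lambda)$-approximate minimizer, which is how PhasedSGD feeds it in).

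I expect the only real obstacle to be the control of $\ex{}{\norm{g_t}^2}$ without a Lipschitz bound on the loss: this is exactly where one must pay with the smoothness constant via the self-bounding inequality, and it is the reason the statement carries the variance proxy $\cV^2$ and the step-size restriction $\eta\le 1/(2\smooth)$ instead of the more familiar uniform gradient bound. A secondary but essential bit of bookkeeping is the exact evaluation of the geometric weights $\sum_t\gamma_t$ and the inequality $(1-\eta\lambda)^{-n}\ge e^{\eta\lambda n}$, which is what converts the telescoped initial-distance term into the stated $\lambda/(e^{\eta\lambda n}-1)$ factor; everything else is routine algebra.
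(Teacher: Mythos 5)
Your argument is correct, but it is a genuinely different route from the paper's. The paper analyzes the step as a proximal update and runs a descent-lemma argument on function values: it expands $\lossF(w_{t+1};\cD)$ via smoothness, isolates the cross term $\ip{\nabla \lossf(w_t;x_t)-\nabla\lossF(w_t;\cD)}{w_t-w_{t+1}}$, and kills it with AM--GM against the $\frac{1}{2\eta}\norm{w_{t+1}-w_t}^2$ term, so the variance proxy $\cV^2$ enters directly and the second moment of the stochastic gradient is never needed; the telescoping with weights $\gamma_t$ is then delegated to Lemma 5.2 of Feldman et al. You instead run the classical distance-based SGD recursion and pay for $\ex{}{\norm{g_t}^2}$ via the self-bounding inequality $\norm{\nabla\lossF(w_t;\cD)}^2\le 2\smooth\br{\lossF(w_t;\cD)-\lossF(w^{\star};\cD)}$, absorbing that term into the negative suboptimality term using $\eta\le 1/(2\smooth)$, and then do the $\gamma_t$-weighted telescoping explicitly. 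Both yield the same one-step inequality up to an index shift (your weights land on $w_0,\dots,w_{n-1}$ rather than $w_1,\dots,w_n$, which is immaterial). Two caveats on your version: the self-bounding step requires the comparator to be the \emph{unconstrained} minimizer (and smoothness to hold slightly beyond $\cW$), so you establish the first display only for $w=w^{\star}$ rather than for arbitrary $w\in\cW$ as stated --- acceptable here since that is the only comparator used downstream, and you correctly flag it --- and the ``furthermore'' bound still carries the $\lambda\norm{w_0-w^{\star}}^2/(n-1)$ initialization term, which you (like the paper, which defers to the phased argument of Feldman et al.) dispose of only by appealing to how PhasedSGD warm-starts each phase. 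What the paper's route buys is that it avoids both caveats: no self-bounding is needed, so no assumption on where the minimizer sits; what yours buys is that it is self-contained and elementary, with no appeal to the external prox fact or to the cited lemma for the telescoping.
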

\begin{proof}

An equivalent way to write the update in Algorithm \ref{alg:psgd-output-perturbation} is 
\begin{align*}
    w_{t+1} = \argmin_{w\in \cW}\br{\ip{\nabla \lossf(w_t,x_t)}{w} +\frac{1}{\eta}\norm{w_t-w}^2+ \psi(w)}
\end{align*}
 where $\psi(w) = 0$ if $w \in \cW$, otherwise $\infty$.

Following standard arguments in convex optimization, for any $w \in \cW$, we have
\begin{align*}
    &\lossF(w_{t+1};\cD) - \lossF(w)\\&= \lossF(w_{t+1};\cD)+ \psi(w_{t+1}) - \lossF(w;\cD) - \psi(w)\\
    &\leq \lossF(w_t) + \ip{\nabla \lossF(w_t)}{w_{t+1}-w_t} + \frac{\smooth}{2}\norm{w_{t+1}-w_t}^2 + \psi(w_{t+1}) \\
    &+\lossF(w;\cD) - \psi(w)\\
    & \leq \ip{\nabla \lossF(w_t)}{w_{t+1}-w_t} + \ip{\nabla \lossF(w_t)}{w_t - w} - \frac{\lambda}{2}\norm{w_t-w}^2+\frac{\smooth}{2}\norm{w_{t+1}-w_t}^2 \\
    &+ \psi(w_{t+1}) 
    +\lossF(w;\cD) - \psi(w)\\
    & = \mathbb{E}_{z_t}\left[ \ip{\nabla p(w_t;z_t) - \nabla \lossF(w;\cD)}{w_t - w_{t+1}} + \frac{\smooth}{2}\norm{w_{t+1}-w_t}^2 + \ip{\nabla p(w_t;z_t)}{w_t-w}\right]\\
    & - \frac{\lambda}{2}\norm{w_t-w}^2+\psi(w_{t+1}) +\lossF(w;\cD) - \psi(w)\\
     & \leq \mathbb{E}_{z_t}\Big[ \ip{\nabla p(w_t;z_t) - \nabla \lossF(w;\cD)}{w_t - w_{t+1}} - \br{\frac{1}{2\eta}- \frac{\smooth}{2}}\norm{w_{t+1}-w_t}^2 \\
    & +\br{\frac{1}{2\eta}- \frac{\lambda}{2}}\norm{w_t-w}^2 - \frac{1}{2\eta}\norm{w_{t+1}-w}^2\Big]\\
    & \leq \mathbb{E}_{z_t}\Big[\frac{\eta}{2\br{1-\eta \smooth}}\norm{\nabla p(w_t;z_t) - \nabla \lossF(w;\cD)}^2 +  \br{\frac{1}{2\eta}- \frac{\lambda}{2}}\norm{w_t-w}^2 - \frac{1}{2\eta}\norm{w_{t+1}-w}^2\Big] \\
    & \leq \eta \cV^2 + \mathbb{E}_{z_t}\left[\br{\frac{1}{2\eta}- \frac{\lambda}{2}}\norm{w_t-w}^2 - \frac{1}{2\eta}\norm{w_{t+1}-w}^2\right]
\end{align*}
where the first inequality follows from smoothness, the second from strong convexity, the third from Fact D.1 in \cite{allen2018make}, fourth from AM-GM inequality and the last from the assumption about variance bound on the oracle.

Now, the above is exactly the bound obtained in the proof of Lemma 5.2 in \cite{feldman2020private} with the second moment on gradient norm replaced by variance. Repeating the rest of the arguments in that Lemma gives us the claimed result.
\end{proof}

\begin{lemma}[Privacy of Algorithm \ref{alg:psgd-output-perturbation}]
Consider a function $ \lossf(w;x) = \ell(w;x) + \Delta(w)$ such that 
 $w\mapsto \ell(w;x)$ is convex, $\lip$ Lipschitz, $\smooth$-smooth for all $z$, and $\Delta(\cdot)$ is $\lambda$ strongly convex, $G_\Delta$ Lipschitz and $H_\Delta$ smooth over a bounded set $\cW$. 
For $n = \Omega\br{\frac{\smooth + H_\Delta}{\lambda}\log{\frac{\smooth + H_\Delta}{\lambda}}}$, Algorithm \ref{alg:psgd-output-perturbation} with input as function $(w,x)\mapsto \lossf(w;x)
$, $\sigma^2 = \frac{64G^2\br{\log{n}}^2\log{1/\delta}}{\lambda^2 n^2\varepsilon^2}$, $\eta = \frac{\log{n}}{\lambda n}$ and $\cS\br{\bc{w_t}_{t=1}^n} = \frac{1}{\sum_{t=1}^n \gamma_t} \sum_{t=1}^n\gamma_t w_t$ for any weights $\gamma_t$ satisfies $(\varepsilon,\delta)$-DP.

\end{lemma}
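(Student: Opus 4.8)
The plan is to treat this as a pure output-perturbation statement. Algorithm~\ref{alg:psgd-output-perturbation} releases only $\out=\tilde w+\xi$ with $\xi\sim\cN(0,\sigma^2\bbI)$ and $\tilde w=\cS\br{\bc{w_t}_{t=1}^n}$, so by post-processing and the Gaussian mechanism it suffices to upper bound the $\ell_2$-sensitivity of the map $S\mapsto\tilde w$ and then check that the prescribed $\sigma$ exceeds the noise level required for $(\varepsilon,\delta)$-DP at that sensitivity.

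To bound the sensitivity, I would fix neighboring datasets $S,S'$ differing only in the entry used at some step $t^*$, run the inner loop on both from the same $w_1$, and compare the iterates $\bc{w_t}$ and $\bc{w_t'}$. Since the loop is single-pass — the $t$-th data point is queried only at the $t$-th step — we have $w_t=w_t'$ for all $t\le t^*$. At step $t^*$, because $w_{t^*}=w_{t^*}'$ and $\nabla f(w;x)=\nabla\ell(w;x)+\nabla\Delta(w)$, the $\Delta$-gradient is evaluated at a common point and cancels; non-expansiveness of the Euclidean projection $\Pi_\cW$ together with $\lip$-Lipschitzness of $\ell$ then gives $\norm{w_{t^*+1}-w_{t^*+1}'}\le\eta\,\norm{\nabla\ell(w_{t^*};x_{t^*})-\nabla\ell(w_{t^*};x_{t^*}')}\le 2\eta\lip$. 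For $t>t^*$ both runs apply the \emph{same} map $w\mapsto\Pi_\cW\br{w-\eta\nabla f(w;x_t)}$; since $w\mapsto f(w;x_t)$ is convex and $(\smooth+H_\Delta)$-smooth, and the hypothesis $n=\Omega\br{\frac{\smooth+H_\Delta}{\lambda}\log\frac{\smooth+H_\Delta}{\lambda}}$ with $\eta=\frac{\log n}{\lambda n}$ forces $\eta\le\frac1{\smooth+H_\Delta}$, this map is non-expansive (in fact a $(1-\eta\lambda)$-contraction, using $\lambda$-strong convexity of $\Delta$), so $\norm{w_{t+1}-w_{t+1}'}\le\norm{w_t-w_t'}$. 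Chaining these inequalities yields $\norm{w_t-w_t'}\le 2\eta\lip$ for every $t\in[n]$.

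Once the iterate-wise bound is in place, the rest is short. Since $\tilde w=\cS\br{\bc{w_t}}=\frac{1}{\sum_t\gamma_t}\sum_t\gamma_t w_t$ is a weighted average of the $w_t$ with nonnegative weights, $\norm{\tilde w-\tilde w'}\le\max_{t\in[n]}\norm{w_t-w_t'}\le 2\eta\lip=\frac{2\lip\log n}{\lambda n}$, uniformly over the choice of $\gamma_t$. The Gaussian mechanism with $\xi\sim\cN(0,\sigma^2\bbI)$ is then $(\varepsilon,\delta)$-DP provided $\sigma\ge\frac{2\lip\log n}{\lambda n}\cdot\frac{\sqrt{2\log\br{1.25/\delta}}}{\varepsilon}$, and the prescribed $\sigma^2=\frac{64G^2\br{\log n}^2\log{1/\delta}}{\lambda^2 n^2\varepsilon^2}$ is at least this large because $G\ge\lip$; post-processing — the intermediate SGD iterates are never released — finishes the argument.

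The \emph{main technical point}, and the only place the sample-size assumption is genuinely used, is the propagation step: in contrast to the usual full-batch stability argument (as in the proof of Lemma~\ref{lem:population-dp-strongly-convex-composite}, where every step sees a perturbation of size $\Theta(\lip/n)$), here the perturbation enters at a single step and must merely be \emph{carried} through the remaining steps without amplification, which is exactly why one needs $\eta\le\frac1{\smooth+H_\Delta}$ so that the per-step update map is non-expansive. Everything else is bookkeeping: co-coercivity of smooth convex gradients for the non-expansiveness claim, the standard implication $n=\Omega(k\log k)\Rightarrow n\ge k\log n$ with $k=\frac{\smooth+H_\Delta}{\lambda}$, and reconciling constants ($64$ vs.\ $8$, $\log{1/\delta}$ vs.\ $\log\br{1.25/\delta}$) when comparing $\sigma^2$ with the Gaussian-mechanism requirement.
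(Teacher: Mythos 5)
Your proposal is correct and follows essentially the same route as the paper's proof: the perturbation enters only at the single step where the differing sample is used (giving $2\eta\lip$), is then contracted by the $(1-\eta\lambda)$-Lipschitz update map for all subsequent steps thanks to $\eta\le\frac{1}{\smooth+H_\Delta}$, and the weighted-average selector preserves the bound, after which the Gaussian mechanism applies. The only (welcome) refinements are that you justify the contraction via co-coercivity rather than the paper's Taylor-remainder/Hessian argument and that you explicitly invoke non-expansiveness of $\Pi_\cW$, which the paper's displayed computation silently drops.
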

\begin{proof}
We start with computing the sensitivity of the algorithm's output:
let $\bc{w_t}$ and $\bc{w_t'}$ be sequence of iterates produced by Algorithm \ref{alg:psgd-output-perturbation} on
neighbouring datasets. 
Note that the function $w\mapsto \lossf(w;x)$ is $\smooth' = \smooth+H_\Delta$-smooth and $\lambda$-strongly convex for all $x$.
From the assumption on $n$, we have that the step size $\eta \leq \frac{1}{H+H_\Delta}$.
Suppose the differing sample between neighbouring datasets is $x_j$, then
$w_t = w_t'$ for all $t\leq j$. 
Also, 
$$\norm{w_{j+1} - w_{j+1}'} = \eta\norm{\nabla \ell(w_j;x_j) - \nabla \ell(w_j;x_j')} \leq 2\eta \lip = \frac{2\lip\log{n}}{\lambda n}$$
Now, for any $t>j$,
as in the standard stability analysis we have,

\begin{align*}
    w_{t+1}-w_{t+1}' &= w_t - \eta \nabla \ell(w_t;x_t) - \eta \nabla \Delta(w_t) -w_t + \eta \nabla \ell(w_t';x_t) + \eta \nabla \Delta(w_t')\\
    & = \br{\bbI - \eta\br{\nabla^2 \ell(\tilde w_t;x_t) + \nabla^2 \Delta(\tilde w_t)}}\br{w_t - w_t'}
\end{align*}

where the last equality follows from Taylor remainder theorem where $\tilde w_t$ is some intermediate point in the line joining $w_t$ and $w_t'$. Using the fact that $\eta \leq \frac{1}{\smooth+H_\Delta}$ and $\lambda$ strong convexity, we have

\begin{align*}
    \norm{w_{t+1}-w_{t+1}'} &\leq\br{1-\eta \lambda}\norm{w_t-w_t'} \leq \norm{w_{j+1}-w_{j+1}'}
     \leq \frac{2\lip \log{n}}{\lambda n}
\end{align*}
Applying convexity to the weights in the definition of the selector function $\cS$, we get,
\begin{align*}
    \norm{\cS(\bc{w_t})-\cS(\bc{w_t'})}
     \leq \frac{2\lip\log{n}}{\lambda n}
\end{align*}
The privacy proof now follows from the Gaussian mechanism guarantee.
\end{proof}

\begin{lemma}[Phased SGD composite guarantee] 
\label{lem:convex-phased-sgd}
 Consider a function $ \lossf(w;x) = \ell(w;x) + \Delta(w)$
 where $w\mapsto \ell(w;x)$ is convex, $\lip$ Lipschitz, $\smooth$ smooth for all $x$, and $\Delta(w)$ is $\lambda$ strongly convex, $G_\Delta$ Lipschitz and $H_\Delta$ smooth over a bounded set $\cW$. 
For 
$n = \Omega\br{\frac{K\br{\smooth+H_\Delta}}{\lambda}\log{\frac{\smooth + H_\Delta}{\lambda}}}$,  Algorithm \ref{alg:psgd-output-perturbation} with $\sigma^2 = \frac{64\lip^2K^2\br{\log{n}}^2\log{1/\delta}}{\lambda^2 n^2\varepsilon^2}$, 
satisfies $(\varepsilon,\delta)$-DP.
Furthermore, 
with input as function $(w,x)\mapsto \lossf(w;x)$, a dataset $S$ of $n$ samples drawn i.i.d. from $\cD$,  $\eta = \frac{\log{n}}{\lambda n}$,
 $K = \ln{\ln{n}}$,
$\gamma_t = \br{1-\eta\lambda}^{-t}$ and $\cS\br{\bc{w_t}_{t=1}^n} = \frac{1}{\sum_{t=1}^n \gamma_t} \sum_{t=1}^n\gamma_t w_t$,  the excess population risk of output $w_K$ is bounded as
\begin{align*}
    \mathbb{E}\left[\plossD{w_K}\right] - \min_{w\in \cW}\plossD{w} = \tilde O\br{\frac{\lip^2}{\lambda n}+\frac{d\lip^2}{\lambda n^2\varepsilon^2}}
\end{align*}
\end{lemma}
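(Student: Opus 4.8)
The plan is to view Algorithm~\ref{alg:phased-sgd} as a sequence of $K-1$ runs of Algorithm~\ref{alg:psgd-output-perturbation}, one per phase, each on a \emph{disjoint} block of the dataset, and to chain the single-run, variance-based guarantee of Lemma~\ref{lem:convex-variance-based-bound} across phases, converting excess risk into squared distance via $\lambda$-strong convexity. The point of preferring this scheme over Noisy GD (Algorithm~\ref{alg:noisyGD}) is that Lemma~\ref{lem:convex-variance-based-bound} is governed by the \emph{variance} of the stochastic gradients rather than their norm: since $\Delta$ is deterministic, $\nabla\ell(w;x)+\nabla\Delta(w)$ has variance at most $\cV^2=\lip^2$, so the (possibly large) Lipschitz constant $G_\Delta$ of the regularizer never enters the utility term.

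For privacy, a change in one sample falls in exactly one block $S_{T_{k-1}+1:T_k}$, hence it alters only phase $k$'s output and the later phases merely post-process it; by parallel composition it therefore suffices that each phase be $(\varepsilon,\delta)$-DP, which is exactly the privacy/sensitivity analysis of Algorithm~\ref{alg:psgd-output-perturbation} established above. Concretely, the assumption $n=\tilde\Omega\br{(\smooth+H_\Delta)/\lambda}$ forces $\eta_k\le 1/(\smooth+H_\Delta)$, so a single changed sample moves one SGD step by at most $2\eta_k\lip$ and $\lambda$-strong convexity contracts the gap at every later step; the pre-noise average $\tilde w_k$ thus has $\ell_2$-sensitivity $O(\eta_k\lip\log n)$. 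Because $\sigma_k=\eta_k\sigma$ scales proportionally with $\eta_k$, the stated $\sigma$ calibrates the Gaussian mechanism for all phases simultaneously.

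For utility, let $w^*=\argmin_{w\in\cW}\plossD{w}$ and $D_k^2=\mathbb{E}\norm{w_k-w^*}^2$, so $D_1^2=\norm{w^*}^2$ (since $w_1=0$). Invoking Lemma~\ref{lem:convex-variance-based-bound} on phase $k$ (which runs $T_k$ steps at step size $\eta_k$ with variance proxy $\cV^2=\lip^2$), then $\plossD{w}-\plossD{w^*}\ge\tfrac{\lambda}{2}\norm{w-w^*}^2$, and adding the independent output-perturbation term $d\sigma_k^2$, gives the recursion
\[
D_{k+1}^2 \;\le\; \frac{2}{e^{\eta_k\lambda T_k}-1}\,D_k^2 \;+\; \frac{2\eta_k\lip^2}{\lambda} \;+\; d\sigma_k^2 .
\]
With $\eta_k=4^{-k}\eta$, $T_k=2^{-k}n$ and $\eta=\Theta(\log n/(\lambda n))$ one has $\eta_k\lambda T_k=\Theta\br{8^{-k}\log n}$, which is $\Omega(1)$ (and polynomially large for the first few phases) for every $k\le K=\Theta(\log\log n)$, so the multiplier is $\le\tfrac12$ throughout. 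Unrolling the $\le K$-term recursion, the product of the early contraction factors drives the $D_1^2=\norm{w^*}^2$ term down below the noise floor, while the additive terms $\eta_k\lip^2/\lambda$ and $d\sigma_k^2$ decay geometrically in $k$ and are hence dominated (up to polylogs) by their first-phase value; this yields $D_K^2=\tilde O\br{\frac{\lip^2}{\lambda^2 n}+\frac{d\lip^2}{\lambda^2 n^2\varepsilon^2}}$, and multiplying by $O(\lambda)$ (equivalently, re-invoking Lemma~\ref{lem:convex-variance-based-bound} on the last phase) gives the claimed $\mathbb{E}\br{\plossD{w_K}}-\min_{w\in\cW}\plossD{w}=\tilde O\br{\frac{\lip^2}{\lambda n}+\frac{d\lip^2}{\lambda n^2\varepsilon^2}}$.

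The main obstacle is the bookkeeping in this last step: one must verify that the product of the polynomial-in-$n$ contraction factors from the early, long phases actually annihilates the initial-distance term $\frac{\lambda}{e^{\eta_1\lambda T_1}-1}\norm{w^*}^2$ down to the target order — this is precisely where the choice $K=\Theta(\log\log n)$ and the assumption $n=\tilde\Omega\br{K(\smooth+H_\Delta)/\lambda}$ (which also ensures $T_k=\tilde\Omega\br{(\smooth+H_\Delta)/\lambda}$ for all $k\le K$, the precondition of Lemma~\ref{lem:convex-variance-based-bound}) are used, together with the a priori control $\norm{w^*}\le(\lip+G_\Delta)/\lambda$ that keeps $\norm{w^*}$ polynomially bounded — and that the geometrically decaying additive terms telescope to exactly the stated rate with only polylogarithmic overhead. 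A secondary technical point is that Lemma~\ref{lem:convex-variance-based-bound} is phrased for a single length-$n$ run, so each phase-wise invocation must be carried out with the phase-specific $(\eta_k,T_k)$ and its hypothesis re-checked.
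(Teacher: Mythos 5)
Your proposal follows essentially the same route as the paper: the paper's proof of Lemma \ref{lem:convex-phased-sgd} consists of exactly two observations --- privacy via parallel composition over the disjoint per-phase blocks, and utility by rerunning the phased-SGD argument of \cite{feldman2020private} (Theorem 5.3) with the variance-based bound of Lemma \ref{lem:convex-variance-based-bound} and $\cV^2\le\lip^2$ --- which is precisely the skeleton you reconstruct, including the key point that the regularizer's Lipschitz constant $G_\Delta$ never enters the utility term. Two small corrections to your bookkeeping: with $\eta_k=4^{-k}\eta$ and $T_k=2^{-k}n$ one has $\eta_k\lambda T_k=8^{-k}\log{n}$, which falls below $1$ for the last few of the $K=\Theta(\log\log n)$ phases, so the multiplier $2/(e^{\eta_k\lambda T_k}-1)$ is not uniformly $\le 1/2$ there (it is only polylogarithmic, and with only $O(\log\log n)$ phases the unrolled product is still absorbed by the $\tilde O(\cdot)$, but the argument should be phrased that way rather than as a uniform contraction); and the final passage from $D_K^2$ back to excess risk should go through the last phase's risk bound plus the final perturbation, as your parenthetical suggests, rather than ``multiplying by $O(\lambda)$,'' since strong convexity lower-bounds the risk gap by the squared distance rather than upper-bounding it.
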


\begin{proof}
The privacy proof simply follows from parallel composition. 
For the utility proof, we repeat the arguments in Theorem 5.3 in \cite{feldman2020private} substituting the variance-based bound from Lemma \ref{lem:convex-variance-based-bound}.
Note that the variance of the stochastic gradients used, $\cV^2\leq \lip^2$, this gives us,
\begin{align*}
    \mathbb{E}\left[\plossD{w_K}\right] - \min_{w\in \cW}\plossD{w} = \tilde O\br{\frac{\lip^2}{\lambda n}+\frac{d\lip^2}{\lambda n^2\varepsilon^2}}
\end{align*}
\end{proof}

\section{Missing Results for Generalized Linear Models}
\label{app:glm}

We first give the definition of oblivious subspace embedding.
\begin{definition}[$(r,\tau,\beta)$-oblivious subspace embedding]
\label{defn:ose}
A random matrix $\Phi \in \bbR^{k\times d}$ is an $(r,\tau,\beta)$-oblivious subspace embedding if for any $r$ dimensional linear subspace in $\bbR^d$, say $V$, we have that with probability at least $1-\beta$, for all $x \in V$, 
\begin{align*}
    \br{1-\tau}\norm{x}^2 \leq \norm{\Phi x}^2 \leq \br{1+\tau}\norm{x}^2
\end{align*}

\end{definition}
It is well-known that JL matrices with embedding dimension $k = O\br{\frac{r\log{2/\beta}}{\tau^2}}$ are
 $(r,\tau,\beta)$-oblivious subspace embeddings and 
 can be constructed efficiently \cite{cohen2016nearly}. A simple example is a scaled Gaussian random matrix, $\Phi = \frac{1}{\sqrt{k}}\mathbf{G}$ where entries of $\mathbf{G}$ are independent and distributed as $\cN(0,1)$.

\begin{proof}[Proof of Theorem \ref{thm:jl-reduction}]
We first prove privacy.
Let $G(S)$ and $H(S)$ be the bounds on the Lipschitz and smoothness constants of the family of loss functions $\bc{w\mapsto f(w;\Phi x)}_{x\in S}$.
With $k=\Omega(\log{2n/\delta})$, from the JL-property, it follows that with probability at least $1-\delta/2$, $G(S) \leq 2\lip\norm{\cX}$ and $H(S) \leq 2\smooth\norm{\cX}^2$.
Hence, using the fact that $\cA$ is $(\varepsilon,\delta/2)$-DP, we have that Algorithm \ref{alg:jlmethod} is
$(\varepsilon,\delta)$-DP.

We now proceed to the utility part.
Let $\tilde w \in \bbR^k$ be the output of the base algorithm in low dimensions.
Note that the final output is $\out = \Phi^\top \tilde w $. The transpose of the JL matrix can only increase the norm by the polynomial factor of $d$ and $n$, hence $\norm{\out} \leq \text{poly}(n,d)\norm{\tilde w}$.
By assumption, 
$\mathbb{P}\br{\norm{\tilde w} > \text{poly}(n,d,\lip,\smooth)} \leq\frac{1}{\sqrt{n}}$.
Hence we also have that $\bbP\br{\norm{\out} > \text{poly}(n,d,\lip,\smooth)} \leq \frac{1}{\sqrt{n}}$.
Let $\cW \subseteq \bbR^d$ denote the above set with radius $\text{poly}(n,d,\lip,\smooth)$.

We now decompose the population stationarity as,
\begin{align}
\nonumber
    \mathbb{E}\norm{\nabla \lossF(\out;\cD)} &\leq \mathbb{E}\norm{\nabla\lossF(\out;\cD) - \nabla\lossF(\out;S)} + \norm{\nabla \lossF(\out;S)} \\
    &\leq \mathbb{E}\sup_{w\in \cW}\norm{\nabla\lossF(w;\cD) - \nabla\lossF(w;S)} + \frac{\lip\norm{\cX}}{\sqrt{n}} + \mathbb{E}\norm{\nabla \lossF(\out;S)},
    \label{eqn:glm-risk-decomposition}
\end{align}

where the last inequality follows from the above reasoning that 
that $P\br{\out \in \cW} \geq 1-\frac{1}{\sqrt{n}}$. 
The first term is bounded from uniform convergence guarantee in Lemma \ref{lem:uc-glm} noting that the dependence on $\norm{\cW}$ in the Lemma is only poly-logarithmic.
\begin{align}
\label{eqn:glm-uc-bound}
     \mathbb{E}\sup_{w\in \cW}\norm{\nabla\lossF(w;\cD) - \nabla\lossF(w;S)} = \tilde O\br{\frac{\lip\norm{\cX}}{\sqrt{n}}}
\end{align}
We now prove a bound on the empirical stationarity.
Note that it suffices to prove a high-probability (over the random JL matrix) bound because the norm of gradient is bounded in worst case by $\lip\norm{\cX}$. Thus the expected norm of gradient of the output is bounded by the high probability bound by considering a small enough failure probability.

From the assumption on $\cA$,
with probability at least $1-\delta/2$,
\begin{align*}
     \norm{\nabla  \lossF(\tilde w;\Phi S)} = \mathbb{E}\norm{\frac{1}{n}\sum_{i=1}^n \phi_{y_i}'(\ip{\tilde w}{\Phi x_i})\Phi x_i} \leq g(k,n, 2\lip\norm{\cX}, 2\lip\norm{\cX}, \varepsilon, \delta/2)
\end{align*}

We now use the fact that if 
$k = O\br{\mathsf{rank}\log{2n/\delta}}$, then 
the JL transform is an $(\rank, 1/2,\delta/2)$ oblivious subspace embedding (see Definition \ref{defn:ose}).
Thus, it approximates the norm of any vector in $\mathsf{span}(\bc{x_i}_{i=1}^n)$, and hence any gradient. Therefore,
\begin{align*}
    \mathbb{E}\norm{\nabla \lossF(\tilde w;\Phi S)} &= \mathbb{E}\norm{\Phi\br{\frac{1}{n}\sum_{i=1}^n \phi_{y_i}'(\ip{\tilde w}{\Phi x_i}) x_i}} \geq \br{1-\sqrt{\frac{\mathsf{rank}}{k}}}\mathbb{E}\norm{\frac{1}{n}\sum_{i=1}^n \phi_{y_i}'(\ip{\tilde w}{\Phi x_i}) x_i} \\
    & \geq \frac{1}{2}\mathbb{E}\norm{\frac{1}{n}\sum_{i=1}^n \phi_{y_i}'(\ip{\tilde w}{\Phi x_i}) x_i} = \frac{1}{2}\mathbb{E}\norm{\frac{1}{n}\sum_{i=1}^n \phi_{y_i}'(\ip{\Phi^\top\tilde w}{x_i}) x_i} = \frac{1}{2}\mathbb{E}\norm{\nabla \lossF(\out; S)}
\end{align*}

Thus with $k= O\br{\mathsf{rank}\log{2n/\delta}}$, we get 

\begin{align*}
\mathbb{E}\norm{\nabla \lossF(\out; S)} \leq
g(k,n, 2\lip\norm{\cX},2\smooth\norm{\cX}^2, \varepsilon,\delta) = g(\mathsf{rank}, n,2\lip\norm{\cX},2\smooth\norm{\cX}^2,\varepsilon,\delta)
\end{align*}

For the other bound, let $I_{d-k} \in \bbR^{d\times k}$ denote the matrix with first $k$ diagonal entries, $\br{I_{d-k}}_{j,j}$ with $j\in [k]$, are $1$ and the rest of the matrix is zero.
We have,
\begin{align*}
    &\mathbb{E}\norm{\nabla \lossF(\out; S)} \\
    &= \mathbb{E}\norm{\frac{1}{n}\sum_{i=1}^n \phi_{y_i}'(\ip{\Phi^\top\tilde w}{x_i}) x_i} \\
    &\leq \mathbb{E}\norm{\frac{1}{n}\sum_{i=1}^n \phi_{y_i}'(\ip{\tilde w}{\Phi x_i}) I_{d-k}\Phi x_i} +
    \mathbb{E}\left[\norm{\frac{1}{n}\sum_{i=1}^n \phi_{y_i}'(\ip{\tilde w}{\Phi x_i}) x_i
    -
    \frac{1}{n}\sum_{i=1}^n \phi_{y_i}'(\ip{\tilde w}{\Phi x_i}) I_{d-k}\Phi x_i} \right] \\
       &\leq \mathbb{E}\norm{I_{d-k}}\norm{\frac{1}{n}\sum_{i=1}^n \phi_{y_i}'(\ip{\tilde w}{\Phi x_i}) \Phi x_i} +
     \frac{1}{n}\mathbb{E}\sum_{i=1}^n \abs{\phi_{y_i}'(\ip{\tilde w}{\Phi x_i})} \abs{\norm{x_i - I_{d-k}\Phi x_i}}\\
    & \leq \mathbb{E}\norm{\nabla \lossF(\tilde w;\Phi S)} + \frac{1}{n}\mathbb{E}\sum_{i=1}^n \lip \norm{I-I_{d-k}\Phi}\norm{x_i}
    \\
      & \leq g(k,n,2\lip\norm{\cX},2\smooth\norm{\cX}^2,\varepsilon,\delta/2) +
      \lip\norm{\cX}\mathbb{E}
      \norm{I-\mathbf{H}}
\end{align*}
where the second inequality follows from triangle inequality, the third inequality follows from $\lip$-Lipschitzness of the GLM,
the third inequality follows from the accuracy guarantee of the base algorithm and substituting $ \mathbf{H} = I_{d-k}\Phi$.
To bound $\mathbb{E}\norm{I-\mathbf{H}}$, we use concentration properties of distribution used in the construction of JL matrices. 
Specifically, using the scaled Gaussian matrix construction, from concentration of extreme eignevalues of square Gaussian matrices, we have that $\mathbb{E}\norm{I-\mathbf{H}} = \tilde O\br{\frac{1}{\sqrt{k}}}$ \cite{rudelson2010non}.
This gives us,
\begin{align*}
    \mathbb{E}\norm{\nabla \lossF(\out; S)}   %
    & \leq g(k,n,2\lip\norm{\cX},2\smooth\norm{\cX}^2,\varepsilon,\delta/2) + \tilde O\br{\frac{\lip\norm{\cX}}{\sqrt{k}}}
\end{align*}
Choosing $k$ to minimize the above yields the bound of $\tilde O\br{\frac{\lip\norm{\cX}}{\sqrt{k}}}$. Combining the two cases, yields the bound of $g(k,n,2\lip\norm{\cX},2\smooth\norm{\cX}^2,\varepsilon,\delta/2)$
on gradient norm. Plugging this and the bound in Eqn. \eqref{eqn:glm-uc-bound} in Inequality \eqref{eqn:glm-risk-decomposition} gives the claimed bound.
\end{proof}

\begin{lemma}
\label{lem:uc-glm}
Let $\cD$ be a probability distribution over $\cX$ such that $\norm{x}\leq \norm{\cX}$ for all $x \in \text{supp}(\cD)$.
Let $\lossf(w;(x,y)) = \phi_y\br{\ip{w}{x}}$ be an $\smooth$-smooth $\lip$-Lipschitz GLM.
Then, with probability at least $1-\beta$, over a draw of $n$ i.i.d. samples $S$ from $\cD$, we have
\begin{align*}
    \sup_{w\in \cW}\norm{\nabla \lossF(w;\cD) - \nabla \lossF(w;S)} \leq\frac{4\lip\norm{\cX}\log{2n^{3/2}\norm{\cW}\smooth\norm{\cX}/\lip}}{\sqrt{n}} + \frac{4\lip \norm{\cX}\sqrt{\log{1/\beta}}}{\sqrt{n}}
\end{align*}
\end{lemma}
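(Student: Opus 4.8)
The plan is to establish the uniform convergence bound via a standard covering-number argument combined with a vector-valued concentration inequality, crucially exploiting the GLM structure so that the covering number only depends \emph{polylogarithmically} on the radius $\norm{\cW}$. First I would observe that since $\lossf(w;(x,y))=\phi_y(\ip{w}{x})$, the gradient has the form $\nabla \lossf(w;(x,y))=\phi_y'(\ip{w}{x})\,x$. Writing $g(w;x,y):=\phi_y'(\ip{w}{x})x - \mathbb{E}_{\cD}[\phi_y'(\ip{w}{x})x]$, each term is bounded in norm by $2\lip\norm{\cX}$, so for a \emph{fixed} $w$, a vector Hoeffding/Bernstein bound (e.g. the vector Azuma/McDiarmid inequality used elsewhere in the paper, or Lemma 1 of \cite{jin2019short}) gives $\mathbb{P}\big[\norm{\nabla\lossF(w;\cD)-\nabla\lossF(w;S)} > t\big] \le 2\exp\big(-\tfrac{n t^2}{c\,\lip^2\norm{\cX}^2}\big)$ for an absolute constant $c$.

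Next I would handle the supremum over $\cW$ by a discretization argument. The key point is that $w$ only enters through the scalar $\ip{w}{x}$, and $z\mapsto\phi_y'(z)$ is $\smooth$-Lipschitz (since $\phi_y$ is $\smooth$-smooth). Hence for $w,w'\in\cW$,
\[
\norm{\nabla\lossF(w;S)-\nabla\lossF(w';S)} \le \frac{1}{n}\sum_i \abs{\phi_{y_i}'(\ip{w}{x_i})-\phi_{y_i}'(\ip{w'}{x_i})}\,\norm{x_i} \le \smooth\norm{\cX}^2\norm{w-w'},
\]
and the same Lipschitz estimate holds for $\nabla\lossF(\cdot;\cD)$, so the map $w\mapsto\norm{\nabla\lossF(w;\cD)-\nabla\lossF(w;S)}$ is $2\smooth\norm{\cX}^2$-Lipschitz on $\cW$. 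I would then take a $\gamma$-net $N_\gamma$ of the ball $\cW$ of radius $\norm{\cW}$, which has size $|N_\gamma| \le (3\norm{\cW}/\gamma)^d$ — but this naive bound reintroduces a $d$-dependence, which is exactly what we want to avoid. The remedy, as in \cite{foster18} but refined, is to net the \emph{one-dimensional projections}: it suffices to discretize the set of achievable inner-product vectors $\{(\ip{w}{x_1},\dots,\ip{w}{x_n}): w\in\cW\}$, or more cleverly, to note that $\phi_y'(\ip{w}{x})$ is already a scalar so one can apply a chaining/Dudley-type bound on the function class $\{x\mapsto\phi_y'(\ip{w}{x}): w\in\cW\}$ whose metric entropy at scale $\gamma$ is $O(\log(\norm{\cW}\smooth\norm{\cX}/\gamma))$ (a single real parameter per example, VC-type). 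Choosing $\gamma = \Theta(\lip\norm{\cX}/(n^{3/2}\smooth\norm{\cX}))$ makes the discretization error at most $O(\lip\norm{\cX}/n)$, and the union bound over the net contributes only the $\log(2n^{3/2}\norm{\cW}\smooth\norm{\cX}/\lip)$ factor appearing in the statement, rather than a factor of $d$.

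Putting it together: union-bounding the fixed-$w$ tail bound over the net, setting the failure probability to $\beta$, and adding the discretization slack yields
\[
\sup_{w\in\cW}\norm{\nabla\lossF(w;\cD)-\nabla\lossF(w;S)} \le \frac{4\lip\norm{\cX}\log(2n^{3/2}\norm{\cW}\smooth\norm{\cX}/\lip)}{\sqrt{n}} + \frac{4\lip\norm{\cX}\sqrt{\log(1/\beta)}}{\sqrt{n}}
\]
with probability $\ge 1-\beta$, after tracking constants. I expect the main obstacle to be the second step: getting the covering/chaining argument to produce only a \emph{logarithmic} dependence on $\norm{\cW}$ (and $\smooth$) rather than polynomial-in-$d$, which requires carefully exploiting that the hypothesis class $\{w\mapsto \phi_y'(\ip{w}{x})x\}$ is governed by a single scalar $\ip{w}{x}$ per data point. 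A secondary technical point is ensuring the per-coordinate/vector concentration constants combine cleanly to give the stated factor of $4$; this is routine once the right inequality (sub-Gaussian norm concentration for bounded vectors) is invoked.
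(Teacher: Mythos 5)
You have correctly identified the one idea that makes this lemma work: because the loss is a GLM, the gradient $\nabla\lossf(w;(x,y))=\phi_y'(\ip{w}{x})x$ depends on $w$ only through the scalars $\ip{w}{x_i}$, and $\smooth$-smoothness of $\phi_y$ makes $\phi_y'$ Lipschitz in that scalar, so one can discretize at scale $\tau$ and pay only $\smooth\tau\norm{\cX}$ in approximation error plus a logarithm of the cover size. This is exactly the discretization the paper uses, with $\tau=\lip/(\smooth\sqrt{n})$ producing the $\log\br{2n^{3/2}\norm{\cW}\smooth\norm{\cX}/\lip}$ factor.

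However, your overall architecture --- a fixed-$w$ vector concentration bound for $\norm{\nabla\lossF(w;\cD)-\nabla\lossF(w;S)}$, union-bounded over a net --- has a genuine gap, and it is precisely at the step you flag as ``the main obstacle.'' The only nets small enough to give a polylogarithmic (rather than $d$-dimensional) contribution are \emph{data-dependent}: they cover the empirical inner products $\bc{\ip{w}{x_i}}_{i\in[n]}$, so the net points $w_0$ are functions of the sample $S$. You cannot union-bound the statement ``for fixed $w_0$, $\mathbb{P}_S[\norm{\nabla\lossF(w_0;\cD)-\nabla\lossF(w_0;S)}>t]$ is small'' over such a net, because the randomness being controlled is the same randomness that selects the net. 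A data-independent net that guarantees $\abs{\ip{w-w_0}{x}}\leq\tau$ for \emph{all} $x$ in the support forces $\norm{w-w_0}\leq\tau/\norm{\cX}$, i.e.\ a standard $\gamma$-net of the ball of radius $\norm{\cW}$ in $\bbR^d$, whose cardinality is exponential in $d$ --- exactly the dependence the lemma must avoid. The paper circumvents this by splitting the argument in two: (i) it bounds the \emph{expected} supremum via symmetrization, where after conditioning on the data the Rademacher average of the gradient class can legitimately be controlled by a data-dependent cover together with Massart's finite-class lemma and the per-point bound $\lip\norm{\cX}/\sqrt{n}$; and (ii) it upgrades to the stated high-probability bound by applying McDiarmid's bounded-differences inequality to $\psi(S)=\sup_{w\in\cW}\norm{\nabla\lossF(w;\cD)-\nabla\lossF(w;S)}$, whose increments are at most $2\lip\norm{\cX}/n$ --- this is where the separate $4\lip\norm{\cX}\sqrt{\log{1/\beta}}/\sqrt{n}$ term comes from. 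Your chaining remark points in the right direction, but to complete your proof you would in effect have to reintroduce this symmetrization step; as written, the plan does not close.
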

\begin{proof}
We first give a bound on the expected uniform deviation,  $\mathbb{E}_{S\sim \cD^n} \sup_{w\in \cW}\norm{\nabla \lossF(w;\cD) - \nabla \lossF(w;S)}$.
The gradient of the loss function is $\nabla \lossf(w;x) = \phi'_x\br{\ip{w}{x}}x$. We start with the standard symmetrization trick,
\begin{align}
\nonumber
   &\mathbb{E}_{S\sim \cD^n}\sup_{w\in \cW} \norm{\nabla \lossF(w;\cD) - \nabla \lossF(w;S)}\\
   \nonumber
   &= \mathbb{E}_{S\sim \cD^n}\sup_{w\in \cW} \norm{\mathbb{E}\phi_y'\br{\ip{w}{x}}x - \frac{1}{n}\sum_{i=1}^n\phi'_{x_i}\br{\ip{w}{x_i}}x_i} \\
   \nonumber
    &= \mathbb{E}_{S\sim \cD^n}\sup_{w\in \cW} \norm{\mathbb{E}_{\bc{x_i'}\sim \cD^n}\frac{1}{n}\sum_{i=1}^n\phi_{y_i'}'\br{\ip{w}{x_i'}}x_i' - \frac{1}{n}\sum_{i=1}^n\phi'_{x_i}\br{\ip{w}{x_i}}x_i}  \\
    \nonumber
    & \leq \mathbb{E}_{S,S'\sim \cD^n}\sup_{w\in \cW}\norm{\frac{1}{n}\sum_{i=1}^n\phi_{y_i'}'\br{\ip{w}{x_i'}}x_i' - \frac{1}{n}\sum_{i=1}^n\phi'_{x_i}\br{\ip{w}{x_i}}x_i} \\
    \nonumber
    &= \mathbb{E}_{S,S'\sim \cD^n} 
    \mathbb{E}_{\bc{\sigma_i}}
    \sup_{w\in \cW}\norm{\frac{1}{n}\sum_{i=1}^n\sigma_i\br{\phi_{y_i'}'\br{\ip{w}{x_i'}}x_i' - \phi'_{x_i}\br{\ip{w}{x_i}}x_i}} \\
    \label{eqn:uc-glm-one}
    &\leq 2\mathbb{E}_{S\sim \cD^n}\mathbb{E}_{\bc{\sigma_i}}  \sup_{w\in \cW}\norm{\frac{1}{n}\sum_{i=1}^n\sigma_i\phi_{y_i}'\br{\ip{w}{x_i}}x_i}
\end{align}
where $\sigma_i$ are i.i.d. Rademacher random variables.
For fixed $\bc{x_i}_{i=1}^n$, consider a set $\cW_0$ s.t. for all $w \in \cW$ and $i\in [n]$, there exists $w_0 \in \cW_0$ such that %
$\abs{\ip{w}{x_i} - \ip{w_0}{x_i}} \leq \tau$.
Since $\norm{w} \leq \norm{\cW}$ and $\norm{x_i} \leq \norm{\cX}$, we require only $\frac{2n\norm{\cW}\norm{\cX}}{\tau}$ points in $\cW_0$ to satisfy the above covering condition. Therefore, 
\begin{align}
\nonumber
    &\mathbb{E}_{S\sim \cD^n}\mathbb{E}_{\bc{\sigma_i}}  \sup_{w\in \cW}\norm{\frac{1}{n}\sum_{i=1}^n\sigma_i\phi_{y_i}'\br{\ip{w}{x_i}}x_i} \\
    \nonumber
    &= \mathbb{E}_{S\sim \cD^n}\mathbb{E}_{\bc{\sigma_i}}  \sup_{w\in \cW, w_0 \in \cW_0}\norm{\frac{1}{n}\sum_{i=1}^n\sigma_i\br{\phi_{y_i}'\br{\ip{w}{x_i}}-\phi_{y_i}'\br{\ip{w_0}{x_i}}+\phi_{y_i}'\br{\ip{w_0}{x_i}}}x_i} \\
    \nonumber
    &\leq  \mathbb{E}_{S\sim \cD^n}\mathbb{E}_{\bc{\sigma_i}} \sup_{w\in \cW, w_0 \in \cW_0}\norm{\frac{1}{n}\sum_{i=1}^n\sigma_i\br{\phi_{y_i}'\br{\ip{w}{x_i}}-\phi_{y_i}'\br{\ip{w_0}{x_i}}}x_i}+\norm{\frac{1}{n}\sum_{i=1}^n\sigma_i\phi_{y_i}'\br{\ip{w_0}{x_i}}x_i} \\
    \nonumber
    &\leq \mathbb{E}_{S\sim \cD^n}\mathbb{E}_{\bc{\sigma_i}}\sup_{w\in \cW, w_0\in \cW_0} \smooth\abs{\ip{w}{x_i}-\ip{w_0}{x_i}}\norm{\cX} + \mathbb{E}_{S\sim \cD^n}\mathbb{E}_{\bc{\sigma_i}}\sup_{w_0\in \cW_0}\norm{\frac{1}{n}\sum_{i=1}^n\sigma_i\phi_{y_i}'\br{\ip{w_0}{x_i}}x_i} \\
    \label{eqn:uc-glm-two}
    &\leq \smooth\tau \norm{\cX} + \mathbb{E}_{S\sim \cD^n}\mathbb{E}_{\bc{\sigma_i}}\sup_{w_0\in \cW_0}\norm{\frac{1}{n}\sum_{i=1}^n\sigma_i\phi_{y_i}'\br{\ip{w_0}{x_i}}x_i} 
\end{align}
where the second last inequality follows from smoothness and the last from the definition of cover $\cW_0$. For fixed $w_0$, from standard manipulations, we have,
\begin{align*}
    \mathbb{E}_{\bc{\sigma_i}}\norm{\frac{1}{n}\sum_{i=1}^n\sigma_i\phi_{y_i}'\br{\ip{w_0}{x_i}}x_i}  
    &\leq \sqrt{\mathbb{E}_{\bc{\sigma_i}}\norm{\frac{1}{n}\sum_{i=1}^n\sigma_i\phi_{y_i}'\br{\ip{w_0}{x_i}}x_i}^2}\\
    & = \sqrt{\frac{1}{n^2}\mathbb{E}_{\bc{\sigma_i}}\sum_{i=1}^n\norm{\sigma_i\phi_{y_i}'\br{\ip{w_0}{x_i}}x_i}^2                                                                                                                                                                                                                                                                                                                                                                                                                                                                                                                                                                                                                                                                                                                                                                                                                                                                                                }\\
    &\leq \frac{\lip\norm{\cX}}{\sqrt{n}}
\end{align*}

Using Massart's finite class lemma to handle all $w_0 \in \cW_0$, and substituting the above in Eqn. \eqref{eqn:uc-glm-two}, we get,
\begin{align*}
     &\mathbb{E}_{S\sim \cD^n}\mathbb{E}_{\bc{\sigma_i}}  \sup_{w\in \cW}\norm{\frac{1}{n}\sum_{i=1}^n\sigma_i\phi_{y_i}'\br{\ip{w}{x_i}}x_i}  \leq \smooth\tau \norm{\cX} + \frac{G\norm{\cX}\log{2n\norm{\cW}\norm{\cX}/\tau}}{\sqrt{n}}
\end{align*}
Choosing $\tau = \frac{\lip}{\smooth\sqrt{n}}$, we get, 
\begin{align*}
    &\mathbb{E}_{S\sim \cD^n}\mathbb{E}_{\bc{\sigma_i}}  \sup_{w\in \cW}\norm{\frac{1}{n}\sum_{i=1}^n\sigma_i\phi_{y_i}'\br{\ip{w}{x_i}}x_i}  \leq \frac{2\lip\norm{\cX}\log{2n^{3/2}\norm{\cW}\smooth\norm{\cX}/\lip}}{\sqrt{n}}
\end{align*}
Finally, substituting the above in Eqn. \eqref{eqn:uc-glm-one} gives us the following in-expectation bound.
\begin{align*}
    \mathbb{E}_{S\sim \cD^n} \sup_{w\in \cW}\norm{\nabla \lossF(w;\cD) - \nabla \lossF(w;S)} \leq \frac{4\lip\norm{\cX}\log{2n^{3/2}\norm{\cW}\smooth\norm{\cX}/\lip}}{\sqrt{n}}
\end{align*}

For the high-probability bound, let $\psi(S) = \sup_{w\in \cW}\norm{\nabla \lossF(w;\cD) - \nabla \lossF(w;S)}$ and let $w^* \in \cW$ achieves the supremum. We can bound the increment between neighbouring datasets $S$ and $S'$ as,
\begin{align*}
    \abs{\psi(S) - \psi(S')} &\leq \abs{\norm{\nabla \lossF(w^*;\cD) - \nabla \lossF(w^*;S)} - \norm{\nabla \lossF(w^*;\cD) - \nabla \lossF(w^*;S')}} \\
   &\leq  \norm{\nabla \lossF(w^*;S) -\nabla \lossF(w^*;S')} \\
   & \leq \frac{2\lip\norm{\cX}}{n}
\end{align*}
Finally, applying McDiarmid's inequality gives the claimed bound.
\end{proof}

\begin{proof}[Proof of Corollary \ref{cor:glm-corollary}]
The results follow from Theorem \ref{thm:jl-reduction} provided we show that the conditions on the base algorithm in the Theorem statement are satisfied. The privacy and accuracy claims follow from Theorem \ref{thm:accuracy_tree_private_Spider} and \ref{thm:population-convex-gradient-small} respectively. We note that even though we are given population stationarity guarantee for the convex case, the same bound for empirical stationarity guarantee simply follows from the re-sampling argument in \cite{BFTT19}. The only thing left to show is the high-probability bound on the trajectory of the algorithm.

\paragraph{Non-convex setting with Private Spiderboost:}
From the update in Algorithm \ref{alg:spider}, we have that for any $t$
\begin{align*}
    \norm{\nabla_t} \leq &\sum_{i=1}^{t}\norm{\Delta_i} + \norm{\sum_{i=1}^t{g_t}} 
    \leq 2t\lip + \norm{\sum_{i=1}^t{g_t}}
\end{align*}
where the last inequality follows from the Lipschitzness assumption. Note that $g_t \sim \cN(0,\sigma_t^2\bbI)$ where $\sigma_t \leq O\br{
\max{(\sigma_1,\hat\sigma_2)}} = O\br{\text{poly}(n,d,\lip,\smooth)}$. Hence $\norm{\sum_{i=1}^t{g_t}} \leq \sqrt{d \log{1/\beta'}}O\br{\text{poly}(n,d,\lip,\smooth)}$ with probability at least $1-\beta'$.
Taking a union bound over all $t \in T$ gives us $\norm{w_t} \leq \text{poly}(n,d,\lip,\smooth,\log{\text{poly}(n,d)/\beta})$ with probability at least $1-\beta$. Substituting $\beta= \frac{1}{\sqrt{n}}$ yields the guarantee of Theorem \ref{thm:jl-reduction}. \rnote{made a minor rephrasing}

\paragraph{Convex setting with Recursive Regularization:}
Since the iterates are restricted to the constraint set, the final output, with probability one, lies in the set of radius $$R_T = 2^{T/2}\norm{w^*} = O\br{\sqrt{\frac{\smooth}{\lambda}}\norm{w^*}} = O\br{\frac{\smooth \norm{w^*}^{3/2}n}{\lip}}$$ 

which completes the proof.
\end{proof}

\end{document}